\newcommand{\xmark}{\ding{55}}%
\newcommand\balpha{\boldsymbol{\alpha}}
\newcommand{\defeq}{\vcentcolon=}
\def\R{\mathbb{R}}
\def\x{\mathbf{x}}
\def\m{\mathbf{m}}
\def\balpha{\boldsymbol{\alpha}}
\def\y{\mathbf{y}}
\def\bt{\mathbf{t}}
\def\btheta{\boldsymbol{\theta}}
\def\u{\mathbf{u}}
\def\v{\mathbf{v}}
\def\U{\mathbf{U}}
\def\X{\mathbf{X}}
\def\Y{\mathbf{Y}}
\def\Y{\mathbf{Y}}
\def\K{\mathbf{K}}
\def\O{\mathcal{O}}
\newtheorem{theorem}{Theorem}[section]
\newtheorem{proposition}[theorem]{Proposition}
\newtheorem{lemma}[theorem]{Lemma}
\newtheorem{corollary}[theorem]{Corollary}
\newtheorem{remark}[theorem]{Remark}
\begin{document}
%\maketitle

\begin{center}
{\Large Gaussian Processes on Distributions based on Regularized Optimal Transport} \\
\vspace{0.5cm} 
Fran\c cois Bachoc$^{1,2}$\\
 Louis B\'ethune$^{1,3}$\\
Alberto Gonzalez-Sanz$^{1,2}$\\
Jean-Michel Loubes$^{1,2}$ 
\\
\vspace{0.2cm} 
$^1$ Université Paul Sabatier \\
$^2$ Institut de Mathématiques de Toulouse \\
$^3$ Institut de Recherche en Informatique de Toulouse
\end{center}

%%%%%%%%%%%%%%%%%%%%%%%%%%%%%%%%%%%%%%%%%%%%%%%%%%%
%%%%%%%%%%%%%%%%%%%%%%%%%%%%%%%%%%%%%%%%%%%%%%%%%%%
%%%%%%%%%%%%%%%%%%%%%%%%%%%%%%%%%%%%%%%%%%%%%%%%%%%
\begin{abstract}
    We present a novel kernel  over the space of probability measures based on the dual formulation of optimal regularized transport. We propose an Hilbertian embedding of the space of probabilities using their Sinkhorn potentials,  which are solutions of the dual entropic relaxed optimal transport between the probabilities and a reference measure $\mathcal{U}$. We prove that this construction enables to obtain a valid kernel, by using the Hilbert norms. We prove that the kernel enjoys theoretical properties such as   universality and some invariances, while still being computationally feasible. Moreover we provide theoretical guarantees on the behaviour of a Gaussian process based on this kernel. The empirical performances are compared with other traditional choices of kernels for processes indexed on distributions.
\end{abstract}

%%%%%%%%%%%%%%%%%%%%%%%%%%%%%%%%%%%%%%%%%%%%%%%%%%%
%%%%%%%%%%%%%%%%%%%%%%%%%%%%%%%%%%%%%%%%%%%%%%%%%%%
%%%%%%%%%%%%%%%%%%%%%%%%%%%%%%%%%%%%%%%%%%%%%%%%%%%
%A Gaussian process is a stochastic process specified by its
%mean and covariance function

\section{Introduction}

%\paragraph{Context: distribution inputs and regularized optimal transport}
\textbf{Context: Gaussian processes and kernels indexed by distributions.}
Gaussian process (GP) models are widely used in fields such as geostatistics, computer code experiments and machine learning. We refer to \cite{rasmussen2006gaussian} for general references.  They consist in modeling an unknown function as a realization of a GP, and hence correspond to a functional Bayesian framework. For instance, in computer experiments, the input points of the function are simulation parameters and the output values are quantities of interest obtained from the simulations. GPs rely on the definition of a covariance function that characterises the correlations between values of the process at different observation points.  \\
In this paper we consider GPs indexed by distributions. Learning functions defined on distributions has gained a special interest over the last decade in the machine learning literature,
see for instance \cite{poczos2013distribution}. Distribution-valued inputs are commonly used to describe complex objects such as images, shapes or media as described for instance in \cite{glaunes2004diffeomorphic},  \cite{muandet2012learning}, \cite{ginsbourger2016design} or \cite{szabo2016learning}.
The construction of a kernel for these inputs requires a notion of similitude between the probability distributions. Many methods have been considered to provide kernels for distributions, from the mere extraction  of parametric features, such as the mean or higher moments, to the well used Maximum Mean Discrepancy method \cite{gretton2012kernel}. \\  
\vskip .1in
\textbf{Context: optimal transport for kernels.} On this topic, optimal transport (OT) has imposed itself has a prominent method for comparing or analyzing distributions. Previous works in this direction are, in one dimension, \cite{bachoc2017gaussian} and \cite{thi2021distribution}, where a kernel directly based on the quadratic difference between the quantiles, which yields (on the real line) the quadratic Wasserstein distance, is proposed. In several dimensions, a quite  natural generalisation uses the quadratic norm between the multidimensional transport maps between the probabilities and a reference measure, see \cite{bachoc2020gaussian} and \cite{moosmuller2020linear}. Even though, with a good choice of the reference measure, the generated kernels are translation invariant (see \cite{Hallin2020DistributionAQ,DELBARRIO2020104671,Barrio2022NonparametricMC}), for machine learning purposes the computation of the transport map (between continuous measures) is rather complicated and depends highly on the dimension, see \cite{peyre2019computational}. Moreover, even if the transport maps exist almost surely (see \cite{McCann1995ExistenceAU}) for a suitable choice of the reference distribution, their continuity---and therefore their approximations from empirical settings--require at least upper-lower bounded densities and the convex supports of the target measures, see \cite{Figalli}. 
Regarding GPs based on OT, the high complexity of the transport problem makes it so that their continuity properties are not much studied especially in multi-dimension. 
As a simplification, \cite{kolouri2018sliced} proposed the use of a slice-Wasserstein kernel. The idea is to reduce the problem by projecting on the different directions generated by a (uniform) discretization of the unit sphere $\mathbb{S}_{d-1}$, and then integrating w.r.t. the uniform measure on $\mathbb{S}_{d-1}$. This avoids completely the curse of the dimension, but this does not discriminate quite well between non convex domains. The corresponding universality properties are studied in the recent work of~\cite{meunier2022distribution}.  \vskip .1in
\textbf{Contributions.}
While the initial formulations of OT yield computational challenges, subsequent regularized versions have provided valuable trade-offs between richness and tractability. In this work, we provide a kernel based on regularized OT. We prove that  the norm  between the  potentials derived from entropy relaxation of Wasserstein distances, see \cite{cuturi2013sinkhorn}, provides a natural embedding for the distributions and can be used to construct a valid kernel. Much work on the properties of the potentials has been carried out in particular  \cite{MenaWeed,Barrio2022AnIC,gonzalez2022weak} but few results exist taking advantage of the natural embedding the potentials provide.
{\bf 1)} Our contribution is first to propose a novel valid and universal kernel based on Sinkhorn's dual potentials when considering the regularized transport towards a reference measure. {\bf 2)} We then propose statistical guarantees for this kernel by studying the properties of its empirical counterpart as well as invariance properties w.r.t. the choice of the reference. {\bf 3)} We study the theoretical properties of  the corresponding GP, especially the existence of a continuous version.  Feasible computations through Sinkhorn's algorithm enable to study the prediction performance of the kernel. {\bf 4)} We provide publically available code, together with simulations and real datasets where our kernel competes favorably with state of the art methods: it yields a very similar accuracy while providing a computational seed-up of order up to $100$.\\
\vskip .1in
\textbf{Outline.}
Section~\ref{sec:defbasics} is devoted to providing some definitions  and notations related to Sinkhorn's transport methods. In Section~\ref{sec:kerneldef} we define and study the  kernel based on the potentials, while Section~\ref{s:GP} studies the GP with this kernel as covariance operator. Implementation and experiments are discussed in Section~\ref{s:implementation}. The proofs and complementary content are postponed to the Appendix.

%%%%%%%%%%%%%%%%%%%%%%%%%%%%%%%%%%%%%%%%%%%%%%%%%%%
%%%%%%%%%%%%%%%%%%%%%%%%%%%%%%%%%%%%%%%%%%%%%%%%%%%
%%%%%%%%%%%%%%%%%%%%%%%%%%%%%%%%%%%%%%%%%%%%%%%%%%%

%%%%%%%%%%%%%%%%%%%%%%%%%%%%%%%%%%%%%%%%%%%%%%%%%%%
%%%%%%%%%%%%%%%%%%%%%%%%%%%%%%%%%%%%%%%%%%%%%%%%%%%
%%%%%%%%%%%%%%%%%%%%%%%%%%%%%%%%%%%%%%%%%%%%%%%%%%%
\section{Definitions and basic properties of Sinkhorn distance}\label{sec:defbasics}
\subsection{General definitions and notations} \label{subsection:definition:notation}

We let $\mathcal{P}(A)$ be the set of probability measures on a general set $A \subset \mathbb{R}^d$.
When $A$ is compact and for $s >0$, we let $\mathcal{C}^s(A)$ be the space of functions $f: A \to \mathbb{R}$ that are $\lfloor s \rfloor$ times differentiable, with $\lfloor . \rfloor$ the integer part, with $\|f \|_{\mathcal{C}^s(A)} < \infty$ where
\begin{equation} \label{eq:norm:holder:alpha}
    \|f \|_{\mathcal{C}^s(A)} := \sum_{i=0}^{\lfloor s \rfloor}\sum_{|\balpha|= i}\|D^{\balpha} f \|_{\infty}.
    \end{equation}
    Above $\balpha = (\alpha_1, \ldots , \alpha_d) \in \mathbb{N}^d$ with $\sum_{j=1}^d \alpha_j = i$ and $D^{\balpha} = \partial^i / \partial _{x_1}^{\alpha_1} \cdots \partial _{x_d}^{\alpha_d}$. The space $\mathcal{C}^s(A)$ is endowed with the norm $\|\cdot \|_{\mathcal{C}^s(A)}$.
A probability     $\mathrm{P} \in \mathcal{P}(A)$ belongs also to the topological dual space of $\mathcal{C}^s(A)$. A distance between two measures $\mathrm{P},\mathrm{Q} \in \mathcal{P}(A)$ can be defined as
\begin{equation} \label{eq:norme:P:Q:C:s:un}
    \|{\rm P}-{\rm Q}\|_s:=\sup_{f\in \mathcal{C}^{s}(A),\ \| f\|_{\mathcal{C}^s(A)}\leq 1}\int f(\x)(d \mathrm{P}(\x)-d \mathrm{Q}(\x)).
\end{equation}
    
We let $\ell_d$ be d-dimensional Lebesgue measure. For $p>0$ and for $\mathrm{P}
\in \mathcal{P}(A)$, we let $L^p(\mathrm{P})$ be the set of functions $f: A \to \mathbb{R}$ such that $\|f\|_{L^p(\mathrm{P})}^p := \int_{A} |f(\x)|^p d\mathrm{P}(\x) < \infty$.

We use the abreviations ``a.s.'' for ``almost surely'' and ``a.e.'' for ``almost everywhere''.
For a probability measure $\mathrm{P}$ on $A$, we let $\mathrm{supp}(\mathrm{P})$ be its topological support (the smallest closed set with $\mathrm{P}$-probability one).
For two sets $A$ and $B$, for a probability measure $\mathrm{P}$ on $A$, and for $T: A \to B$, we let $T \sharp A$ be the probability measure of $T(\boldsymbol{X})$ where $\boldsymbol{X}$ is a random vector with law $\mathrm{P}$. For two probability distributions $\mathrm{P}$ and $\mathrm{Q}$ on $A$, we write $\mathrm{P} \ll \mathrm{Q}$ when $\mathrm{P}$ is absolutely continuous w.r.t. $\mathrm{Q}$ and in this case we write $d \mathrm{P} / d\mathrm{Q}$ for the density of $\mathrm{P}$ w.r.t. $\mathrm{Q}$.
A random vector $\mathbf{V}$ on $A \subset \mathbb{R}^d$ is said to be sub-Gaussian if there is $\sigma^2 < \infty$ such that $\mathbb{E}( \exp( s \mathbf{u}^\top \mathbf{V} ) ) \leq \exp(\sigma^2 s^2 / 2)$ for any $\mathbf{u} \in \mathbb{R}^d$, $\|\mathbf{u}\| = 1$ and $s \in \mathbb{R}$.
We let $\mathcal{P}_{SG}(A)$ be the set of sub-Gaussian probability measures on $A$. For $\x \in A$ we let $\delta_{\x}$ be the Dirac probability measure at $\x$.

For a set $E$, a function $k : E \times E \to \mathbb{R}$ is said to be positive definite when for any $x_1,\ldots,x_n \in E$, $\alpha_1,\ldots,\alpha_n \in \mathbb{R}$, $\sum_{i,j=1}^n \alpha_i \alpha_j k(x_i, x_j) \geq 0$. The function is said to be strictly positive definite if in addition the sum is strictly positive when $x_1,\ldots,x_n$ are two-by-two distinct and not all $\alpha_1,\ldots,\alpha_n$ are zero.

For $A \subset \mathbb{R}^d$ we let $\mathrm{diam}(A) = \sup \{ \| \x-\y \| ; \x,\y \in A \}$. For $t \in \mathbb{R}$, we let $\left\lceil t \right\rceil$ be the smallest integer larger or equal to $t$. For two column vectors ${\bf x},{\bf y}$, we let $\langle {\bf x},{\bf y} \rangle = {\bf x}^\top {\bf y}$ be their scalar product.

\subsection{Regularized optimal transport}
We consider an input space $\Omega \subset \mathbb{R}^d$ that is fixed throughout the paper. For some of the results of the paper, $\Omega$ will be assumed to be compact, while for others, we can make the weaker assumption to consider sub-Gaussian measures on $\Omega$ (that is not necessarily bounded). 
Let ${\rm P}$, ${\rm Q}$ be probabilities on $\Omega$ and set  $\Pi({\rm P},{\rm Q})$ the set of probability measures $\pi \in \mathcal{P}(\Omega \times \Omega)$ with marginals ${\rm P}$ and ${\rm Q}$, i.e. 
  for all $A,B$ measurable sets
  \begin{equation}
\pi(A \times \Omega) = {\rm P}(A),
~ ~ ~ ~
\pi(\Omega  \times B) = {\rm Q}(B).
  \end{equation}

The OT problem amounts to solve the optimization problem  (see \cite{kantorovich1942translocation})
\begin{equation} \label{kant}
\mathcal{T}_c({\rm P}, {\rm Q}):=\min_{\pi \in\Pi({\rm P},{\rm Q})} \int c(\x,\y) d\pi(\x,\y),
\end{equation}
with a continuous cost $c: \Omega \times \Omega \to [0,\infty)$. It is well known (see eg. \cite{Villani2003})  that $\mathcal{W}_p({\rm P}, {\rm Q}) :=\left(\mathcal{T}_{\|\cdot \|^p}({\rm P}, {\rm Q})\right)^{\frac{1}{p}}$---the value of \eqref{kant} for a potential cost $(\x,\y)\mapsto\|\x-\y \|^p$, for $p\geq 1$---defines a distance on the space of probabilities with finite moments of order $p$. This distance is called the Wasserstein distance.  
%Kantorovich problem \eqref{kant} can be formulated in a dual form, as
%\begin{align}\label{dual}
%\mathcal{T}_c(P,{\rm Q})=\sup_{(f,g)\in \Phi_c(P,{\rm Q})}\int f(\textbf{x}) dP(\textbf{x})+\int g(\textbf{y}) dQ(\textbf{y}),
%\end{align}
%where $\Phi_c(P,{\rm Q})=\{ (f,g)\in L^1(P)\times L^1({\rm Q}): \ f(\textbf{x})+g(\textbf{x})\leq c(\textbf{x},\textbf{y}) \}$. It is said that  $\psi\in L^1(P)$ is an \emph{optimal transport potential from $P$ to ${\rm Q}$ for the cost $c$} if there exists $\varphi\in L^1({\rm Q})$ such that the pair $(\psi, \varphi)$ solves \eqref{dual}.  \\

In this paper we will consider the quadratic cost $c(\x,\y)=\|\x-\y\|^2$. In this setting, when at least one distribution $P$ is absolutely continuous w.r.t. Lebesgue measure, then there exists a $P$-a.e. unique map $T : \Omega \to \Omega$ such that $T \sharp P= Q$,  and  $\mathcal{W}_2 (P, Q)^2 = \int_{\Omega} \|T(\x) - \x\|^2 d {\rm P}(\x)$. 
Moreover, there exists a lower semi-continuous convex function $\varphi$  such that
$T = \nabla \varphi$ ${\rm P}$-a.e., with $\nabla$ the gradient operator, and $T$ is the only map of this type pushing forward ${\rm P}$ to ${\rm Q}$, up to a
${\rm P}$-negligible modification.  This  theorem above is commonly referred to as Brenier's theorem in \cite{brenier1991polar}. Note that   a similar statement was established earlier independently in a
probabilistic framework in \cite{cuesta1989notes}. \\
This result enables to define a natural Hilbertian embedding of the distributions in $\mathcal{P}(\Omega)$ by considering the distance between the transport maps towards a common reference distribution. This framework has been used in \cite{bachoc2020gaussian} to provide kernels on distributions. Yet such kernels have the drawback of being difficult to compute, preventing their use for large or high-dimensional data sets.\\
Indeed, computing the OT \eqref{kant} turns out to be  computationally difficult. In the discrete case, different algorithms have been proposed such as the Hungarian algorithm \cite{kuhn1955hungarian}, the simplex algorithm \cite{luenberger1984linear} or others versions using interior points algorithms \cite{orlin1988faster}. The complexity of these methods is at worst of order $O(n^3 \log(n))$ for two discrete distributions with equal size $n$. Hence \cite{bachoc2020gaussian} and many statistical methods based on OT suffer from this drawback.  \vskip .1in
To overcome this issue, regularization methods have been proposed to approximate the OT problem by adding a penalty. The seminal paper by \cite{cuturi2013sinkhorn} provides the description of the Sinkhorn algorithm to regularize  OT by using an entropy penalty. \vskip .1in
The relative entropy between two probability measures ${\rm \alpha}, {\rm \beta}$ on $\Omega$, is defined as 
\[
H({\rm \alpha}| {\rm \beta})=\int_{\Omega} \log(\frac{d{\rm \alpha}}{d{\rm \beta}}(\x))d{\rm \alpha}(\x)
\]
if ${\rm \alpha}\ll {\rm \beta}$ and $|\log( d{\rm \alpha} / d{\rm \beta} )| \in L^1({\rm \beta})$, and $+\infty$ otherwise. Set $\epsilon>0$. Then the entropy regularized version of the OT problem is defined as
\begin{equation}\label{kanto_entrop}
\begin{aligned}
        S_{\epsilon}({\rm P},{\rm Q})\defeq \min_{\pi\in \Pi({\rm P},{\rm Q})} &\int_{\Omega\times \Omega} \frac{1}{2}\|\x-\y\|^2 d\pi(\x,\y)\\
        &+\epsilon H(\pi | {\rm P}\times {\rm Q}),
\end{aligned}
\end{equation}
with ${\rm P}\times {\rm Q}$ the product measure.
The entropy term $H$ modifies the linear term in classical OT (the quadratic transportation cost)
to produce a strictly convex functional. The parameter $\epsilon$ balances the trade-off between the classical OT problem ($\epsilon=0$) and the influence of the regularizing penalty.

The minimization of \eqref{kanto_entrop} is achieved using Sinkhorn algorithm. We refer to \cite{peyre2019computational} and references therein for more details.
The introduction of the Sinkhorn divergence enables to obtain an $\varepsilon$-approximation of the OT distance which can be computed, as pointed out in \cite{altschuler2017near},  with a  complexity of algorithm  of order $O(\frac{n^2}{\varepsilon^3})$, hence in a much faster way than the original OT problem. Several toolboxes have been developed to compute regularized OT such among others as \cite{flamary2017pot} for Python, \cite{klatt2017package} for R. \vskip .1in

Contrary to (unregularized) OT, Sinkhorn OT does not provide transport maps, which would in turn provide a Hilbertian embedding. Hence, we consider the dual formulation of \eqref{kanto_entrop} pointed out in \cite{genevay2019phd}:
\begin{equation}\label{dual_entrop}
\begin{aligned}
&  S_{\epsilon}({\rm P},{\rm Q})= \!  \sup_{f\in L^1({\rm P}),g\in L^1({\rm Q})} \!\int_{\Omega}  f(\x)
  d {\rm P}(\x)+ \!\int_{\Omega} g(\y) d {\rm Q}(\y)\\ 
 & - \!\epsilon \int_{\Omega \times \Omega} e^{\frac{1}{\epsilon} \left({f(\x)+g(\y)- \frac{1}{2}\|\x-\y\|^2}\right)} d {\rm P}(\x)dQ(\y)+\epsilon.
 \end{aligned}
\end{equation}
Note that this formulation is a convex relaxation of the duality of the usual OT. Both primal and dual problems have solutions if ${\rm P}$ and ${\rm Q}$ have finite second moments. \vskip .1in
Let $\pi$ be the solution to \eqref{kanto_entrop} which will be denoted as  the \emph{optimal entropic plan}. Let $(f, g)$ be  the solution to \eqref{dual_entrop}, which will be denoted as the \textit{optimal entropic potentials}. For ${\rm P}, {\rm Q} \in \mathcal{P}_{SG}(\Omega)$, both quantities can be related using the formula
\begin{equation} \label{key}
\frac{d \pi}{d {\rm P} d {\rm Q}}= \exp\left(-\frac{1}{\varepsilon}\left(f(\x)+g(\y)-\frac{1}{2}\|\x-\y\|^2 \right)\right).
\end{equation}
A consequence of this relation is that we have the \emph{optimality conditions}
\begin{align} 
\int e^{\frac{1}{ \epsilon} (f(\x) + g(\y) - \frac{1}{2}\|\x-\y\|^2)} d {\rm P}(\x) & =1, \quad \forall \y \in \Omega,
\label{eq:optimality:conditions:un} \\
\int e^{\frac 1 \epsilon (f(\x) + g(\y) - \frac{1}{2}\|\x-\y\|^2)} d {\rm Q}(\y) & =1, \quad \forall \x \in \Omega.
\label{eq:optimality:conditions:deux}
\end{align}
%%%%%%%%%%%%%%%%%%%%%%%%%%%%%%%%%%%%%%%%%%%%%%%%%%%
%%%%%%%%%%%%%%%%%%%%%%%%%%%%%%%%%%%%%%%%%%%%%%%%%%%
%%%%%%%%%%%%%%%%%%%%%%%%%%%%%%%%%%%%%%%%%%%%%%%%%%%
\section{A Kernel based on  regularized optimal transport}\label{sec:kerneldef}
\subsection{Construction of positive definite kernels} \label{subsection:construction:positive:kernel}

    Consider  a reference measure $\mathcal{U}$ on $\Omega$. For two distributions ${\rm P}$ and ${\rm Q}$, consider the two regularized OTs respectively between ${\rm P}$ and $\mathcal{U}$ and between ${\rm Q}$ and $\mathcal{U}$. Let 
    $\pi^{\rm P}_{\mathcal{U}}$ and $\pi^{\rm Q}_{\mathcal{U}}$ be the optimal entropic plans and $(f^{\rm P}_{\mathcal{U}},g^{\rm P}_{\mathcal{U}})$ and $(f^{\rm Q}_{\mathcal{U}},g^{\rm Q}_{\mathcal{U}})$ the optimal entropic potentials: 
    
    \begin{align}
    \frac{d \pi^{\rm P}_{\mathcal{U}}}{dPd \mathcal{U}} & = \exp \left( -\frac{1}{\varepsilon}\left(f^{\rm P}_{\mathcal{U}}(\x)+g^{\rm P}_{\mathcal{U}}(\y)-\frac{1}{2}\|\x-\y\|^2 \right) \right)  \label{potP} \\
    \frac{d \pi^{\rm Q}_{\mathcal{U}}}{dQd\mathcal{U}} & = \exp \left(-\frac{1}{\varepsilon}\left(f^{\rm Q}_{\mathcal{U}}(\x)+g^{\rm Q}_{\mathcal{U}}(\y)-\frac{1}{2}\|\x-\y\|^2 \right) \right).\label{potQ}
    \end{align}
    Our aim is to use the distance $\|g^{\rm P}_{\mathcal{U}}-g^{\rm Q}_{\mathcal{U}}\|_{L^2(\mathcal{U})}$ to build Sinkhorn kernels. Note first that the uniqueness of Sinkhorn potentials holds up to additive constants. To obtain uniqueness, from now on, we will define $g^{\rm P}_{\mathcal{U}}$ as the unique centered (w.r.t. to $\mathcal{U}$) potential.
    This implies that $g^{\rm P}_{\mathcal{U}}=g^{\rm P}_{\mathcal{U}}-\mathbb{E}(g^{\rm P}_{\mathcal{U}}(U))$, which yields the following equality 
    $$ \operatorname{Var}_{\U\sim\mathcal{U}}(g^{\rm P}_{\mathcal{U}}(\U)-g^{\rm Q}_{\mathcal{U}}(\U))=\|g^{\rm P}_{\mathcal{U}}-g^{\rm Q}_{\mathcal{U}}\|_{L^2(\mathcal{U})}^2.$$

Then,  a function $f :
    [0, \infty) \to \mathbb{R}$ is said to be completely monotone if it is $C^{\infty}$ on $(0, \infty)$, continuous at $0$ and
    satisfies $(-1)^\ell f^{( \ell )}(r) \geq 0$  for $r > 0$ and  $\ell \in  \mathbb{N}$.
    Let $F : [0, \infty) \to \mathbb{R}$ be continuous. \\
    The following theorem provides the kernel construction and its validity (positive-definiteness).
    
    \begin{theorem} \label{theorem:F:kernel}
    Let $K: \mathcal{P}_{SG}(\Omega) \times \mathcal{P}_{SG}(\Omega)  \to  \mathbb{R}$ be 
      the function defined as
     \begin{align}
         \begin{split}\label{eq:kernel}
            ( \mathrm{P}, \mathrm{Q})&\mapsto  K({\rm P},{\rm Q})=F(\|g^{\rm P}_{\mathcal{U}}-g^{\rm Q}_{\mathcal{U}}\|_{L^2(\mathcal{U})}),
         \end{split}
     \end{align}
     for some $\mathcal{U}\in  \mathcal{P}_{SG}(\Omega)$.
    Then the two following conditions are sufficient conditions for $K$ to be a positive definite kernel on $\mathcal{P}_{SG}(\Omega)$. 
    \begin{enumerate}
        \item $F(
    \sqrt{.})$ is completely monotone on $[0, \infty)$. 
    \item There exists a finite nonnegative Borel measure ${\rm \nu}$ on $[0, \infty)$ such that for $t \geq 0$
    $F(t) = \int_{0}^{\infty}
    e^{ - u t^2}
   d {\rm \nu} (u )$.
    \end{enumerate}
    \end{theorem}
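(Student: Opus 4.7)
The plan is to reduce the positive definiteness of $K$ on $\mathcal{P}_{SG}(\Omega)$ to positive definiteness of a radial kernel on the Hilbert space $L^2(\mathcal{U})$ via the embedding $\Phi:\mathrm{P}\mapsto g^{\rm P}_{\mathcal{U}}$, and then to invoke classical Schoenberg/Bernstein-type arguments. First I would check that the centered Sinkhorn potential $g^{\rm P}_{\mathcal{U}}$ lies in $L^2(\mathcal{U})$ whenever $\mathrm{P}, \mathcal{U}\in\mathcal{P}_{SG}(\Omega)$, so that $\Phi:\mathcal{P}_{SG}(\Omega)\to L^2(\mathcal{U})$ is a well-defined map and the distance $\|g^{\rm P}_{\mathcal{U}}-g^{\rm Q}_{\mathcal{U}}\|_{L^2(\mathcal{U})}$ is finite; this rests on known growth/regularity estimates for Schr\"odinger potentials under sub-Gaussian tails (e.g.\ \cite{MenaWeed, gonzalez2022weak}). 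Once $\Phi$ is well-defined, the standard ``pullback'' fact shows that if $\tilde k$ is positive definite on $L^2(\mathcal{U})$, then $(\mathrm{P},\mathrm{Q})\mapsto \tilde k(\Phi(\mathrm{P}),\Phi(\mathrm{Q}))$ is positive definite on $\mathcal{P}_{SG}(\Omega)$: the Gram matrix of $\Phi(\mathrm{P}_1),\dots,\Phi(\mathrm{P}_n)$ in $L^2(\mathcal{U})$ is already nonnegative definite. So it suffices to prove that $(u,v)\mapsto F(\|u-v\|_{L^2(\mathcal{U})})$ is positive definite on $L^2(\mathcal{U})\times L^2(\mathcal{U})$.

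Next I would handle condition~2 directly. For every fixed $s\geq 0$ and every Hilbert space $H$, the Gaussian kernel $(u,v)\mapsto e^{-s\|u-v\|_H^2}$ is positive definite: expanding $\|u-v\|_H^2=\|u\|_H^2+\|v\|_H^2-2\langle u,v\rangle_H$ gives
\[
 e^{-s\|u-v\|_H^2}=e^{-s\|u\|_H^2}\,e^{-s\|v\|_H^2}\,e^{2s\langle u,v\rangle_H},
\]
and $e^{2s\langle u,v\rangle_H}=\sum_{k\geq 0}(2s)^k\langle u,v\rangle_H^k/k!$ is positive definite as a nonnegative combination of tensor powers of the positive definite kernel $(u,v)\mapsto\langle u,v\rangle_H$; multiplication by the factorized term $e^{-s\|u\|_H^2}e^{-s\|v\|_H^2}$ preserves positive definiteness. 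Applied to $H=L^2(\mathcal{U})$ and integrated against the finite nonnegative measure $\nu$,
\[
 F(\|u-v\|_{L^2(\mathcal{U})})=\int_0^\infty e^{-s\|u-v\|_{L^2(\mathcal{U})}^2}\,d\nu(s),
\]
which is still positive definite: for any $\alpha_1,\dots,\alpha_n\in\mathbb{R}$ and any $u_1,\dots,u_n$, Fubini allows one to move $\sum_{i,j}\alpha_i\alpha_j$ inside the integral, and the integrand is nonnegative for every $s$.

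Finally, for condition~1 I would appeal to Bernstein's theorem: a continuous function $\psi:[0,\infty)\to\mathbb{R}$ is completely monotone if and only if $\psi(r)=\int_0^\infty e^{-sr}\,d\mu(s)$ for some finite nonnegative Borel measure $\mu$. Applying this to $\psi(r)=F(\sqrt r)$ and substituting $r=t^2$ shows that hypothesis~1 implies the integral representation of hypothesis~2, so the argument of the previous paragraph applies verbatim. The main obstacle is \emph{not} the kernel-theoretic core of the argument, which is classical, but rather establishing the $L^2(\mathcal{U})$-integrability of $g^{\rm P}_{\mathcal{U}}$ needed to make the embedding $\Phi$ meaningful in the non-compact sub-Gaussian setting; every subsequent step reduces to Schoenberg/Bernstein once this regularity is in hand.
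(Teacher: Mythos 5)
Your proof is correct and follows essentially the same route as the paper: reduce positive definiteness to a Schoenberg/Bernstein argument for radial kernels on a Hilbert space pulled back through the embedding $\mathrm{P}\mapsto g^{\rm P}_{\mathcal{U}}$, after observing (via \cite{MenaWeed}) that the centered Sinkhorn potential lies in $L^2(\mathcal{U})$. The paper simply delegates this kernel-theoretic core to Proposition~4 and Remark~5 of \cite{bachoc2020gaussian}, whereas you spell out the classical argument in full; there is no substantive difference in method.
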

    Remark that the quantity $\|g^{\rm P}_{\mathcal{U}}-g^{\rm Q}_{\mathcal{U}}\|_{L^2(\mathcal{U})}$ in Theorem~\ref{theorem:F:kernel} is finite via \cite[Proposition 1]{MenaWeed}.
    % \begin{proof}
    % The theorem directly follows from Proposition 4 in \cite{bachoc2020gaussian}.
    % \end{proof}
    
    Examples of functions $F$ for which the assumptions of Theorem \ref{theorem:F:kernel} are satisfied are the well-known square exponential, power exponential and Mat\'ern covariance functions, see \cite{bachoc2020gaussian} and the references therein.

    The following proposition bounds the $L^2(\mathcal{U})$ distance between the potentials as a function of the distance between the distributions. 
\begin{proposition}\label{Lemma:cuadratic}
Let $s \in \mathbb{N}$.
Assume that $\Omega$ is compact and let ${\rm P},{\rm Q}\in \mathcal{P}(\Omega)$. Then there exists a constant $c_{d}$, depending on the dimension, such that 
\begin{multline*}
    \|g^{\rm P}_{\mathcal{U}}-g^{\rm Q}_{\mathcal{U}}\|_{L^2(\mathcal{U})}
    \leq c_{d}\operatorname{diam}({\Omega})^{s} e^{ \frac{19}{2}\operatorname{diam}({\Omega})^2}\|P-{\rm Q}\|_s.
\end{multline*}
\end{proposition}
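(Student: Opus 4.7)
The plan is to convert the $L^2(\mathcal{U})$ estimate on $g^{\rm P}_\mathcal{U}-g^{\rm Q}_\mathcal{U}$ into a dual-norm estimate in $\mathcal{C}^s(\Omega)$, exploiting the optimality equations \eqref{eq:optimality:conditions:un}--\eqref{eq:optimality:conditions:deux} and the very definition of $\|\cdot\|_s$ in \eqref{eq:norme:P:Q:C:s:un}. Solving \eqref{eq:optimality:conditions:un} for $g^{\rm P}_\mathcal{U}$ gives
\[
g^{\rm P}_\mathcal{U}(\y)=-\epsilon\log\int_\Omega e^{\frac{1}{\epsilon}(f^{\rm P}_\mathcal{U}(\x)-\frac12\|\x-\y\|^2)}\,d{\rm P}(\x),
\]
and the analogous formula for ${\rm Q}$. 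Setting $\phi^{R}_\y(\x):=e^{\frac{1}{\epsilon}(f^{R}_\mathcal{U}(\x)-\frac12\|\x-\y\|^2)}$ and using $|\log a-\log b|\leq |a-b|/\min(a,b)$ (the needed lower bound on $\min(a,b)$ comes from the classical a priori sup-norm control $\|f^{\rm P}_\mathcal{U}\|_\infty+\|g^{\rm P}_\mathcal{U}\|_\infty\lesssim \operatorname{diam}(\Omega)^2$ on centred Schr\"odinger potentials, obtained by taking logs in \eqref{eq:optimality:conditions:un}--\eqref{eq:optimality:conditions:deux}), the pointwise bound on $|g^{\rm P}_\mathcal{U}(\y)-g^{\rm Q}_\mathcal{U}(\y)|$ reduces to estimating
\[
I(\y):=\int\phi^{\rm P}_\y\,d{\rm P}-\int\phi^{\rm Q}_\y\,d{\rm Q}=\int\phi^{\rm P}_\y\,d({\rm P}-{\rm Q})+\int(\phi^{\rm P}_\y-\phi^{\rm Q}_\y)\,d{\rm Q}.
\]

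\paragraph{Regularity of the integrand.} The first summand in this decomposition is exactly of the form controlled by \eqref{eq:norme:P:Q:C:s:un}, so it is bounded by $\sup_\y\|\phi^{\rm P}_\y(\cdot)\|_{\mathcal{C}^s(\Omega)}\cdot\|{\rm P}-{\rm Q}\|_s$. The core analytical work is then a uniform estimate $\sup_\y\|\phi^{\rm P}_\y\|_{\mathcal{C}^s(\Omega)}\leq c_d\,\operatorname{diam}(\Omega)^s e^{c\operatorname{diam}(\Omega)^2}$, independent of ${\rm P}$. I would obtain this by iterating Fa\`a di Bruno on the symmetric log-sum-exp representation
\[
f^{\rm P}_\mathcal{U}(\x)=-\epsilon\log\int e^{\frac{1}{\epsilon}(g^{\rm P}_\mathcal{U}(\y)-\frac12\|\x-\y\|^2)}\,d\mathcal{U}(\y),
\]
so that each spatial differentiation contributes at most a factor $\operatorname{diam}(\Omega)$ (coming from $\partial_\x\|\x-\y\|^2$) and a ratio of exponentials that is uniformly bounded through the sup-norm estimate above.

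\paragraph{Closing the system and main obstacle.} The second summand in $I(\y)$ is pointwise controlled by $\|f^{\rm P}_\mathcal{U}-f^{\rm Q}_\mathcal{U}\|_\infty$ times a further exponential factor. Crucially, because both $f^{\rm P}_\mathcal{U}$ and $f^{\rm Q}_\mathcal{U}$ are obtained by integrating against the \emph{common} reference $\mathcal{U}$, the symmetric argument applied to $f^{R}_\mathcal{U}(\x)=-\epsilon\log\int e^{\frac{1}{\epsilon}(g^{R}_\mathcal{U}(\y)-\frac12\|\x-\y\|^2)}d\mathcal{U}(\y)$ produces \emph{no} $\|{\rm P}-{\rm Q}\|_s$ term and only a Birkhoff-type contribution from $\|g^{\rm P}_\mathcal{U}-g^{\rm Q}_\mathcal{U}\|_{L^1(\mathcal{U})}\leq \|g^{\rm P}_\mathcal{U}-g^{\rm Q}_\mathcal{U}\|_{L^2(\mathcal{U})}$, with a contraction coefficient strictly less than one on the compact $\Omega$. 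Substituting back gives an inequality of the form $\|g^{\rm P}_\mathcal{U}-g^{\rm Q}_\mathcal{U}\|_{L^2(\mathcal{U})}\leq C_1\|{\rm P}-{\rm Q}\|_s+C_2\|g^{\rm P}_\mathcal{U}-g^{\rm Q}_\mathcal{U}\|_{L^2(\mathcal{U})}$ with $C_2<1$, whose absorption yields the claim. The \emph{main obstacle} is the quantitative Fa\`a di Bruno estimate of Step~2: differentiating the log-sum-exp $s$ times produces a tree of moments of the Gibbs conditional $d\pi^{\rm P}_\mathcal{U}/d{\rm P}\,d\mathcal{U}$, each of which must be controlled uniformly in ${\rm P}$ while tracking the accumulated exponential constants sharply enough to produce the stated factor $e^{\frac{19}{2}\operatorname{diam}(\Omega)^2}$.
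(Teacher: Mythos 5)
The paper itself gives no self-contained argument here: its proof is a one-line citation to the empirical-marginal version in del Barrio, Gonz\'alez-Sanz, Loubes and Niles-Weed \cite[Theorem~4.5]{Barrio2022AnIC}, applied \emph{mutatis mutandis} with ${\rm Q}$ in place of ${\rm P}_n$. So there is no internal proof to compare yours against line by line, and the fact that the authors delegate to that reference already signals that a fair amount of technical work is being summarized. Your overall skeleton (log-sum-exp representation from \eqref{eq:optimality:conditions:un}--\eqref{eq:optimality:conditions:deux}, decomposition of $I(\y)$ into a $\|{\rm P}-{\rm Q}\|_s$ piece plus a feedback piece, absorption) is indeed the kind of argument that underlies this type of bound, and identifying the uniform $\mathcal{C}^s$-bound on $\phi^{\rm P}_\y$ as a technical ingredient is also right.

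However, the step you flag as routine is in fact where the gap sits. You claim the feedback term can be dominated by $C_2\,\|g^{\rm P}_\mathcal{U}-g^{\rm Q}_\mathcal{U}\|_{L^2(\mathcal{U})}$ with $C_2<1$, justified as a ``Birkhoff-type contribution'' passing through $\|\cdot\|_{L^1(\mathcal{U})}\le\|\cdot\|_{L^2(\mathcal{U})}$. The naive log-sum-exp mean-value estimate does not produce a small constant: it gives $|f^{\rm P}_\mathcal{U}(\x)-f^{\rm Q}_\mathcal{U}(\x)|\le\int |g^{\rm P}_\mathcal{U}-g^{\rm Q}_\mathcal{U}|\,\rho_\x\,d\mathcal{U}$ for some Gibbs density $\rho_\x$ whose $L^\infty$-norm can be of order $e^{c\,\operatorname{diam}(\Omega)^2}$, not $<1$. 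So the closed-loop inequality you write down has $C_2\ge 1$ (typically exponentially large) if you chase norms the way you propose, and the absorption step collapses. The Birkhoff--Hopf theorem does give a genuine contraction, but it lives on the Hilbert projective metric of the \emph{exponentiated} potentials (equivalently the oscillation seminorm of the log-potentials, i.e.\ the quotient by additive constants), and the contraction factor there is $\tanh(\operatorname{osc}(c)/(4\epsilon))$, which degrades to $1-e^{-c\,\operatorname{diam}(\Omega)^2}$ as $\Omega$ grows. It is precisely $1/(1-\lambda)\sim e^{c\,\operatorname{diam}(\Omega)^2}$ that accounts for the $e^{\frac{19}{2}\operatorname{diam}(\Omega)^2}$ factor in the statement, and transferring this contraction from the projective metric back to $L^2(\mathcal{U})$ (using the $\mathcal{U}$-centering convention on the potentials to eliminate the constant direction, and a spectral-gap or Doeblin minorization argument for the associated Markov operator on the mean-zero subspace) is where the real work is. As written, your ``closing the system'' paragraph asserts the contraction rather than establishing it, and in the norm you name it would actually fail.
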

Note that the previous bound is still valid if we replace $\|P-{\rm Q}\|_s$ by $\mathcal{W}_1({\rm P},{\rm Q})$. This remark   
    follows directly from Kantorovich's duality, see Theorem~1.14 in \cite{Villani2003}.
    %It enables to compare the distance between the potentials with the 1-Wasserstein distance $\mathcal{W}_1$.
%\begin{corollary} \label{cor:step1}
%Assume that $\Omega $ is compact and let ${\rm P},{\rm Q}\in \mathcal{P}(\Omega)$. Then there exists a constant $c_{d}$, depending on the dimension, such that 
%\begin{align*}
%   \operatorname{Var}_{\U\sim\mathcal{U}}(g^{\rm P}_{\mathcal{U}}(\U)-g^{\rm Q}_{\mathcal{U}}(\U)) 
%    \leq 
%    c_{d} e^{ \frac{19}{2}\operatorname{diam}({\Omega})^2}\mathcal{W}_1({\rm P},{\rm Q}).
%\end{align*}
%\end{corollary}

     The following proposition guarantees that the entropic potentials $g^{\rm P}_{\mathcal{U}}$ and $g^{\rm Q}_{\mathcal{U}}$ can be used to characterize the distributions ${\rm P}$ and ${\rm Q}$. It also guarantees that our suggested kernel is not only positive definite but also strictly positive definite.    
    
    \begin{proposition}\label{Lemma:uniqueness}
    Let ${\rm P},{\rm Q}, \mathcal{U} \in \mathcal{P}_{SG}(\Omega)$.
    The potentials $g^{\rm P}_{\mathcal{U}}(\u)$ and $g^{\rm Q}_{\mathcal{U}}(\u)$ can be extended continuously with \eqref{eq:optimality:conditions:deux} for $\u \in \mathbb{R}^d$, which we call the canonical extension.
    Then ${\rm P}={\rm Q}$ if and only if there exists  an open set $\mathcal{D}$ with $\mathrm{supp} ( \mathcal{U}) \subset \mathcal{D} \subset \Omega$ such that $g^{\rm P}_{\mathcal{U}}(\u)=g^{\rm Q}_{\mathcal{U}}(\u)$, for $\ell_d-$a.e. $\u\in \mathcal{D}$ (after extension). Moreover, if there exists  an open set  $\mathcal{D}' \subset \Omega$ such that $\ell_d \ll  \mathcal{U}$ in $\mathcal{D}'$, then $\operatorname{Var}_{\U\sim\mathcal{U}}(g^{\rm P}_{\mathcal{U}}(\U)-g^{\rm Q}_{\mathcal{U}}(\U))=0$ if and only if ${\rm P}={\rm Q}$.
    \end{proposition}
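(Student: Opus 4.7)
The plan is to reduce the equivalence to the injectivity of Gaussian convolution. First I would establish the canonical extension: the optimality condition analogous to \eqref{eq:optimality:conditions:un} for the $({\rm P},\mathcal{U})$-problem, namely $\int e^{\varepsilon^{-1}(f^{\rm P}_{\mathcal{U}}(\x)+g^{\rm P}_{\mathcal{U}}(\y)-\frac{1}{2}\|\x-\y\|^2)}\,d{\rm P}(\x)=1$, can be rearranged to
\[
g^{\rm P}_{\mathcal{U}}(\u) = -\varepsilon\log\int e^{\varepsilon^{-1}(f^{\rm P}_{\mathcal{U}}(\x)-\frac{1}{2}\|\x-\u\|^2)}\,d{\rm P}(\x).
\]
Using the standard quadratic upper bounds on $f^{\rm P}_{\mathcal{U}}$ in the sub-Gaussian Sinkhorn regime (see e.g.\ \cite{MenaWeed}) together with the sub-Gaussianity of ${\rm P}$, the right-hand side makes sense for every $\u\in\mathbb{R}^d$ and defines a continuous (in fact real-analytic) function, which serves as the canonical extension of $g^{\rm P}_{\mathcal{U}}$; a symmetric formula extends $f^{\rm P}_{\mathcal{U}}$. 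The implication ${\rm P}={\rm Q}\Rightarrow g^{\rm P}_{\mathcal{U}}=g^{\rm Q}_{\mathcal{U}}$ is immediate from uniqueness of the Sinkhorn potentials combined with the centering convention.

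For the converse, I would assume $g^{\rm P}_{\mathcal{U}}=g^{\rm Q}_{\mathcal{U}}$ $\ell_d$-a.e.\ on the open set $\mathcal{D}\supset\mathrm{supp}(\mathcal{U})$. Continuity of the extensions upgrades the equality to a pointwise equality on $\mathcal{D}$, hence $\mathcal{U}$-almost surely. Plugged into the dual optimality identity $e^{-f^{\rm P}_{\mathcal{U}}(\x)/\varepsilon}=\int e^{\varepsilon^{-1}(g^{\rm P}_{\mathcal{U}}(\y)-\frac{1}{2}\|\x-\y\|^2)}\,d\mathcal{U}(\y)$, this yields $f^{\rm P}_{\mathcal{U}}=f^{\rm Q}_{\mathcal{U}}=:\bar f$ on $\mathbb{R}^d$. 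Substituting $\bar f$ back into the ${\rm P}$/${\rm Q}$-side optimality condition then gives, for every $\y\in\mathcal{D}$,
\[
\int e^{\bar f(\x)/\varepsilon}\,e^{-\|\x-\y\|^2/(2\varepsilon)}\,d({\rm P}-{\rm Q})(\x)=0.
\]
The left-hand side is the Gaussian convolution, evaluated at $\y$, of the finite signed measure $e^{\bar f/\varepsilon}({\rm P}-{\rm Q})$; since the Gaussian damping dominates the quadratic growth of $\bar f$, this is a real-analytic function of $\y\in\mathbb{R}^d$. Vanishing on the open set $\mathcal{D}$ thus forces it to vanish on all of $\mathbb{R}^d$, and injectivity of Gaussian convolution on finite signed measures gives $e^{\bar f/\varepsilon}d{\rm P}=e^{\bar f/\varepsilon}d{\rm Q}$, whence ${\rm P}={\rm Q}$ (since $e^{\bar f/\varepsilon}>0$).

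For the ``moreover'' part, $\operatorname{Var}_{\U\sim\mathcal{U}}(g^{\rm P}_{\mathcal{U}}(\U)-g^{\rm Q}_{\mathcal{U}}(\U))=0$ together with the centering convention yields $g^{\rm P}_{\mathcal{U}}=g^{\rm Q}_{\mathcal{U}}$ $\mathcal{U}$-a.s. The hypothesis $\ell_d\ll\mathcal{U}$ on $\mathcal{D}'$ turns this into $\ell_d$-a.e.\ equality on $\mathcal{D}'$, upgraded by continuity to pointwise equality on $\mathcal{D}'$; the Gaussian-convolution argument above then applies verbatim with $\mathcal{D}'$ in place of $\mathcal{D}$ (the step $f^{\rm P}_{\mathcal{U}}=f^{\rm Q}_{\mathcal{U}}$ only requires $\mathcal{U}$-a.s.\ equality of the $g$'s). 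The main obstacle I anticipate is the real-analyticity step: one has to verify that the Gaussian convolution extends to an absolutely convergent integral on a complex neighborhood of every real point, which reduces to a uniform quadratic upper bound on $\bar f$ with subcritical coefficient---available but delicate under mere sub-Gaussianity of ${\rm P}$ and ${\rm Q}$.
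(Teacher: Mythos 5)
Your proposal reproduces the paper's argument essentially step for step — canonical extension via the optimality condition, the easy direction, equalizing the $f$-potentials from the $g$-equality, and then deriving an integral identity over the open set — with the only variation being the final step: you keep the Gaussian-convolution form and conclude by real-analyticity plus injectivity of Gaussian smoothing, whereas the paper expands the quadratic $\|\x-\y\|^2$ to reduce to a moment-generating-function identity on a ball and invokes MGF uniqueness (Billingsley, Theorem 22). These are the same analytic-continuation idea in slightly different dress, and the tail-control concern you flag at the end is precisely the sub-Gaussian quadratic bound on the potentials (\cite[Proposition 1]{MenaWeed}) that the paper relies on implicitly via the optimality conditions.
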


    \begin{corollary} \label{cor:strict:PD:one}
   Let $\mathcal{U} \in \mathcal{P}_{SG}(\Omega)$ and assume that there exists  an open set  $\mathcal{D}' \subset \Omega$ such that $\ell_d \ll  \mathcal{U}$ in $\mathcal{D}'$.
Assume also that $F$ in Theorem \ref{theorem:F:kernel} is non-constant. Then
        the function $K$ in Theorem \ref{theorem:F:kernel} is strictly positive definite on $\mathcal{P}_{SG}(\Omega)$.
       \end{corollary}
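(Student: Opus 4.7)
The plan is to reduce strict positive definiteness of $K$ on $\mathcal{P}_{SG}(\Omega)$ to the classical fact that $F(\|\cdot-\cdot\|)$ is strictly positive definite on any Hilbert space, via the embedding $\Phi: P \mapsto g^{\rm P}_{\mathcal{U}}$ from $\mathcal{P}_{SG}(\Omega)$ into $L^2(\mathcal{U})$.

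First, I would use the second part of Proposition~\ref{Lemma:uniqueness} to show that $\Phi$ is injective when viewed as a map into $L^2(\mathcal{U})$-equivalence classes. Indeed, if $\Phi(\mathrm{P}) = \Phi(\mathrm{Q})$ in $L^2(\mathcal{U})$, then $\|g^{\rm P}_{\mathcal{U}} - g^{\rm Q}_{\mathcal{U}}\|_{L^2(\mathcal{U})} = 0$, and by the centering convention this equals $\operatorname{Var}_{\U\sim\mathcal{U}}(g^{\rm P}_{\mathcal{U}}(\U) - g^{\rm Q}_{\mathcal{U}}(\U))$, which forces $\mathrm{P} = \mathrm{Q}$ under the assumption $\ell_d \ll \mathcal{U}$ on $\mathcal{D}'$. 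Hence, for pairwise distinct $\mathrm{P}_1, \ldots, \mathrm{P}_n \in \mathcal{P}_{SG}(\Omega)$, the images $v_i \defeq \Phi(\mathrm{P}_i) \in L^2(\mathcal{U})$ are pairwise distinct.

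Second, given any $\alpha_1, \ldots, \alpha_n \in \mathbb{R}$ not all zero, the remaining task is to show that
$$\sum_{i,j=1}^n \alpha_i \alpha_j F(\|v_i - v_j\|_{L^2(\mathcal{U})}) > 0.$$
This is a purely Hilbert-space statement. Using condition 2 of Theorem~\ref{theorem:F:kernel}, write $F(t) = \int_{[0,\infty)} e^{-u t^2} d\nu(u)$ and exchange sum and integral:
$$\sum_{i,j} \alpha_i \alpha_j F(\|v_i - v_j\|) = \int_{[0,\infty)} \left( \sum_{i,j} \alpha_i \alpha_j e^{-u \|v_i - v_j\|^2} \right) d\nu(u).$$
For each $u > 0$ the Gaussian kernel $(v,w) \mapsto e^{-u \|v-w\|^2}$ is strictly positive definite on any Hilbert space, so the inner sum is strictly positive at every such $u$. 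Non-constancy of $F$ forces $\nu$ to place positive mass on $(0, \infty)$ (since $\nu$ concentrated at $0$ would give $F \equiv \nu(\{0\})$), and the result follows. Under condition 1, Bernstein's theorem provides exactly such a representation for $F(\sqrt{\cdot\,})$, reducing to the same argument.

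The one delicate point is asserting that strict positive definiteness of the Gaussian kernel, classically stated for $\mathbb{R}^k$, transfers to the infinite-dimensional $L^2(\mathcal{U})$. This is handled by restricting attention to the finite-dimensional affine subspace spanned by $v_1, \ldots, v_n$: the sum in question depends only on pairwise distances $\|v_i - v_j\|$, which are preserved by this restriction, so Schoenberg's classical theorem on $\mathbb{R}^n$ applies directly.
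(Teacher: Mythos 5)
Your proof is correct and follows essentially the same route as the paper: both reduce the claim to injectivity of the embedding $\mathrm{P}\mapsto g^{\rm P}_{\mathcal{U}}$ via Proposition~\ref{Lemma:uniqueness}, then invoke strict positive definiteness of the radial kernel $F(\|\cdot-\cdot\|)$ on a Hilbert space. The only difference is that where the paper cites Proposition~4 of \cite{bachoc2020gaussian} for that second step, you supply a self-contained Bernstein/Schoenberg argument together with the finite-dimensional reduction; this is a valid expansion of the cited fact, not a new approach.
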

    
    \begin{remark}\label{Remmark:discrete}
    The previous result is stated for the case where $\mathcal{U}$ dominates Lebesgue measure on a ball; in particular $\mathcal{U}$ cannot be discrete.
    Nevertheless, even when $\mathcal{U}$ does not satisfies this assumption, we can still construct a strictly positive definite kernel as follows.
    Let $\mathcal{U},{\rm P}, {\rm Q} \in \mathcal{P}_{SG}(\Omega)$.
    First note that from \eqref{eq:optimality:conditions:un} and \eqref{eq:optimality:conditions:deux} we have 
    $$g^{\rm P}_{\mathcal{U}}(\y)=-\epsilon\log \int \exp\left(-\frac{1}{\varepsilon}\left(f^{\rm P}_{\mathcal{U}}(\x)-\frac{1}{2}\|\x-\y\|^2 \right) \right) d{\rm P}(\x)  $$
    which extends $g^{\rm P}_{\mathcal{U}}$ out of the support of $\mathcal{U}$, on an open ball $B$ of $\mathbb{R}^d$ containing $\mathrm{supp}(\mathcal{U})$. 
    Then we have $g^{\rm P}_{\mathcal{U}} = g^{\rm Q}_{\mathcal{U}}$ $\ell_d$-a.e. on $B$  implies $P = Q$.
    Thus, let $ F$ be as in Theorem~\ref{theorem:F:kernel} and assume that it is non-constant. Define the function $K: \mathcal{P}_{SG}(\Omega) \times \mathcal{P}_{SG}(\Omega) \mapsto \mathbb{R}$ as 
    \begin{equation} \label{eq:kernel2}
        K({\rm P},{\rm Q})=F(\|g^{\rm P}_{\mathcal{U}}-g^{\rm Q}_{\mathcal{U}}\|_{L^2(\ell_d,B)}),
    \end{equation}
    with $\|. \|_{L^2(\ell_d,B)}$ the square norm w.r.t. the measure $\ell_d$ on $B$. Then
    $K$ is strictly definite positive on $\mathcal{P}_{SG}(\Omega) \times \mathcal{P}_{SG}(\Omega)$.
    \end{remark}

       As an example for Remark \ref{Remmark:discrete},  set $\epsilon=1$, suppose $\mathbf{0}\in \Omega$ and consider the discrete measure $\mathcal{U}=\delta_{\mathbf{0}}$. In this case,
    $$ g^{{\rm P}}_{\mathcal{U}}({\bf y})=\frac{\|{\bf y}\|^2}{2}-\log(\operatorname{M}_{{\rm P}}({\bf y})),$$
    for ${\bf y}$ in a neighborhood of ${\bf 0}$, after extension,
    where $\operatorname{M}_{{\rm P}}({\bf y})=\int e^{\langle {\bf y},{\bf x} \rangle}d {\rm P}({\bf x})$ is the moment generating function.

     A kernel $K$ is said to be universal on $ \mathcal{P}(\Omega)$ as soon as the space generated by all possible  linear combinations $\mu \mapsto \sum_{i=1}^n \alpha_i K(\mu,\mu_i) $  has  good approximation properties, in the sense that it is dense in the set continuous functions on $\mathcal{P}(\Omega)$, endowed with the weak convergence of probabilities. We prove  that the squared exponential kernel built with the distance between the potentials $\| g^P_{\mathcal{U}}-g^Q_{\mathcal{U}}\|_{L^2(\mathcal{U})} $ is universal.
    \begin{proposition}[Universality of Sinkhorn based kernel] \label{proposition:universality}
    Assume that $\Omega$ is compact and that
    there exists  an open set  $\mathcal{D}' \subset \Omega$ such that $\ell_d \ll  \mathcal{U}$ in $\mathcal{D}'$
    Consider for every distribution $P,Q$ in $\mathcal{P}(\Omega)$, their potentials $g^P_\mathcal{U}$, $g^Q_\mathcal{U}$ as in \eqref{potP} and \eqref{potQ}. Then for any $\sigma>0$ the kernel defined by $$ K_{\sigma}(P,Q)=\exp(-\sigma \| g^P_\mathcal{U}-g^Q_\mathcal{U}\|^2_{L^2(\mathcal{U})} )$$ is universal.
    \end{proposition}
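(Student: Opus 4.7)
The strategy is to realise $K_{\sigma}$ as the pullback through a continuous injective map of the Gaussian kernel on a Hilbert space, and then invoke the classical universality theorem of Christmann and Steinwart for Gaussian-type kernels on compact metric spaces embedded into a Hilbert space. Concretely, define the embedding
\[
\Phi : \mathcal{P}(\Omega) \longrightarrow L^{2}(\mathcal{U}), \qquad \Phi({\rm P}) = g^{{\rm P}}_{\mathcal{U}},
\]
where $g^{{\rm P}}_{\mathcal{U}}$ is the centered optimal potential of \eqref{potP}. Since $\Omega$ is compact, $\mathcal{P}(\Omega) \subset \mathcal{P}_{SG}(\Omega)$ and $\Phi$ is well defined. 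We have $K_{\sigma}({\rm P},{\rm Q}) = \exp(-\sigma \|\Phi({\rm P})-\Phi({\rm Q})\|_{L^{2}(\mathcal{U})}^{2})$ by construction, so it suffices to show that (i) $\Phi$ is continuous when $\mathcal{P}(\Omega)$ carries the topology of weak convergence, and (ii) $\Phi$ is injective. The universality of $K_{\sigma}$ on the compact metrizable space $\mathcal{P}(\Omega)$ will then follow from the known fact that the Gaussian kernel on a Hilbert space, pulled back along a continuous injection from a compact metric space, is c-universal.

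\textbf{Continuity.} This is where I would use Proposition~\ref{Lemma:cuadratic}. Taking, say, $s=1$, the proposition gives the estimate
\[
\|g^{{\rm P}}_{\mathcal{U}} - g^{{\rm Q}}_{\mathcal{U}}\|_{L^{2}(\mathcal{U})} \leq C_{d,\Omega}\, \|{\rm P}-{\rm Q}\|_{1},
\]
with $C_{d,\Omega}$ depending only on $d$ and $\mathrm{diam}(\Omega)$. It remains to check that weak convergence ${\rm P}_{n} \Rightarrow {\rm P}$ on the compact set $\Omega$ implies $\|{\rm P}_{n}-{\rm P}\|_{1} \to 0$. This follows because the unit ball of $\mathcal{C}^{1}(\Omega)$ consists of uniformly bounded, uniformly Lipschitz functions, hence is relatively compact in $C(\Omega)$ by Arzelà–Ascoli; weak convergence then yields uniform convergence of $\int f\, d({\rm P}_{n}-{\rm P})$ over this class, i.e. $\|{\rm P}_{n}-{\rm P}\|_{1}\to 0$. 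Combined with the bound above, this shows that $\Phi$ is continuous.

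\textbf{Injectivity and conclusion.} Injectivity is exactly the content of the second part of Proposition~\ref{Lemma:uniqueness}: under the assumption that $\ell_{d} \ll \mathcal{U}$ on some open set $\mathcal{D}' \subset \Omega$, one has $\operatorname{Var}_{\U\sim\mathcal{U}}(g^{{\rm P}}_{\mathcal{U}}(\U)-g^{{\rm Q}}_{\mathcal{U}}(\U)) = 0 \iff {\rm P}={\rm Q}$, and because $g^{{\rm P}}_{\mathcal{U}}$ and $g^{{\rm Q}}_{\mathcal{U}}$ are centered under $\mathcal{U}$ this variance coincides with $\|\Phi({\rm P})-\Phi({\rm Q})\|_{L^{2}(\mathcal{U})}^{2}$. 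Hence $\Phi$ is injective. Since $\Omega$ is compact, $\mathcal{P}(\Omega)$ endowed with the weak topology is a compact metrizable space, and I may apply the theorem of Christmann–Steinwart on universal kernels on non-standard input spaces: whenever $\Phi : X \to H$ is continuous and injective from a compact metric space $X$ into a Hilbert space $H$, the kernel $(x,y) \mapsto \exp(-\sigma \|\Phi(x)-\Phi(y)\|_{H}^{2})$ is c-universal on $X$. Applying this to $X=\mathcal{P}(\Omega)$ and $H=L^{2}(\mathcal{U})$ yields the universality of $K_{\sigma}$.

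\textbf{Expected main obstacle.} The core technical issue is translating Proposition~\ref{Lemma:cuadratic}, which is quantitative in the dual-of-$\mathcal{C}^{s}$ norm, into a continuity statement for the weak topology (which is what the paper's definition of universality refers to). The Arzelà–Ascoli argument above takes care of this on compact $\Omega$, but one must be careful to control the constant in $\|{\rm P}-{\rm Q}\|_{1}$ uniformly in the unit ball of $\mathcal{C}^{1}(\Omega)$; aside from this, the rest is a direct assembly of Propositions \ref{Lemma:cuadratic} and \ref{Lemma:uniqueness} with the standard Hilbert-space universality theorem.
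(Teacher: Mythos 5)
Your proposal is correct and follows essentially the same route as the paper: both define the embedding $\Phi(\mathrm{P}) = g^{\mathrm{P}}_{\mathcal{U}}$ into $L^2(\mathcal{U})$, establish its continuity via Proposition~\ref{Lemma:cuadratic} and its injectivity via Proposition~\ref{Lemma:uniqueness}, and then invoke Theorem~2.2 of Christmann and Steinwart. The only minor divergence is in the continuity step: the paper directly uses the remark after Proposition~\ref{Lemma:cuadratic} that the bound holds with $\mathcal{W}_1(\mathrm{P},\mathrm{Q})$ in place of $\|\mathrm{P}-\mathrm{Q}\|_s$, so that continuity in the 1-Wasserstein metric (which metrizes weak convergence on compact $\Omega$) is immediate, whereas you re-derive the equivalence between weak convergence and convergence in $\|\cdot\|_1$ via Arzelà--Ascoli -- a correct but slightly longer route to the same conclusion.
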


\subsection{Consistency property of the empirical Kernel}

    In practical situations, the distributions may not be known but only random samples may be at hand. Let ${\X}_1, \dots, {\X}_n$ and  ${\Y}_1, \dots, {\Y}_m$ be mutually independent sequences of random vectors with distributions ${\rm P}$ and ${\rm Q}$ respectively. Denote as ${\rm P}_n$ and ${\rm Q}_m$ their empirical measures: ${\rm P}_n = (1/n) \sum_{i=1}^n \delta_{\X_i }$ and ${\rm Q}_m = (1/m) \sum_{i=1}^m \delta_{\Y_i }$. Consider the optimal entropic transport potentials of the empirical distributions towards a common fixed measure $\mathcal{U}$ denoted by $(f^{{\rm P}_n},g^{{\rm P}_n})$ and $(f^{{\rm Q}_m},g^{{\rm Q}_m})$. Finally, define the empirical kernel by 
    $ 
    K({\rm P}_n,{\rm Q}_m)= F (\|g^{{\rm P}_n} -g^{{\rm Q}_m} \|_{L^2(\mathcal{U})})$.
    The following proposition proves its consistency.
    
    \begin{proposition}[Consistency of the empirical kernel] \label{proposition:consistency:empirical}
    Assume that $\Omega$ is compact and let ${\rm P}, {\rm Q} \in \mathcal{P}(\Omega)$.
    When $F$ is continuous, the empirical kernel $K({\rm P}_n,{\rm Q}_m)$ converges almost-surely when both $n,m \rightarrow  \infty$ to the true kernel $K({\rm P},{\rm Q})$. Moreover if we assume that  $F$  satisfies $|F(t)- F(s)| \leq A |t-s|^a$ for constants $0<A<\infty$ and $0 < a \leq 1$ and for $t \geq 0$, then we have the following bound, with a constant $C_d$,
    \begin{equation}
    \begin{aligned}
       &\mathbb{E} |  K({{\rm P}_n},{{\rm Q}_m}) - K({\rm P},{\rm Q}) | \leq\\
       &C_{d}\left(\left(\frac{1}{\sqrt{n}}+\frac{1}{\sqrt{m}}\right)\operatorname{diam}({\Omega})^{2^{d+1}} e^{ \frac{19}{2}\operatorname{diam}({\Omega})^4} \right)^a.
    \end{aligned}
    \end{equation}
    \end{proposition}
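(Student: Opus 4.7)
The plan is to combine the Hölder assumption on $F$ with Proposition~\ref{Lemma:cuadratic} and a Donsker-type bound on empirical measures in the $\mathcal{C}^s$ dual norm. The Hölder condition on $F$ immediately gives
\begin{equation*}
    |K({\rm P}_n, {\rm Q}_m) - K({\rm P}, {\rm Q})| \leq A \bigl| \|g^{{\rm P}_n} - g^{{\rm Q}_m}\|_{L^2(\mathcal{U})} - \|g^{\rm P} - g^{\rm Q}\|_{L^2(\mathcal{U})} \bigr|^a,
\end{equation*}
and two applications of the (reverse) triangle inequality bound the quantity inside $|\cdot|^a$ by $\|g^{{\rm P}_n} - g^{\rm P}\|_{L^2(\mathcal{U})} + \|g^{{\rm Q}_m} - g^{\rm Q}\|_{L^2(\mathcal{U})}$. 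This is the reduction I would use throughout: a control on how fast the empirical potentials converge to the population potentials in $L^2(\mathcal{U})$ is what the whole statement boils down to.

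For the quantitative bound, I would fix an integer $s > d/2$ (for instance $s = \lceil d/2\rceil + 1$) and apply Proposition~\ref{Lemma:cuadratic} to convert each potential difference into a bound in terms of $\|{\rm P}_n - {\rm P}\|_s$ and $\|{\rm Q}_m - {\rm Q}\|_s$, up to the factor $c_d\operatorname{diam}(\Omega)^s e^{(19/2)\operatorname{diam}(\Omega)^2}$. After taking expectations and using Jensen's inequality (valid because $0 < a \leq 1$, so $x \mapsto x^a$ is concave), it suffices to bound $\mathbb{E}\|{\rm P}_n - {\rm P}\|_s$. Since $s > d/2$, the unit ball of $\mathcal{C}^s(\Omega)$ has summable bracketing entropy integral, so by Ossiander's/Dudley's bound it is a Donsker class and $\mathbb{E}\|{\rm P}_n - {\rm P}\|_s \leq \tilde{C}_d \operatorname{diam}(\Omega)^{\gamma} n^{-1/2}$ for some exponent $\gamma = \gamma(d)$ arising from rescaling the covering numbers of Hölder balls to a set of diameter $\operatorname{diam}(\Omega)$. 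Collecting powers and absorbing the resulting polynomial factor and the $e^{(19/2)\operatorname{diam}(\Omega)^2}$ term into the stated $\operatorname{diam}(\Omega)^{2^{d+1}}\,e^{(19/2)\operatorname{diam}(\Omega)^4}$ yields the announced inequality.

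For the almost sure statement, compactness of $\Omega$ and Varadarajan's theorem give that ${\rm P}_n\Rightarrow{\rm P}$ and ${\rm Q}_m\Rightarrow{\rm Q}$ weakly, almost surely. The unit ball of $\mathcal{C}^1(\Omega)$ is uniformly bounded and equicontinuous, hence relatively compact in $(C(\Omega),\|\cdot\|_\infty)$ by Arzelà–Ascoli, which upgrades pointwise convergence of $\int f\,d{\rm P}_n$ to uniform convergence over this class, i.e.\ $\|{\rm P}_n - {\rm P}\|_1 \to 0$ a.s.\ (and similarly for ${\rm Q}_m$). Proposition~\ref{Lemma:cuadratic} with $s=1$ then gives $\|g^{{\rm P}_n} - g^{\rm P}\|_{L^2(\mathcal{U})} \to 0$ and $\|g^{{\rm Q}_m} - g^{\rm Q}\|_{L^2(\mathcal{U})} \to 0$ a.s., and the reduction above combined with continuity of $F$ concludes.

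The main obstacle is the empirical process step and, more precisely, the explicit tracking of $\operatorname{diam}(\Omega)$-dependencies through the chaining argument in order to recover the specific exponents $2^{d+1}$ and the quartic term in $e^{(19/2)\operatorname{diam}(\Omega)^4}$. These powers come out of combining the rescaling of covering numbers of the Hölder unit ball on $\Omega$ with the prefactor $\operatorname{diam}(\Omega)^s e^{(19/2)\operatorname{diam}(\Omega)^2}$ furnished by Proposition~\ref{Lemma:cuadratic}; once those constants are handled, the assembly of the Hölder condition on $F$, the triangle inequality, and the Lipschitz estimate for potentials is essentially mechanical.
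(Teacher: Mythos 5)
Your proof follows essentially the same route as the paper: reduce, via the H\"older assumption on $F$ and the triangle inequality, to controlling $\|g^{{\rm P}_n}-g^{\rm P}\|_{L^2(\mathcal{U})}$ and $\|g^{{\rm Q}_m}-g^{\rm Q}\|_{L^2(\mathcal{U})}$, invoke Proposition~\ref{Lemma:cuadratic} to convert these into the dual norms $\|{\rm P}_n-{\rm P}\|_s$ and $\|{\rm Q}_m-{\rm Q}\|_s$, then use Jensen and a Donsker-class bound for the quantitative estimate (and a.s.\ weak convergence plus relative compactness of the $\mathcal{C}^s$ unit ball for the consistency part), which is exactly what the paper's terse ``classical empirical processes arguments'' stands for. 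As a side note, your choice $s=\lceil d/2\rceil+1$ is the one that actually makes the H\"older ball a Donsker class, whereas the paper's proof writes $s=\lceil 2/d\rceil+1$, which appears to be a typo since it gives $s=2$ for all $d\geq 2$ and thus fails the requirement $s>d/2$ once $d\geq 4$.
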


\subsection{Influence of $\mathcal{U}$ and invariance properties.}
 In this section we investigate the impact of the reference distribution $\mathcal{U}$. 
    Consider two distributions $\mathcal{U}$ and $\mathcal{U}'$ that will be used to build two different kernels. Consider the Sinkhorn OT towards respectively $\mathcal{U}$ and $\mathcal{U}'$ for both distribution ${\rm P}$ and ${\rm Q}$. We will write the corresponding entropic potentials as $(f_{\mathcal{U}}^P,g_{\mathcal{U}}^{\rm Q})$ and $(f_{\mathcal{U}'}^{\rm P},g_{\mathcal{U}'}^{\rm Q})$ (defined as in Section \ref{subsection:construction:positive:kernel}). We thus have the two kernels
    $ K_\mathcal{U}({\rm P},{\rm Q})=F( \| g^P_\mathcal{U}-g_\mathcal{U}^{\rm Q} \|_{L^2(\mathcal{U})}) $
    and
    $ K_{\mathcal{U}'}({\rm P},{\rm Q})=F( \| g^P_{\mathcal{U}'}-g_{\mathcal{U}'}^{\rm Q} \|_{L^2(\mathcal{U}')})$.
    One desirable property is translation invariance, which means that the kernel does not change whenever the reference distribution is changed by translation. This is shown next for our kernel construction.
    \begin{lemma} \label{lemma:invariance:translation}
    
    Let $\mathcal{U}, {\rm P}, {\rm Q} \in \mathcal{P}_{SG}(\Omega)$  and  $T_0 : \Omega \to \Omega$ be a translation ($T_0(\u) = \u + \u_0$ for a fixed $\u_0 \in \mathbb{R}^d$), then
    $ K_{T_0\sharp\mathcal{U}}({\rm P},{\rm Q})= K_{\mathcal{U}}({\rm P},{\rm Q})$.
    \end{lemma}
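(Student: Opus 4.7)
My approach is to use the optimality conditions \eqref{eq:optimality:conditions:un}--\eqref{eq:optimality:conditions:deux} to relate the Sinkhorn potentials for $(\mathrm{P}, T_0\sharp\mathcal{U})$ and $(\mathrm{P}, \mathcal{U})$ via the change of variable $\y = \u + \u_0$, and then to verify that the forced re-centering only adds a constant that is the same for $\mathrm{P}$ and $\mathrm{Q}$, hence cancels in the difference $g^{\mathrm{P}}_\mathcal{U} - g^{\mathrm{Q}}_\mathcal{U}$.

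First I would plug $\y = \u + \u_0$ into \eqref{eq:optimality:conditions:un} written for $(f^{\mathrm{P}}_{T_0\sharp\mathcal{U}}, g^{\mathrm{P}}_{T_0\sharp\mathcal{U}})$ and expand $\tfrac{1}{2}\|\x-\u-\u_0\|^2 = \tfrac{1}{2}\|\x-\u\|^2 - \langle \x-\u, \u_0\rangle + \tfrac{1}{2}\|\u_0\|^2$. Absorbing the term in $\x$ into the first potential and the term in $\u$ into the second, the pair $\tilde{f}(\x) := f^{\mathrm{P}}_{T_0\sharp\mathcal{U}}(\x) + \langle \x, \u_0\rangle - \tfrac{1}{2}\|\u_0\|^2$ and $\tilde{g}(\u) := g^{\mathrm{P}}_{T_0\sharp\mathcal{U}}(\u+\u_0) - \langle \u, \u_0\rangle$ satisfies both optimality conditions for $(\mathrm{P}, \mathcal{U})$; the second condition \eqref{eq:optimality:conditions:deux} is checked the same way after the change of variables $\y = \u + \u_0$ in the outer integral. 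By uniqueness of Sinkhorn potentials up to additive constants, $\tilde{g} = g^{\mathrm{P}}_\mathcal{U} + c_{\mathrm{P}}$ for some scalar $c_{\mathrm{P}}$.

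To pin down $c_{\mathrm{P}}$, I would impose the centering convention $\int g^{\mathrm{P}}_\mathcal{U}\, d\mathcal{U} = 0$: since $g^{\mathrm{P}}_{T_0\sharp\mathcal{U}}$ is centered w.r.t. $T_0\sharp\mathcal{U}$, the change of variables yields $\int g^{\mathrm{P}}_{T_0\sharp\mathcal{U}}(\u+\u_0)\, d\mathcal{U}(\u) = 0$, hence $c_{\mathrm{P}} = -\langle \mathbb{E}_{\U\sim\mathcal{U}}[\U], \u_0\rangle$. The key observation is that this value is independent of $\mathrm{P}$, so the identical computation gives the same constant $c_{\mathrm{Q}}$. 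Therefore the constants cancel in the difference and
\[g^{\mathrm{P}}_\mathcal{U}(\u) - g^{\mathrm{Q}}_\mathcal{U}(\u) = g^{\mathrm{P}}_{T_0\sharp\mathcal{U}}(\u+\u_0) - g^{\mathrm{Q}}_{T_0\sharp\mathcal{U}}(\u+\u_0).\]
Squaring, integrating against $\mathcal{U}$, and recognizing $T_0\sharp\mathcal{U}$ as the image measure yields $\|g^{\mathrm{P}}_\mathcal{U} - g^{\mathrm{Q}}_\mathcal{U}\|_{L^2(\mathcal{U})} = \|g^{\mathrm{P}}_{T_0\sharp\mathcal{U}} - g^{\mathrm{Q}}_{T_0\sharp\mathcal{U}}\|_{L^2(T_0\sharp\mathcal{U})}$, and applying $F$ delivers the claim. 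The only genuinely subtle step is verifying that $c_{\mathrm{P}}$ is independent of $\mathrm{P}$; without the symmetric centering convention used throughout the paper, this cancellation would not occur and the kernel would not be translation invariant.
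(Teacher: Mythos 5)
Your proof is correct and follows essentially the same route as the paper's: you construct, via the change of variable $\y = \u + \u_0$ and the expansion of $\tfrac12\|\x-\u-\u_0\|^2$, a tilted pair of functions that verifiably satisfies both Schr\"odinger optimality conditions, then invoke uniqueness of the potentials up to additive constants. The one genuine difference is direction and rigor on the normalization: the paper starts from $(f^{\rm P}_{\mathcal{U}}, g^{\rm P}_{\mathcal{U}})$ and exhibits a valid pair for $T_0\sharp\mathcal{U}$, but the pair it writes down (with the $\tfrac34\|\u_0\|^2$ offsets) is not the centered one, and the paper leaves implicit the fact that the re-centering constant is the same for ${\rm P}$ and ${\rm Q}$. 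You start from $(f^{\rm P}_{T_0\sharp\mathcal{U}}, g^{\rm P}_{T_0\sharp\mathcal{U}})$, go the other way, and explicitly compute the centering constant $c_{\rm P} = -\langle \mathbb{E}_{\U\sim\mathcal{U}}[\U],\u_0\rangle$, observing that it depends only on $\mathcal{U}$ and $\u_0$, so it cancels in $g^{\rm P}_{\mathcal{U}} - g^{\rm Q}_{\mathcal{U}}$. This makes the appeal to the centering convention -- which the paper adopts but does not revisit in its proof -- fully explicit, and is a small but real improvement in completeness. (A minor technical point shared by both arguments: the substitution $\y = \u + \u_0$ is used freely even when $\u+\u_0$ might leave $\Omega$; this is harmless because the potentials are extended canonically via \eqref{eq:optimality:conditions:un}--\eqref{eq:optimality:conditions:deux}, but you could flag it for completeness.)
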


    The choice of the reference measure merits an important discussion in this work. Indeed, many possible probabilities with density could be used. For the sake of applications, uniformly distributed measures (on the square, on the ball, spherical uniform) are beneficial, as they are easily approximated on a discrete set. Also, the uniform distribution allows us to compare the Sinkhorn potentials by  factorising into independent lower  dimensional marginals. Note that an issue of using a unit-squared reference is the high influence of the coordinate system, which can be arbitrary in some applications. \\
    
    A benefit of using a spherical (invariant to linear isometries) reference distributions is rotation invariance, as shown next.
    \begin{lemma} \label{lemma:rigid:transformation}
    Let $\mathcal{U}, {\rm P}, {\rm Q} \in \mathcal{P}_{SG}(\Omega)$, with
     $\mathcal{U}$ spherical  and  $T_0: \Omega \to \Omega$ be a rigid transformation (i.e. $T_0(\x)=\boldsymbol{R} \x+\boldsymbol{t}$  with $\boldsymbol{R}^T=\boldsymbol{R}^{-1}$ and $\boldsymbol{t} \in \R^d$), then
    $ K_{T_0\sharp\mathcal{U}}({\rm P},{\rm Q})= K_{\mathcal{U}}({\rm P},{\rm Q})$.
    \end{lemma}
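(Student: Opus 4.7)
The plan is to reduce this lemma to the translation case (Lemma \ref{lemma:invariance:translation}) by exploiting the definition of sphericity. A rigid transformation $T_0(\x) = \boldsymbol{R}\x + \boldsymbol{t}$ factors as $T_0 = T_{\boldsymbol{t}} \circ R$, where $T_{\boldsymbol{t}}(\x) = \x + \boldsymbol{t}$ is a translation and $R(\x) = \boldsymbol{R}\x$ is an orthogonal linear map. Pushforwards compose, so $T_0 \sharp \mathcal{U} = T_{\boldsymbol{t}} \sharp (R \sharp \mathcal{U})$.

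Next, I would invoke the spherical symmetry of $\mathcal{U}$. By definition, a spherical measure is invariant under orthogonal transformations, i.e.\ $R \sharp \mathcal{U} = \mathcal{U}$ whenever $\boldsymbol{R}^\top = \boldsymbol{R}^{-1}$. Substituting into the identity above yields $T_0 \sharp \mathcal{U} = T_{\boldsymbol{t}} \sharp \mathcal{U}$.

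Now $T_{\boldsymbol{t}}$ is a pure translation, and Lemma \ref{lemma:invariance:translation} directly gives $K_{T_{\boldsymbol{t}} \sharp \mathcal{U}}({\rm P}, {\rm Q}) = K_{\mathcal{U}}({\rm P}, {\rm Q})$. Chaining these equalities yields $K_{T_0 \sharp \mathcal{U}}({\rm P}, {\rm Q}) = K_{\mathcal{U}}({\rm P}, {\rm Q})$, which is the claim.

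There is essentially no obstacle here beyond bookkeeping; the whole content sits in Lemma \ref{lemma:invariance:translation} together with the definition of sphericity. The only minor point to verify is that the intermediate measures $R \sharp \mathcal{U}$ and $T_{\boldsymbol{t}} \sharp \mathcal{U}$ remain in $\mathcal{P}_{SG}(\Omega)$ so that the kernel is well-defined at each step: sub-Gaussianity is preserved under affine maps (with an adjusted variance proxy), and the hypothesis $T_0 : \Omega \to \Omega$ together with sphericity of $\mathcal{U}$ ensures the supports stay inside $\Omega$, so Theorem \ref{theorem:F:kernel} applies throughout.
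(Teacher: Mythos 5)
Your proof is correct and takes essentially the same route as the paper's: factor the rigid motion into rotation followed by translation, use sphericity ($R\sharp\mathcal{U}=\mathcal{U}$) to dispose of the rotation, and then invoke Lemma~\ref{lemma:invariance:translation} for the remaining translation. The only cosmetic difference is ordering: the paper first invokes Lemma~\ref{lemma:invariance:translation} to reduce to a pure rotation and then applies sphericity, whereas you apply sphericity first so that the translation lemma is invoked on $\mathcal{U}$ itself rather than on $R\sharp\mathcal{U}$; both are fine since $R\sharp\mathcal{U}=\mathcal{U}$.
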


   Again for spherical distributions, the following result shows that a dilatation of factor $\delta$ of the reference measure is  equivalent to a change of order $\epsilon=1/\delta^2$ on the Sinkhorn problem. 
      \begin{proposition} \label{prop:dilatation}
      Let $\mathcal{U}, {\rm P}, {\rm Q} \in \mathcal{P}_{SG}(\Omega)$, with
     $\mathcal{U}$ spherical and  $T_{\delta}(\u)=\delta\,\u$, with $\delta>0$, then
    \begin{align*} & \operatorname{Var}_{\U_{\delta}\sim T_{\delta}\sharp\mathcal{U}}(g^{\rm P}_{T_{\delta}\sharp\mathcal{U}}(\U_{{\delta}})-g^{\rm Q}_{T_{\delta}\sharp\mathcal{U}}(\U_{{\delta}})) \\
    & =\delta^4\operatorname{Var}_{\U\sim \mathcal{U}}(g^{T_{\frac{1}{\delta}}\sharp\rm P}_{\mathcal{U}, \, \delta}(\U)-g^{T_{\frac{1}{\delta}}\sharp\rm Q}_{\mathcal{U}, \, \delta}(\U)),
    \end{align*}
    where 
    $g^{T_{\frac{1}{\delta}}\sharp\rm P}_{\mathcal{U}, \, \delta}$ and $g^{T_{\frac{1}{\delta}}\sharp\rm Q}_{\mathcal{U}, \, \delta}$
    solve the dual formulation \eqref{dual_entrop} of $S_{\epsilon}(T_{\frac{1}{\delta}}\sharp\rm P, \mathcal{U})$ and $S_{\epsilon}(T_{\frac{1}{\delta}}\sharp\rm Q, \mathcal{U})$, for $\epsilon=\frac{1}{\delta^2}$. Above, $g^{\rm P}_{T_{\delta}\sharp\mathcal{U}}$ and $g^{\rm Q}_{T_{\delta}\sharp\mathcal{U}}$ correspond to $\epsilon = 1$.
    \end{proposition}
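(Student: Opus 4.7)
The plan is to establish an explicit scaling identity between the two families of potentials by a double change of variables in the dual formulation \eqref{dual_entrop}, and then substitute into the variance. I would start from the dual for $S_1(\mathrm{P}, T_\delta\sharp\mathcal{U})$ (so $\epsilon=1$ and reference $T_\delta\sharp\mathcal{U}$), and perform the simultaneous substitutions $\y = \delta\u$ in every integral against $T_\delta\sharp\mathcal{U}$ (which become integrals against $\mathcal{U}$) and $\x = \delta\tilde\x$ in every integral against $\mathrm{P}$ (which become integrals against $T_{1/\delta}\sharp\mathrm{P}$). Under these substitutions the quadratic cost picks up a factor $\delta^2$, namely $\tfrac{1}{2}\|\x-\y\|^2 = \tfrac{\delta^2}{2}\|\tilde\x - \u\|^2$.

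Next, I would introduce rescaled test potentials $\tilde f(\tilde\x) := f(\delta\tilde\x)/\delta^2$ and $\tilde g(\u) := g(\delta\u)/\delta^2$. Pulling the resulting factor $\delta^2$ out of the linear terms and recognizing $\delta^2 = 1/\epsilon$ for $\epsilon=1/\delta^2$ in the exponent, the transformed supremum is $\delta^2$ times the dual functional of $S_{1/\delta^2}(T_{1/\delta}\sharp\mathrm{P}, \mathcal{U})$ as written in \eqref{dual_entrop}, including the additive constant (since $\delta^2 \cdot (1/\delta^2)=1$). Uniqueness of Sinkhorn optimizers up to additive constants therefore gives
\begin{equation*}
g^{\mathrm{P}}_{T_\delta\sharp\mathcal{U}}(\delta\u) \;=\; \delta^2\, g^{T_{1/\delta}\sharp\mathrm{P}}_{\mathcal{U},\,\delta}(\u) \;+\; c,
\end{equation*}
for some $c\in\mathbb{R}$. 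To fix $c$, I would integrate both sides against $\mathcal{U}$: the right-hand side vanishes by the centering convention of $g^{T_{1/\delta}\sharp\mathrm{P}}_{\mathcal{U},\,\delta}$ w.r.t.\ $\mathcal{U}$, while
$\int g^{\mathrm{P}}_{T_\delta\sharp\mathcal{U}}(\delta\u)\,d\mathcal{U}(\u) = \int g^{\mathrm{P}}_{T_\delta\sharp\mathcal{U}}(\y)\,d(T_\delta\sharp\mathcal{U})(\y) = 0$
by the centering of $g^{\mathrm{P}}_{T_\delta\sharp\mathcal{U}}$ w.r.t.\ $T_\delta\sharp\mathcal{U}$. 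Hence $c=0$, and the analogous identity holds for $\mathrm{Q}$.

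To conclude, writing $\U_\delta = \delta\U$ with $\U \sim \mathcal{U}$ and substituting the two scaling identities yields
\begin{align*}
&\operatorname{Var}_{\U_\delta \sim T_\delta\sharp\mathcal{U}}\!\bigl(g^{\mathrm{P}}_{T_\delta\sharp\mathcal{U}}(\U_\delta) - g^{\mathrm{Q}}_{T_\delta\sharp\mathcal{U}}(\U_\delta)\bigr) \\
&\qquad = \operatorname{Var}_{\U \sim \mathcal{U}}\!\bigl(\delta^2\, g^{T_{1/\delta}\sharp\mathrm{P}}_{\mathcal{U},\,\delta}(\U) - \delta^2\, g^{T_{1/\delta}\sharp\mathrm{Q}}_{\mathcal{U},\,\delta}(\U)\bigr) \\
&\qquad = \delta^4\, \operatorname{Var}_{\U \sim \mathcal{U}}\!\bigl(g^{T_{1/\delta}\sharp\mathrm{P}}_{\mathcal{U},\,\delta}(\U) - g^{T_{1/\delta}\sharp\mathrm{Q}}_{\mathcal{U},\,\delta}(\U)\bigr),
\end{align*}
which is the asserted identity. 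The main obstacle I expect is keeping clean books on the two independent rescalings—of the underlying measures and of the dual test functions—so that the transformed dual is recognized as exactly the Sinkhorn dual at parameter $\epsilon = 1/\delta^2$, rather than an affine reparametrization of it. Once this identification is in hand, fixing the additive constant via the centering convention is routine.
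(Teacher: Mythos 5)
Your proof is correct and amounts to the same change-of-variables argument the paper uses: the paper verifies that the rescaled pair $\bigl(\delta^{-2}f^{\mathrm{P}}_{T_\delta\sharp\mathcal{U}}(\delta\cdot),\,\delta^{-2}g^{\mathrm{P}}_{T_\delta\sharp\mathcal{U}}(\delta\cdot)\bigr)$ satisfies the Schr\"odinger optimality conditions for $S_{1/\delta^2}(T_{1/\delta}\sharp\mathrm{P},\mathcal{U})$, whereas you match the dual functionals themselves under the same rescaling and invoke uniqueness of optimizers up to an additive constant. Both routes yield the identity $g^{\mathrm{P}}_{T_\delta\sharp\mathcal{U}}(\delta\u)=\delta^2 g^{T_{1/\delta}\sharp\mathrm{P}}_{\mathcal{U},\delta}(\u)$ and then pin down the additive constant by the centering convention before substituting into the variance, so this is essentially the paper's proof in a slightly different packaging.
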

    
    For generic changes of reference distribution, the following proposition quantifies the corresponding kernel changes.
    \begin{proposition} \label{proposition:control:change:reference:U}
    Assume that $\Omega$ is compact. Let $s \in \mathbb{N}$.
    There exists a constant $c(\Omega,d,\epsilon,s)$ such that for $\mathcal{U}, \mathcal{U}', {\rm P}, {\rm Q} \in \mathcal{P}(\Omega)$, 
    \begin{multline*}
        | K_\mathcal{U}({\rm P},{\rm Q})- K_{\mathcal{U}'}({\rm P},{\rm Q})| \leq 2 {\rm diam}(\Omega) \| \mathcal{U}-\mathcal{U}'\|_s \\
+    c(\Omega,d,\epsilon,s) \left( \| \mathcal{U}-\mathcal{U}'\|_s  \|{\rm P}-{\rm Q}\|_s\right)^{1/2}.
    \end{multline*}

    \end{proposition}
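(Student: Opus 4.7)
The plan is to control $|F(A)-F(B)|$ where $A := \|g^{\rm P}_{\mathcal{U}}-g^{\rm Q}_{\mathcal{U}}\|_{L^2(\mathcal{U})}$ and $B := \|g^{\rm P}_{\mathcal{U}'}-g^{\rm Q}_{\mathcal{U}'}\|_{L^2(\mathcal{U}')}$. Since, on a compact $\Omega$, both $A$ and $B$ lie in a bounded interval on which any usual $F$ (square exponential, power exponential, Mat\'ern) is Lipschitz, with Lipschitz constant absorbable into $c(\Omega,d,\epsilon,s)$, it suffices to control $|A-B|$. I would insert the hybrid quantity $C := \|g^{\rm P}_{\mathcal{U}}-g^{\rm Q}_{\mathcal{U}}\|_{L^2(\mathcal{U}')}$ and use $|A-B|\leq |A-C|+|C-B|$. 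This decouples the effect of swapping the integrating measure in the $L^2$ norm ($|A-C|$) from that of swapping the potentials themselves ($|C-B|$).

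To bound $|A-C|$, I use the elementary inequality $|\sqrt{x}-\sqrt{y}|\leq\sqrt{|x-y|}$ to get
\[
|A-C| \;\leq\; \sqrt{\Bigl|\textstyle\int (g^{\rm P}_{\mathcal{U}}-g^{\rm Q}_{\mathcal{U}})^2\,d(\mathcal{U}-\mathcal{U}')\Bigr|}.
\]
The integral inside is then controlled by the duality pairing $|\int h\,d(\mathcal{U}-\mathcal{U}')|\leq\|h\|_{\mathcal{C}^s(\Omega)}\|\mathcal{U}-\mathcal{U}'\|_s$ applied to $h=(g^{\rm P}_{\mathcal{U}}-g^{\rm Q}_{\mathcal{U}})^2$. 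Uniform $\mathcal{C}^s$-regularity of Sinkhorn potentials on a compact $\Omega$, combined with a pointwise strengthening of Proposition \ref{Lemma:cuadratic} (yielding $|g^{\rm P}_{\mathcal{U}}-g^{\rm Q}_{\mathcal{U}}|\leq c'\|{\rm P}-{\rm Q}\|_s$ uniformly), gives $\|h\|_{\mathcal{C}^s(\Omega)}\leq c''\|{\rm P}-{\rm Q}\|_s$ with $c',c''$ absorbed into $c(\Omega,d,\epsilon,s)$. This produces the $c(\Omega,d,\epsilon,s)(\|\mathcal{U}-\mathcal{U}'\|_s\|{\rm P}-{\rm Q}\|_s)^{1/2}$ summand.

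For $|C-B|$, the reverse triangle inequality in $L^2(\mathcal{U}')$ gives
\[
|C-B| \;\leq\; \|g^{\rm P}_{\mathcal{U}}-g^{\rm P}_{\mathcal{U}'}\|_{L^2(\mathcal{U}')}+\|g^{\rm Q}_{\mathcal{U}}-g^{\rm Q}_{\mathcal{U}'}\|_{L^2(\mathcal{U}')}.
\]
Each summand measures the stability of the $g$-potential under a perturbation of the reference marginal. The symmetry of the dual problem \eqref{dual_entrop}, swapping the roles of $\mathcal{U}$ and ${\rm P}$, turns $g^{\rm P}_{\mathcal{U}}$ into the $f$-potential of the reverse Sinkhorn problem; an analogue of Proposition \ref{Lemma:cuadratic} applied to the pair $(\mathcal{U},\mathcal{U}')$ with fixed marginal ${\rm P}$ then yields a bound linear in $\|\mathcal{U}-\mathcal{U}'\|_s$. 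Passing from $L^2(\mathcal{U}')$ to an $L^\infty$ estimate and using that Sinkhorn potentials on $\Omega$ are bounded in $L^\infty$ by $\operatorname{diam}(\Omega)$ (up to constants absorbed in the $\mathcal{C}^s$ regularity) delivers each summand $\leq\operatorname{diam}(\Omega)\|\mathcal{U}-\mathcal{U}'\|_s$, hence the full $2\operatorname{diam}(\Omega)\|\mathcal{U}-\mathcal{U}'\|_s$ contribution.

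The main obstacles I expect lie in two regularity inputs used above: (i) the uniform $\mathcal{C}^s$-regularity of Sinkhorn potentials and of their squared differences, with constants depending only on $\Omega,d,\epsilon,s$ and not on the input measures; and (ii) the symmetry/role-swapping argument that upgrades Proposition \ref{Lemma:cuadratic} to a stability statement for $g^{\rm P}_{\mathcal{U}}$ as a function of $\mathcal{U}$ (rather than of the second marginal as stated). Both leverage the analyticity of entropic potentials on compact domains, but require careful bookkeeping so that all dimensional and regularity constants remain absorbable into a single $c(\Omega,d,\epsilon,s)$.
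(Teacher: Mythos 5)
Your proposal is correct and follows essentially the same strategy as the paper's proof: insert a hybrid quantity to decouple the change of reference measure inside the $L^2$ norm from the change of the potentials themselves, bound the former via the $\mathcal{C}^s$-duality pairing together with the elementary inequality $\sqrt{a+b}\leq\sqrt{a}+\sqrt{b}$ (equivalently $|\sqrt{x}-\sqrt{y}|\leq\sqrt{|x-y|}$) and a Leibniz-rule control of $\|(g^{\rm P}-g^{\rm Q})^2\|_{\mathcal{C}^s(\Omega)}$, and bound the latter via the $L^\infty$ stability of entropic potentials under perturbation of the reference, which the paper takes from \cite[Theorem 4.5]{Barrio2022AnIC}. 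The only superficial difference is the choice of hybrid---you vary the integrating measure first, inserting $\|g^{\rm P}_{\mathcal{U}}-g^{\rm Q}_{\mathcal{U}}\|_{L^2(\mathcal{U}')}$, whereas the paper varies the potentials first, inserting $\|g^{\rm P}_{\mathcal{U}'}-g^{\rm Q}_{\mathcal{U}'}\|_{L^2(\mathcal{U})}$---and the two regularity inputs you flag as ``obstacles'' are exactly the points where the paper simply invokes that external reference rather than re-deriving them.
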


%%%%%%%%%%%%%%%%%%%%%%%%%%%%%%%%%%%%%%%%%%%%%%%%%%%
%%%%%%%%%%%%%%%%%%%%%%%%%%%%%%%%%%%%%%%%%%%%%%%%%%%
%%%%%%%%%%%%%%%%%%%%%%%%%%%%%%%%%%%%%%%%%%%%%%%%%%%

\section{Gaussian processes using Sinkhorn's potential kernel} \label{s:GP}

Let us recall that a GP $(Z(x))_{x \in E}$ indexed by a set $E$ is entirely characterised by its mean and covariance functions. Its covariance function is defined by $(x,y) \in E^2 \mapsto \mathrm{Cov}(Z(x),Z(y))$. In this section we consider the GP on distributions defined by the Sinkhorn's potential Kernel  $K$ with  $\mathrm{Cov}(Z(P),Z(Q))=K(P,Q)$ with $K$ as in Theorem \ref{theorem:F:kernel}.  We study its property in this section. 

\subsection{Continuity of the Gaussian process}

For any positive definite kernel, a GP is guaranteed to exist having this kernel as covariance function. Nevertheless, this GP is defined only as a collection of Gaussian variables, and not necessarily as a random continuous function. Being able to define a GP as a random continuous function is at the same time satisfying from a functional Bayesian point of view, and also technically useful to tackle advanced convergence results, see for instance \cite{bect2019supermartingale}. Next, we establish the existence of a continuous GP with our kernel construction, under mild regularity assumptions on the space of input probability measures.

For a set $S \subset \mathbb{R}^d$, we let $\partial S$ be its boundary and for $\bt \in \mathbb{R}^d$, we let $d(\bt,S)$ be the smallest distance between $\bt$ and an element of $S$.  
\begin{proposition} \label{proposition:continuity}
Let $\Omega$ be compact with non-empty interior. Let $F$ in \eqref{eq:kernel} satisfy $|F(t)- F(0)| \leq A |t|^a$ for constants $0<A<\infty$ and $0 < a \leq 1$ and for $t \geq 0$.
Let $b >0$ be fixed.
Let $\mathcal{P}_{\delta}$ be the set of distributions ${\rm P}$ on $\Omega$ that have a continuous density $p$ w.r.t. Lebesgue measure, such that $p$ is zero on $\{ \x \in \Omega , d(\x , \partial \Omega)  \leq b\}$.
Consider $\mathcal{P}_{\delta}$ as a metric space with the 1-Wasserstein distance $\mathcal{W}_1$.  
Then there exists a GP $Z$ on $\mathcal{P}_{\delta}$ with covariance function as in \eqref{eq:kernel} that is almost surely continuous on $\mathcal{P}_{\delta}$.
\end{proposition}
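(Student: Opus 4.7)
The plan is to verify Dudley's classical continuity criterion for Gaussian processes on a pseudo-metric space: for the canonical pseudo-metric $d_Z(\mathrm{P},\mathrm{Q}) := \sqrt{\mathbb{E}[(Z(\mathrm{P})-Z(\mathrm{Q}))^2]}$, it suffices to establish that $\mathcal{P}_\delta$ is $d_Z$-totally bounded and that the entropy integral $\int_0^\infty \sqrt{\log N(\varepsilon,\mathcal{P}_\delta,d_Z)}\,d\varepsilon$ is finite; this produces a version of $Z$ whose sample paths are $d_Z$-uniformly continuous on $\mathcal{P}_\delta$, and in turn $\mathcal{W}_1$-continuous once $d_Z$ is dominated by $\mathcal{W}_1$.

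First I would control $d_Z$ by $L^2(\mathcal{U})$ and by $\mathcal{W}_1$. Using $K(\mathrm{P},\mathrm{P})=K(\mathrm{Q},\mathrm{Q})=F(0)$ together with the Hölder assumption on $F$,
\[
d_Z(\mathrm{P},\mathrm{Q})^2 = 2\bigl(F(0) - F(\|g^\mathrm{P}_\mathcal{U} - g^\mathrm{Q}_\mathcal{U}\|_{L^2(\mathcal{U})})\bigr) \le 2A\,\|g^\mathrm{P}_\mathcal{U} - g^\mathrm{Q}_\mathcal{U}\|_{L^2(\mathcal{U})}^a.
\]
Proposition~\ref{Lemma:cuadratic} (and the $\mathcal{W}_1$ variant in the remark following it) upgrades this to $d_Z(\mathrm{P},\mathrm{Q}) \le C\,\mathcal{W}_1(\mathrm{P},\mathrm{Q})^{a/2}$. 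Since $\Omega$ is compact, $(\mathcal{P}(\Omega),\mathcal{W}_1)$ is compact (tightness is automatic, and $\mathcal{W}_1$-convergence coincides with weak convergence), so $\mathcal{P}_\delta$ is $\mathcal{W}_1$- and hence $d_Z$-totally bounded, and $d_Z$ is $\mathcal{W}_1$-continuous.

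The nontrivial step is the entropy bound: a crude use of $d_Z \lesssim \mathcal{W}_1^{a/2}$ together with the $\mathcal{W}_1$-covering numbers of $\mathcal{P}(\Omega)$ (which grow like $\exp(c/\varepsilon^d)$) makes Dudley's integral diverge for $d \ge 1$. Instead, I would work inside the image set $\{g^\mathrm{P}_\mathcal{U} : \mathrm{P} \in \mathcal{P}_\delta\} \subset L^2(\mathcal{U})$. Rewriting the optimality condition \eqref{eq:optimality:conditions:un} as
\[
g^\mathrm{P}_\mathcal{U}(\y) = -\epsilon\log\!\int_\Omega e^{(f^\mathrm{P}_\mathcal{U}(\x) - \frac{1}{2}\|\x-\y\|^2)/\epsilon}\,d\mathrm{P}(\x)
\]
and differentiating repeatedly in $\y$ under the integral (using compactness of $\Omega$ and a uniform $L^\infty$ bound on $f^\mathrm{P}_\mathcal{U}$ in the spirit of \cite{MenaWeed}), one shows that $\{g^\mathrm{P}_\mathcal{U}\}_{\mathrm{P}\in\mathcal{P}_\delta}$ lies in a bounded ball of $\mathcal{C}^k(\Omega)$ for every $k\in\mathbb{N}$. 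The Kolmogorov--Tikhomirov entropy estimate $\log N(\varepsilon, B_{\mathcal{C}^k(\Omega)}, L^2(\mathcal{U})) \lesssim \varepsilon^{-d/k}$, pushed back through Step 1, then gives $\log N(\varepsilon, \mathcal{P}_\delta, d_Z) \lesssim \varepsilon^{-2d/(ak)}$. Choosing any integer $k > d/a$ makes $\sqrt{\log N}$ integrable at $0^+$, and Dudley's theorem produces the desired continuous modification.

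The main obstacle is precisely the uniform regularity step: propagating a single $L^\infty$ bound on $f^\mathrm{P}_\mathcal{U}$ through repeated differentiation of the log-integral, while keeping all constants uniform in $\mathrm{P} \in \mathcal{P}_\delta$. This is where the defining conditions of $\mathcal{P}_\delta$ (continuous density vanishing within distance $b$ of $\partial\Omega$) are invoked, ruling out the concentration and boundary pathologies that would otherwise destroy the uniform-in-$\mathrm{P}$ control of $g^\mathrm{P}_\mathcal{U}$ and its derivatives.
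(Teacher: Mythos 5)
Your proposal is correct in its architecture (Dudley entropy bound for the canonical pseudo-metric, domination of $d_Z$ by $\mathcal{W}_1$, entropy of a Hölder ball), but it takes a genuinely different route from the paper's proof. The paper bounds $d_Z(\mathrm{P},\mathrm{Q})$ by $\|\mathrm{P}-\mathrm{Q}\|_{s_d}$ and then \emph{transfers smoothness to the densities} $p,q$: it multiplies the $\mathcal{C}^{s_d}$ test function by a smooth cutoff that vanishes near $\partial\Omega$, extends to a bounding hyper-rectangle, and performs a multi-dimensional integration by parts so that the iterated antiderivatives $I^{(\alpha_d,d)}p$ land in a fixed $\mathcal{C}^{\alpha_d-1}$-ball; the Kolmogorov--Tikhomirov / van der Vaart--Wellner entropy estimate is then applied to that ball. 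This is precisely where the hypothesis that $p$ vanishes within distance $b$ of $\partial\Omega$ is used: it guarantees that the cutoff does not change $\int f\,(p-q)$ and that no boundary terms appear in the integration by parts. Your route instead pushes the entropy estimate onto the image set $\{g^{\mathrm{P}}_{\mathcal{U}}\}$ directly, using the intrinsic smoothness of Sinkhorn potentials. This is cleaner and, if carried out, even stronger: the uniform $\mathcal{C}^k$-bound on $g^{\mathrm{P}}_{\mathcal{U}}$ that you need only requires compactness of $\Omega$ and a uniform $L^\infty$ bound on $f^{\mathrm{P}}_{\mathcal{U}}$ (available over all of $\mathcal{P}(\Omega)$, as in \cite{MenaWeed}) — the integrand $e^{(f^{\mathrm{P}}_{\mathcal{U}}(\x)-\frac{1}{2}\|\x-\y\|^2)/\epsilon}$ and all its $\y$-derivatives are uniformly bounded, and the integral is bounded below by a positive constant uniformly in $\mathrm{P}$, so $-\epsilon\log(\cdot)$ inherits uniform $\mathcal{C}^k$-bounds.

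One point in your write-up is therefore inaccurate, though it does not invalidate the argument: your last paragraph attributes the need for $\mathcal{P}_{\delta}$ to ``ruling out the concentration and boundary pathologies that would otherwise destroy the uniform-in-$\mathrm{P}$ control of $g^\mathrm{P}_\mathcal{U}$ and its derivatives.'' That is not where $\mathcal{P}_{\delta}$ enters: the uniform regularity of the potentials holds without any assumption on the densities of $\mathrm{P}$, and indeed your proof sketch never actually uses the boundary-vanishing condition. The hypothesis on $\mathcal{P}_{\delta}$ is a feature of the \emph{paper's} integration-by-parts strategy, not of yours; your approach, once the regularity step is fleshed out, yields the continuity result on all of $(\mathcal{P}(\Omega),\mathcal{W}_1)$. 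I would rewrite that closing remark to make clear that $\mathcal{P}_{\delta}$ is not actually needed in your argument, and to isolate the genuine remaining work: a rigorous derivation of the uniform $\mathcal{C}^k$ bound on $g^{\mathrm{P}}_{\mathcal{U}}$, which is stated in the literature but should be proved or cited precisely.
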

The  proof of Proposition~\ref{proposition:continuity} is based on a control of the covering numbers of the canonical distance defined through the covariance function in \eqref{eq:kernel}.  A multi-dimensional integration by part allows us to upper bound this quantity by the covering numbers of  $\mathcal{C}^{\left\lceil \frac{d}{a}+1 \right\rceil}(\Omega)$, which is enough for the continuity of the process (see  \cite[Theorem 2.7.1]{vandervaart2013weak} and \cite[Theorem 1.1]{adler1990introduction}).

\subsection{Estimation of the parameters and prediction}
\label{section:GP:estimation:prediction}
{\bf Parametrization of the kernel.} 
The kernel is
$$K_{\btheta,\u}({\rm P},{\rm Q})=F_{\btheta} (\|g_{\u}^{\rm P}-g_{\u}^{\rm Q}\|_{L^2(\mathcal{U})} ),$$
where $F_{\btheta}$ is the function $F$ in Theorem \ref{theorem:F:kernel}, depending on the vector of covariance parameters $\btheta$. For instance for the square exponential covariance function, $\btheta$ consists in a scalar variance and length scale.
Furthermore, the Hilbertian embedding yielding $g_{\u}^{\rm P}$ and $g_{\u}^{\rm Q}$ depends on the choice of the  reference measure $\mathcal{U}$ (see Section \ref{subsection:construction:positive:kernel}). This choice is indexed by a vector $\u$. For instance, in our numerical experiments, $\mathcal{U}$ will be a discrete measure and $\u$ gathers the support points and weights. 
The presentation of (standard) likelihood methods for selecting ${\btheta,\u}$ in regression and classification, together with a discussion on microergodicity, are given in the Appendix, for the sake of brevity.

{\bf Prediction.}
The GP framework enables to predict the outputs corresponding to new input probability measures, by using conditional distributions given observed outputs.
This is reviewed in the Appendix for regression and classification.

%%%%%%%%%%%%%%%%%%%%%%%%%%%%%%%%%%%%%%%%%%%%%%%%%%%
%%%%%%%%%%%%%%%%%%%%%%%%%%%%%%%%%%%%%%%%%%%%%%%%%%%
%%%%%%%%%%%%%%%%%%%%%%%%%%%%%%%%%%%%%%%%%%%%%%%%%%%

\section{Implementation and experiments} \label{s:implementation}
%\subsection{Implementation details}
\begin{figure*}%[!ht]
    \centering
    \includegraphics[scale=0.20]{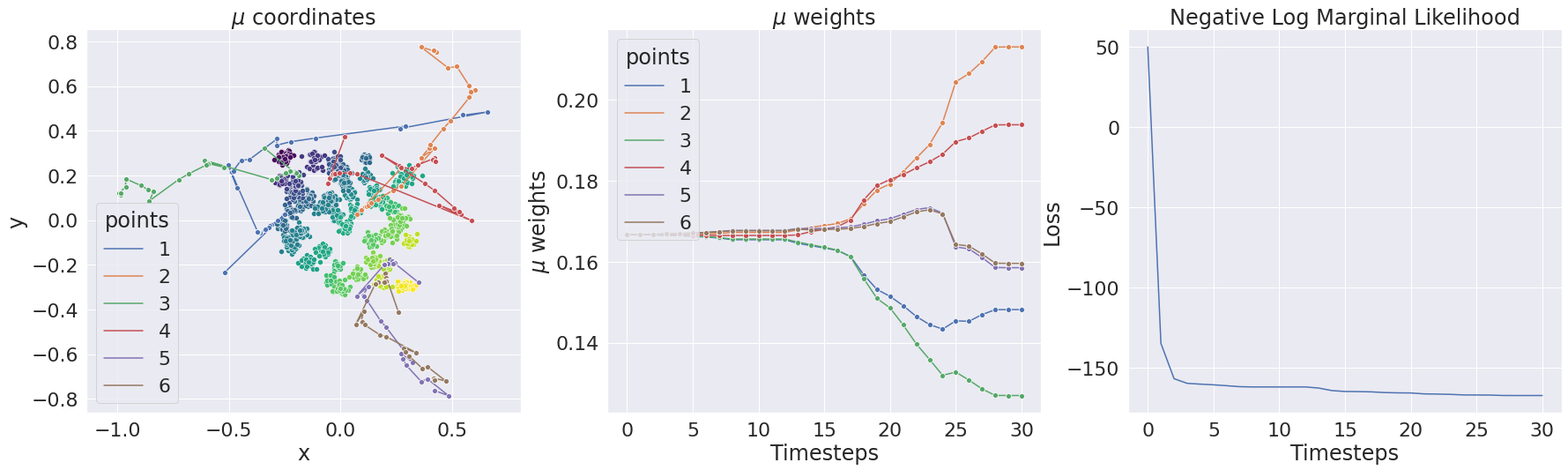}
    \caption{Toy example. \textbf{Left}: 50 point clouds of the train set, with color scale depending on random field $Z$. Trajectory of the points ${\bf x}_i$ of $\u$ depicted in different colors. \textbf{Center}: evolution of the weights $\bf w$ of $\u$ during training. \textbf{Right}: evolution of Negative Log Marginal Likelihood during training.}
    \label{fig:toy_example}
\end{figure*}

{\bf {Parametrization of the reference measure $\mathcal{U}$}.}
%\subsection{Parametrization of the reference measure $\mathcal{U}$}
%\label{subsection:mu:discrete}
We chose a suitable machine representation for $\mathcal{U}$ (see Section \ref{section:GP:estimation:prediction}) as a weighted sum of Diracs:
$$\mathcal{U}=\sum_{i=1}^{q}w_i\delta(\x_i)\text{ with }
\sum_{i=1}^q w_i =1,
w_i \geq 0,
{\bf x}_i\in\R^d.$$
In this form $\mathcal{U}$ is not absolutely continuous w.r.t. Lebesgue measure, however the kernel remains strictly positive definite thanks to Remark \ref{Remmark:discrete}.  
The parameters $\u$ for $\mathcal{U}$ gather $w_1,\ldots,w_q,\x_1,\ldots,\x_q$. The procedure for the estimation of $\u,\btheta$ is sketched in Algorithm~\ref{alg:sinkhornu}.  

{\bf Gradient computations.}
We will use the
L-BFGS method for optimization \cite{liu1989limited}. This requires the gradients of the likelihood function in regression and classification w.r.t. $\btheta$ and $\u$. The derivatives of relevant quantities w.r.t. $\btheta$ can be found in the literature, see for instance \cite{rasmussen2006gaussian}. A specificity of $\u$ is that for some measures ${\rm P}, {\rm Q}$, we need to differentiate $\|g_{\u}^{\rm P}-g_{\u}^{\rm Q}\|_{L^2(\mathcal{U})}$ w.r.t. $\u$, that is we need to differentiate regularized OT plans. This is possible either by back-propagating through unrolled Sinkhorn iterations~\cite{genevay2018learning}, or by using implicit differentiation~\cite{eisenberger2022unified}. In practice we noticed that, while being slower, unrolling of Sinkhorn iterates was actually more stable numerically.  

{\bf {Software framework used.}}
%\subsection{Software framework used}
For automatic support of autodifferentiation, we use Jax framework~\cite{jax2018github} with libraries GPJax~\cite{Pinder2022} to implement GP regression, OTT-Jax~\cite{cuturi2022optimal} for differentiable Sinkhorn algorithm, and Jaxopt~\cite{jaxopt_implicit_diff} for optimization with L-BFGS. The computation of inverse covariance matrices is done efficiently using Cholesky decomposition~\cite{press2007numerical}, which allows efficient computation of \textit{matrix inverse-vector products} without materializing the inverse in memory. The computations are performed in \textit{float32} arithmetic and take advantage of GPU for matrix operations, that are the bottleneck of the algorithm.  

{\bf Other numerical aspects.}
For $\u$, the point coordinates are parameterized as ${\bf x}= S \tanh{(\tilde {\bf x})}$ with $S\in\R$ to ensure they remain bounded, the weights are parametrized as $\bf w=\text{softmax}(\tilde {\bf w})$ to ensure they represent a valid probability distribution. The dual variables $g^{\rm P}_{\u}$ computed at each time step during the optimization of $\u$ are cached to speed-up Sinkhorn iterations: this strategy is reasonable since when $\u$ and $\u'$ are close then the dual variables $g^{\rm P}_{\u}$ and $g^{\rm P}_{\u'}$ are close too. \\

{\bf Computational cost of $\u$-Sinkhorn kernels.}
%\vspace{-0.2cm}
%\subsubsection{Computational cost of $\u$-Sinkhorn kernels}
We denote by $|\u|$ the size of the support of $\u$ (written $q$ above). For another point cloud of size $n$, according to \cite{altschuler2017near,dvurechensky2018computational} the time complexity of Sinkhorn algorithm is $\O(\frac{n|\u|\log{(n|\u|)}}{\epsilon^2})$ to reach precision $\epsilon$, while the complexity of Maximum Mean Discrepancy (MMD) kernel is $\O(n^2)$. It follows that for a reference measure with  $|\u| \ll n$ with small support the runtime cost of Sinkhorn $\u$-kernel becomes competitive. Runtime against MMD is reported in Table~\ref{tab:runtimecost} (in appendix), with a speed-up of up to 100 for our method.  
  
Once $\u$ is chosen, the embeddings $g^{\rm P}_{\u}$ can be pre-computed once for all for each point cloud ${\rm P}_1,\ldots,{\rm P}_n$ and used as a low dimension embedding of $\mathcal{P}(\Omega)$ into $\R^{|\u|}$. The distribution support $|\u|$ needs to be big enough to capture the similarities between the ${\rm P}_i$s up to the precision required by the task, but does not need to be bigger (see Section~\ref{subsec:toy}).  

%% TODO: numerically all matrices are invertible

 %%%%%%%%%%%%%%%%%%%%%%%%%%%%%%%%%%%%
 %%%%%%%%%%%%% FIGURES %%%%%%%%%%%%%%
 %%%%%%%%%%%%%%%%%%%%%%%%%%%%%%%%%%%%

\begin{table}[]
    \centering
    \small
    \begin{tabular}{c|ccc|c}
         Task & $|\u|$ & $m$ & Ours & \cite{bachoc2020gaussian} \\ % MAE & RMSE
         \hline
         Toy example & 6 & 30 & {\bf 0.997} & 0.81 % 0.026 & 0.029
    \end{tabular}
    \caption{Explained Variance Score (EVS) on the test set for regression tasks, with train set of size $n=50$ in dimension $d$=2. $|\u|$: dimension of the embedding. $m$: cloud size.}
    \label{tab:regressionscores}
\end{table}

\begin{table}[]
    \centering
    \footnotesize
    \begin{tabular}{c|cc|cc}
         Task & $|\u|$ & Ours & RBF\\
         \hline
         % ``4'' versus ``6'' & 100 & 2 & 8 & 584 & 0.94
         ``4'' vs ``6'' & 4 & $94.2\pm 1.2$ & \xmark \\
         ``4'' vs ``6'' & 5 & $95.5\pm 1.0$ & \xmark \\
         ``4'' vs ``6'' & 6 & $95.0\pm 0.6$ & $98.8\pm 0.2$\\
         ``shirt'' vs ``sandals'' & 12 & $99.5\pm 0.2$ & $99.7\pm 0.2$\\
         ``sneakers'' vs ``sandals'' & 12 & $88.6\pm 1.8$ & $91.9\pm 1.2$
    \end{tabular}
    \caption{Test Accuracy for \textbf{classification} tasks, with train set of size $n=200$ in dimension $d$=2 with clouds of size $m=24\times 24=576$. $|\u|$: dimension of the embedding. We compare against Radial Basis Function (RBF). Average over $25$ runs.}
    \label{tab:classificationscores}
\end{table}

\begin{figure*}
    \centering
    \includegraphics[scale=0.20]{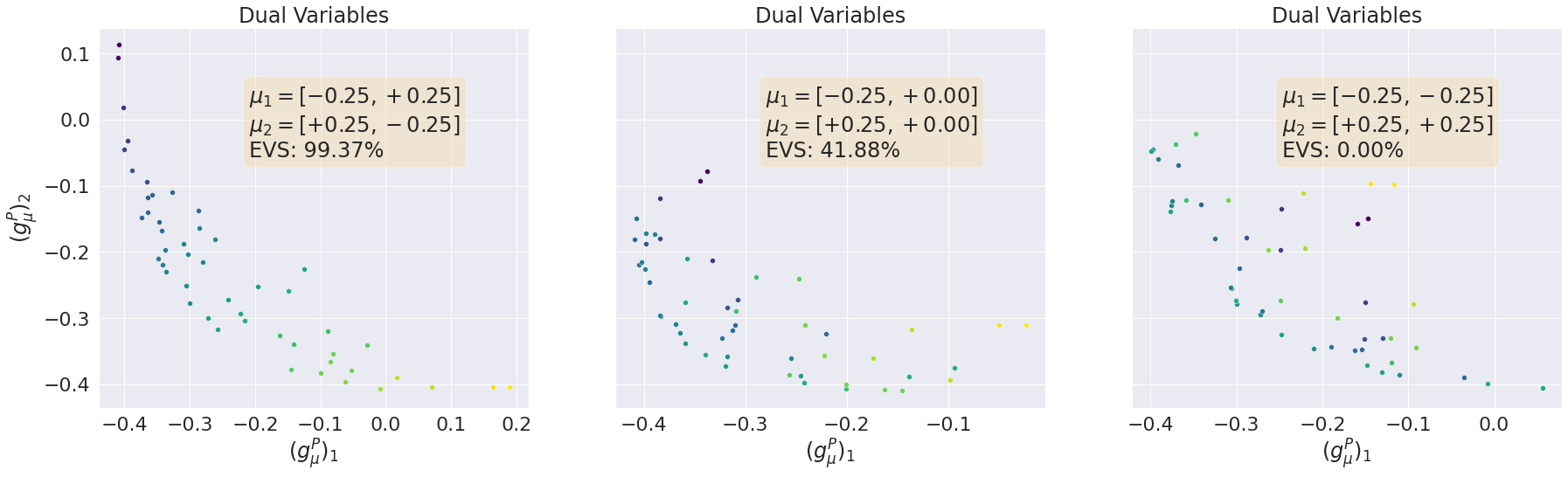}
    \caption{Role of $\u$ in quality of embeddings when $|\u|=2$ for example of Section~\ref{subsec:toy}. Each dot is the 2D embedding of a Gaussian where the color depends on the random field $Z$. \textbf{Left}: optimal choice for $\u$ that ensures the task can be solved. \textbf{Center}: sub-optimal choice for $\u$. \textbf{Right}: bad choice of $\u$ that prevents learning.}
    \label{fig:murole}
\end{figure*}
\vspace{-0.2cm}
%\section{Experiments} 
\vspace{-0.2cm}
\label{s:expe}

\label{subsec:toy}

\subsection{Regression on toy example of~\cite{bachoc2020gaussian}} \vspace{-0.2cm}
In this section we re-use the example introduced in Section 5.3 of \cite{bachoc2020gaussian}. We simulate 100 random two-dimensional isotropic Gaussian distributions. The means are sampled uniformly from $[-0.3,0.3]^2$, and the variance uniformly from $[0.01^2,0.02^2]$. The value of the random field induced by a Gaussian of means $(m_1,m_2)$ and variance $\sigma^2$ is $Z=\frac{(m_1+0.5-(m_2+0.5)^2)}{1+\sigma}$. Gaussians are approximated by point clouds of size 30 sampled from the distribution. The data set is splitted into train (50 clouds) and test (50 clouds). The measure $\u$ consists of $6$ points on the ball of radius $0.5$. Their position $\x_i$ and weight $w_i$ are trained for $30$ iterations jointly with kernel parameters. Results are highlighted in Figure~\ref{fig:toy_example} and Table~\ref{tab:regressionscores}. The role of $\u$ is investigated in Figure~\ref{fig:murole} with $|\u|=2$: the position of $\x_i$'s make the embedding more or less suitable for the downstream task, as illustrated by Explained Variance Score (EVS) score.  

\begin{figure}
    \centering
    \includegraphics[scale=0.16]{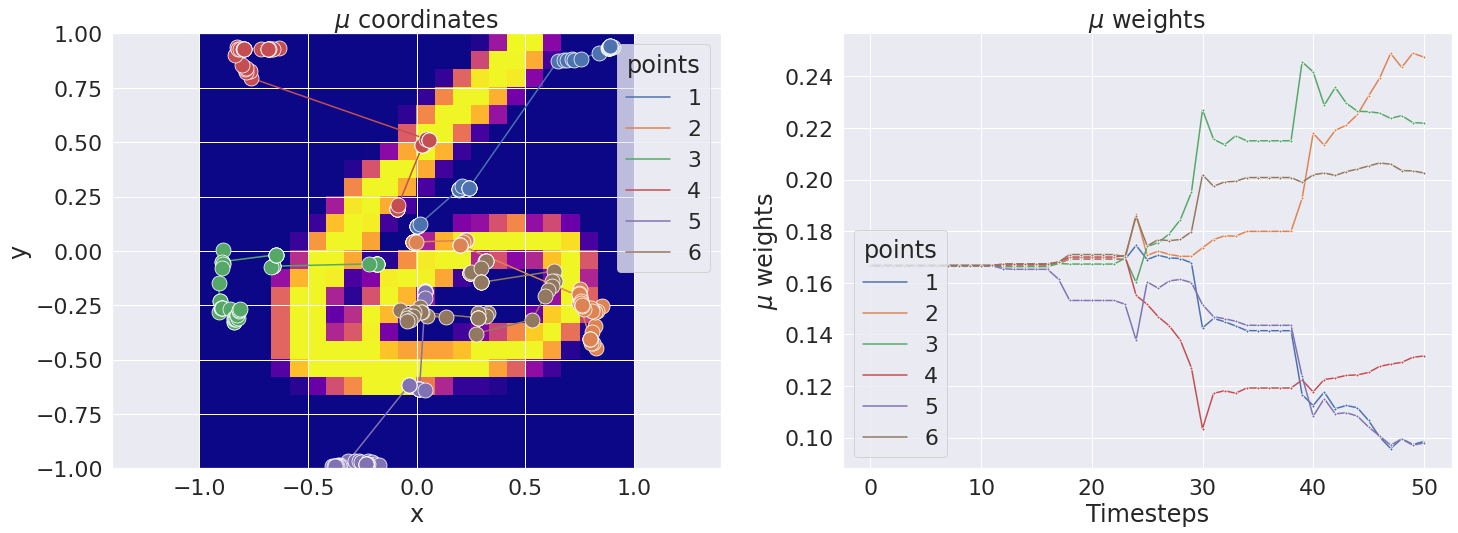}
    \caption{Optimization of $\u$ on Mnist ``4'' versus ``6'' task with $|\u|=6$. An image from the train set is displayed on the background to better grasp the scale of $\u$. Weights $w_i$ and positions $x_i$'s are moved to maximize the log marginal likelihood.}
    \label{fig:mnistexample}
\end{figure}

%\vspace{-0.5cm}
\subsection{Binary classification on Mnist and Fashion-Mnist}

We perform binary classification on Mnist by learning to separate digits ``4'' and ``6''. The dataset consists of $200$ train images, and $1000$ test images. Each $28\times 28$ images is centered crop to $24\times 24$ to generate a cloud of size $576$ matching pixel coordinates. The normalized pixel intensity is used as a weight in OT. The likelihood is modeled with Bernoulli distributions (not Gaussian, see the Appendix on GP classification), and the log marginal likelihood is maximized using maximum a posteriori (MAP) estimates. We tested different sizes for $|\u|\in [4, 5, 6]$. The training is depicted in Figure~\ref{fig:mnistexample}. The experiment is repeated $10$ time with random splits. It shows that Mnist images can be embedded in a space of small dimension that preserves most information about labels, achieving a compression rate of $R=\frac{|\u|}{584}\in [0.006, 0.013]$ tailored for the learning task.  
  
On Fashion-Mnist the ``shirt'' vs ``sandals'' task is surprisingly easy, whereas ``sneakers'' vs ``sandals'' is harder. Embeddings of sizes $8$ and $12$ were sufficient.

\begin{algorithm}
    \caption{Learn Kernel parameters.}
    \begin{algorithmic}[1]
    \STATE \textbf{input} $(P_i,y_i)_{1\leq i\leq N}$: dataset of distributions.
    \STATE \textbf{input} $\btheta_0=(\u_0,\sigma_0,l_0)$: initial parameters.
    \REPEAT
        \FORALL{$P_i$}
        \STATE Solve regularized OT problem between $P_i,\u_t$.
        \STATE Compute Sinkhorn dual potential $g^{P_i}_{\u_t}$.
        \ENDFOR
        \STATE Build Kernel $K_{ij}\defeq l\exp{-\frac{\|g^{P_i}_{\u_t}-g^{P_j}_{\u_t}\|}{2\sigma^2}}$.
        \STATE Compute log marginal likelihood $\mathcal{L}(\u_t,\sigma,l,K,y)$.
        \STATE Compute gradients $\nabla_{(\u,\sigma,l)}\mathcal{L}$ with Auto-Diff.
        \STATE Perform one step of L-BFGS on $(\u_t,\sigma,l)$.
    \UNTIL{convergence of $(\u_t,\sigma,l)$.}
    \STATE \textbf{Return} optimal parameters $(u_{*},\sigma_{*},l_{*})$.
    \end{algorithmic}
    \label{alg:sinkhornu}
\end{algorithm}

%%%%%%%%%%%%%%%%%%%%%%%%%%%%%%%%%%%%%%%%%%%%%%%%%%%%%%%%%%%%%%%%%%%%%%%%%%%
%% \subsection{Graph classification}
%% CANCELED EXPERIMENTS - IT DOES NOT WORK EVEN ON  TOY EXAMPLES :( :( :(
%%%%%%%%%%%%%%%%%%%%%%%%%%%%%%%%%%%%%%%%%%%%%%%%%%%%%%%%%%%%%%%%%%%%%%%%%%%
%\vspace{-0.2cm}
\subsection{Texture classification with C-SVM}
We follow the experimental procedure of~\cite{kolouri2016sliced} on the University of Illinois Urbana Champaign (UIUC) texture dataset~\cite{lazebnik2005sparse}. We transform the images into two dimensional probability distributions by computing the gray-level
co-occurence matrices (GLCM)~\cite{haralick1973textural}. The C-SVM optimization problem is a quadratic programming problem. When the kernel is Positive Definite the associated quadratic form is convex. This guarantees that the algorithm will converge to a global minimum. Our kernel matches the performances of~\cite{kolouri2016sliced} on the same experimental protocol, in Table~\ref{tab:uiucresult} (see Appendix).  
%% Same experiment as https://arxiv.org/pdf/1511.03198.pdf

% \subsection{Persistence diagrams}
%% Same experiment as https://arxiv.org/pdf/1706.03358.pdf

%%%%%%%%%%%%%%%%%%%%%%%%%%%%%%%%%%%%%%%%%%%%%%%%%%%
%%%%%%%%%%%%%%%%%%%%%%%%%%%%%%%%%%%%%%%%%%%%%%%%%%%
%%%%%%%%%%%%%%%%%%%%%%%%%%%%%%%%%%%%%%%%%%%%%%%%%%%

\vspace{-0.2cm}
\section{Conclusion}
In this paper we proposed a new positive definite kernel tailored for distributions. It is universal, allows to embed distributions in a space of smaller dimension controlled by the size of $\u$, and is consistent so it scales with the number of points available to approximate the distribution. Empirically, we showed that the reference measure $\u$ was of crucial importance and could be optimized directly with maximum likelihood. Our numerical experiments also highlight that our kernel yields a similar accuracy as other methods, while providing an important computational speed-up.

%%%%%%%%%%%%%%%%%%%%%%%%%%%%%%%%%%%%%%%%%%%%%%%%%%%
%%%%%%%%%%%%%%%%%%%%%%%%%%%%%%%%%%%%%%%%%%%%%%%%%%%
%%%%%%%%%%%%%%%%%%%%%%%%%%%%%%%%%%%%%%%%%%%%%%%%%%%

%%%%%%%%%%%%%%%%%%%%%%%%%%%%%%%%%%%%%%%%%%%%%%%%%%%%
%%%%%%%%%%%%%%%%%%%%%% WARNING %%%%%%%%%%%%%%%%%%%%%
%%% CITATIONS MUST FOLLOW GUIDELINES OF SAMPLE_PAPER
%%%%%%%%%%%%%%%%%%%%%%% FIXME %%%%%%%%%%%%%%%%%%%%%%

%%%%%%%%%%%%%%%%%%%%%%%%%%%%%%%%%%%%%%%%%%%%%%%%%%%
%%%%%%%%%%%%%%%%%%%%%%%%%%%%%%%%%%%%%%%%%%%%%%%%%%%
%%%%%%%%%%%%%%%%%%%%%%%%%%%%%%%%%%%%%%%%%%%%%%%%%%%
\onecolumn
%\section{Introduction}

%\textcolor{red}{FB: we remove this section or we write something here??}

%%%%%%%%%%%%%%%%%%%%%%%%%%%%%%%%%%%%%%%%%%%%%%%%%%%
%%%%%%%%%%%%%%%%%%%%%%%%%%%%%%%%%%%%%%%%%%%%%%%%%%%
%%%%%%%%%%%%%%%%%%%%%%%%%%%%%%%%%%%%%%%%%%%%%%%%%%%

\section{Proofs and some additional results for Section~\ref{sec:kerneldef}}

    %%% Lemma
    
    \begin{proof}[Proof of Theorem \ref{theorem:F:kernel}]
    The result follows from Proposition 4 and Remark 5 in \cite{bachoc2020gaussian}. 
    \end{proof}
  
    \begin{proof}[Proof of Proposition~\ref{Lemma:cuadratic}]
        The proof can be obtained \emph{mutatis mutandis} from that of the empirical case  \cite[Theorem 4.5]{Barrio2022AnIC}.
    \end{proof}

%%%%%%%%%%%%%%%%%%%%%%%%%%%%%%%%%%%%%%%%%%%%%%%%%%%
%%%%%%%%%%%%%%%%%%%%%%%%%%%%%%%%%%%%%%%%%%%%%%%%%%%
%%%%%%%%%%%%%%%%%%%%%%%%%%%%%%%%%%%%%%%%%%%%%%%%%%%

    \begin{proof}[Proof of Proposition \ref{Lemma:uniqueness}]
    For ease of notation we suppose that $\epsilon=1$. We prove both equivalences at the same time. In any of the assertions, ${\rm P}={\rm Q}$ implies the equality of the potentials in $\R^ d$--defined via the canonical extension given by the optimality condition \eqref{eq:optimality:conditions:un}. On the other hand, let us suppose that $g^{\rm P}(\u)=g^{\rm Q}(\u)$, for $\ell_d$-a.e. $\u\in \mathcal{D}$, for some open set $\mathcal{D}\subset\R^ d$.
    Then $g^{\rm P}=g^{\rm Q}$, $\mathcal{U}$-a.e. by continuity when $\mathrm{supp}(\mathcal{U}) \subset \mathcal{D}$.
   In consequence, the other potentials, obtained by the relations
    \begin{align*}
        &f^{\rm P}(\x)=-\log\left(\int e^{{g^{\rm P}(\y)- \frac{1}{2}\|\x-\y\|^2}}d\mathcal{U}(\y) \right),\ \
        f^{\rm Q}(\x)=-\log\left(\int e^{{g^{\rm Q}(\y)- \frac{1}{2}\|\x-\y\|^2}}d\mathcal{U}(\y) \right),
    \end{align*}
    are also equal. Moreover, since $g^{\rm P}=g^{\rm Q}$ $\mathcal{U}$-a.s. then $e^{g^{\rm P}}=e^{g^{\rm Q}}$ too, and, using the optimally conditions, we have 
    \begin{align*}
        &\int e^{f^{\rm P}(\x)- \frac{1}{2}\|\x-\y\|^2}dP(\x)=e^{g^{\rm P}(\y)}=\int e^{f^{\rm Q}(\x)- \frac{1}{2}\|\x-\y\|^2}dQ(\x), \ \ \text{for $\mathcal{U}$-a.e.  $\y\in \Omega$}.
    \end{align*}
    Moreover, extending $e^{g^{\rm P}(\y)}$ as in Remark~\ref{Remmark:discrete}, we obtain 
     \begin{align*}
        &\int e^{f^{\rm P}(\x)- \frac{1}{2}\|\x-\y\|^2}dP(\x)=e^{g^{\rm P}(\y)}=\int e^{f^{\rm Q}(\x)- \frac{1}{2}\|\x-\y\|^2}dQ(\x), \ \ \text{for all  $\y\in \mathcal{D}$},
    \end{align*}
    so, due to the equality $f^{\rm P}=f^{\rm Q}$ in $\Omega$, we have the equality 
    \begin{align*}
        &\int e^{\langle \x,\y \rangle}e^{f^{\rm P}(\x)- \frac{1}{2}\|\x\|^2}dP(\x)=\int e^{\langle \x,\y \rangle}e^{f^{\rm P}(\x)- \frac{1}{2}\|\x\|^2}dQ(\x), \ \ \text{for all  $\y\in \mathcal{D}$}.
    \end{align*}
    By hypothesis, and without loosing generality, there exists a ball centered in $0$, such that $\mathbb{B}_{\epsilon}(0)\subset \mathcal{D}$.  A \emph{fortiori} $$\int e^{\langle \x,\y \rangle}e^{f^{\rm P}(\x)- \frac{1}{2}\|\x\|^2}dP(\x)=\int e^{\langle \x,\y \rangle}e^{f^{\rm P}(\x)- \frac{1}{2}\|\x\|^2}dQ(\x),$$ for all $\y\in \mathbb{B}_{\epsilon}(0)$. In particular, its  evaluation in  $\y=0$ yields the inequality $\int e^{f^{\rm P}(\x)- \frac{1}{2}\|\x\|^2}dP(\x)=\int e^{f^{\rm P}(\x)- \frac{1}{2}\|\x\|^2}dQ(\x)>0$. The uniqueness of the moment generating function, (see eg. \cite[Theorem 22.]{Bill86}) proves that the probabilities $\frac{e^{f^{\rm P}(\x)- \frac{1}{2}\|\x\|^2}}{\int e^{f^{\rm P}(\x)- \frac{1}{2}\|\x\|^2}dP(\x)}dP(\x)$ and $\frac{e^{f^{\rm P}(\x)- \frac{1}{2}\|\x\|^2}}{\int e^{f^{\rm P}(\x)- \frac{1}{2}\|\x\|^2}dQ(\x)}dQ(\x)$ are equal, so that ${\rm P}={\rm Q}$ too.
    \end{proof}
    
    \begin{proof}[Proof of Corollary \ref{cor:strict:PD:one}]
    Consider two-by-two distinct measures $P_1,\ldots,P_n$. Then from Proposition \ref{Lemma:uniqueness}, the functions $g^{P_1}_{\mathcal{U}},\ldots,g^{P_n}_{\mathcal{U}}$ are two-by-two distinct in $L^2(\mathcal{U})$. Then the matrix 
    \[
    \left[ F(\|g^{\rm P_i}_{\mathcal{U}}-g^{\rm P_j}_{\mathcal{U}}\|_{L^2(\mathcal{U})}) \right]_{1 \leq i,j \leq n}
    \]
    is strictly positive definite from Proposition 4 in \cite{bachoc2020gaussian}.
    \end{proof}
    
    \begin{proof}[Proof of Remark \ref{Remmark:discrete}] 
 The fact that $g^{\rm P}_{\mathcal{U}} = g^{\rm Q}_{\mathcal{U}}$ $\ell_d$-a.e. on $B$ implies $P = Q$   holds from Proposition \ref{Lemma:uniqueness}. Then, strict positive definiteness is shown as in the proof of Corollary \ref{cor:strict:PD:one}.
    \end{proof}
    
    %%% Theorem
   % \begin{theorem} \label{theorem:F:kernel}
  %  Let ${\rm P}$ and ${\rm Q}$ two probability distributions on. Define the function $K: \mathcal{P}\times \mathcal{P} \mapsto \mathbb{R}$ as 
  %  \begin{equation} \label{eq:kernel}
%        K({\rm P},{\rm Q})=F(\|g^{\rm P}-g^{\rm Q}\|_{L^2(\mathcal{U})}).
 %   \end{equation}
  %  Then the two following conditions are sufficient conditions for $K$ to be a positive definite kernel on $\mathcal{P}$. 
    
   % \begin{enumerate}
    %    \item $F(
    %\sqrt{.})$ is completely monotone on $[0, \infty)$. 
%    \item There exists a finite nonnegative Borel measure $\nu$ on $[0, \infty)$ such that for $t \geq 0$
 %   $F(t) = \int_{0}^{\infty}
  %  e^{ - u t^2}
   % \nu(\partial u)$.
    %\end{enumerate}
    %\end{theorem}

    %%% Proposition
  
    \begin{proof}[Proof of Proposition \ref{proposition:universality}]
    First note that  $\mathcal{P}(\Omega)$ is a compact metric space endowed with the Wasserstein distance. Consider the map $\Phi$ from   $ \mathcal{P}(\Omega)$   to the separable Hilbert space $ L^2(\mathcal{U})$ such that for any $P \in \mathcal{P}(\Omega)$, $\Phi(P)=g^P_\mathcal{U}$. This map is continuous w.r.t. Wasserstein distance $\mathcal{W}_1$ from Proposition~\ref{Lemma:cuadratic} and the comment after it. Moreover  Proposition~\ref{Lemma:uniqueness} implies that $\Phi$ is injective. Hence using Theorem 2.2 in \cite{NIPS2010_4e0cb6fb}, we obtain the universality of the kernel.
    \end{proof}

% \begin{proof}
% The proof is consequence of the ongoing work with J-N Weed \cite{}.
% \end{proof}

    %%% Proposition
    
    \begin{proof}[Proof of Proposition \ref{proposition:consistency:empirical}]
    First, using Proposition~\ref{Lemma:cuadratic}, we obtain that for some constant $C_{d,\Omega}$, we obtain the following bounds
    \begin{align}
        \begin{split}
             \label{comparisonPnP1}
     \|g^{{\rm P}_n}-g^{{\rm P}}\|_{L^2(\mathcal{U})}\leq  C_{d,\Omega}\|{\rm P}-{\rm P}_n\|_{s}, \ \text{and} \ \|g^{{\rm Q}_m}-g^{{\rm Q}}\|_{L^2(\mathcal{U})}\leq C_{d,\Omega}\|{\rm Q}-{\rm Q}_m\|_{s},
        \end{split}
    \end{align}
    where, in this case, $s= \lceil{\frac{2}{d}}\rceil+1$.   Moreover, the triangle inequality 
    $$\left|\|g^{{\rm P}_n}-g^{{\rm Q}_m}\|_{L^2(\mathcal{U})}- \|g^{{\rm P}}-g^{{\rm Q}}\|_{L^2(\mathcal{U})}\right|\leq   \|g^{{\rm P}_n}-g^{{\rm P}}\|_{L^2(\mathcal{U})}+ \|g^{{\rm Q}}-g^{{\rm Q}_m}\|_{L^2(\mathcal{U})}  $$
    and \eqref{comparisonPnP1} yield the upper bound
    \begin{align}
        \begin{split}
             \label{comparisonPnP}
     \left|\|g^{{\rm P}_n}-g^{{\rm Q}_m}\|_{L^2(\mathcal{U})}- \|g^{{\rm P}}-g^{{\rm Q}}\|_{L^2(\mathcal{U})}\right|\leq  C_{d,\Omega}\left(\|{\rm P}-{\rm P}_n\|_{s}+\|{\rm Q}-{\rm Q}_m\|_{s}\right).
        \end{split}
    \end{align}
    Hence when $n,m \rightarrow + \infty$, consistency of the empirical distributions and continuity of the function $F$ lead to the consistency of the empirical kernel almost surely.\\
    To obtain the upper bound,  note that using the Assumption on $F$ we have
    \begin{align*}
        \mathbb{E} |  K({{\rm P}_n},{{\rm Q}_m}) - K({\rm P},{\rm Q}) | & = \mathbb{E} |  F (\|g^{{\rm P}_n} -g^{{\rm Q}_m} \|_{L^2(\mathcal{U})} )- F (\|g^{{\rm P}} -g^{{\rm Q}} \|_{L^2(\mathcal{U})}) | \\
        & \leq 
        A
        \mathbb{E} | \|g^{{\rm P}_n} -g^{{\rm Q}_m} \|_{L^2(\mathcal{U})} - \|g^{{\rm P}} -g^{{\rm Q}} \|_{L^2(\mathcal{U})} |^a 
    \end{align*}
    Since $a\in(0,1]$, Jensen's inequality allows us to say that
    \begin{equation} \label{interm}  \mathbb{E} |  K({{\rm P}_n},{{\rm Q}_m}) - K({\rm P},{\rm Q}) |\leq A
    \left(\mathbb{E}  |  \|g^{{\rm P}_n} -g^{{\rm Q}_m} \|_{L^2(\mathcal{U})} - \|g^{{\rm P}} -g^{{\rm Q}} \|_{L^2(\mathcal{U})} | \right)^a. \end{equation}
    Therefore,  \eqref{comparisonPnP} and \eqref{interm} enable to obtain that 
    \begin{align}\label{aoutside}
    \mathbb{E} |  K({{\rm P}_n},{{\rm Q}_m}) - K({\rm P},{\rm Q}) |\leq 
    A
    \left( \mathbb{E}|  \|{\rm P}-{\rm P}_n\|_{s}+\|{\rm Q}-{\rm Q}_m\|_{s}|\right)^a.
    \end{align}
    The rest of the proof follows by classical empirical processes arguments.
    \end{proof}

    \begin{proof}[Proof of Lemma \ref{lemma:invariance:translation}]
    Let $\U\sim \mathcal{U}\in \mathcal{P}_{SG}(\Omega)$ and $\X\sim{\rm P}\in \mathcal{P}_{SG}(\Omega)$. The potentials are (up to additive constants) characterized by the optimally conditions 
    \begin{align*}
    \mathbb{E}\left( e^{ f^{\rm P}_{\mathcal{U}}(\X) + g^{\rm P}_{\mathcal{U}}(\u) - \frac{1}{2}\|\X-\mathbf{u}\|^2} \right) & =1 \quad \mathcal{U}-a.s.\\
    \mathbb{E}\left(  e^{ f^{\rm P}_{\mathcal{U}}(\x) + g^{\rm P}_{\mathcal{U}}(\U) - \frac{1}{2}\|\x-\U\|^2}  \right)&=1 \quad {\rm P}-a.s.
    \end{align*}
    Let $T_0$ be a translation---defined as $\u\mapsto\u+\u_0$---and  $\U_0=T_0(\U)\sim \mathcal{U}_0$, then we claim that 
    $$ 
    f^{\rm P}_{\mathcal{U}_0}(\x)=f^{\rm P}_{\mathcal{U}}(\x)+\langle\u_0,\x\rangle+\frac{3}{4} \|\u_0\|^2\quad \text{and}\quad g^{\rm P}_{\mathcal{U}_0}(\u)=g^{\rm P}_{\mathcal{U}}(\u-\u_0)-\langle\u_0,\u\rangle
    +\frac{3}{4}\|\u_0\|^2$$
    is a pair of OT potentials for $\mathcal{U}_0$. The verification of the optimallity conditions is enough to prove the claim.  On the one hand, note that
    \begin{align*}
        \mathbb{E}\left( e^{ f^{\rm P}_{\mathcal{U}_0}(\X) + g^{\rm P}_{\mathcal{U}_0}(\u') - \frac{1}{2}\|\X-\mathbf{u}'\|^2} \right)= \mathbb{E}\left( e^{ f^{\rm P}_{\mathcal{U}}(\X) +\langle\u_0,\X\rangle+g^{\rm P}_{\mathcal{U}}(\u'-\u_0) -\langle\u_0,\u'\rangle+\frac{3}{2}\|\u_0\|^2- \frac{1}{2}\|\X-\mathbf{u}'\|^2} \right)
    \end{align*}
    and the (evident) change of variables $\u=\u'-\u_0$ yields
    \begin{align*}
        \mathbb{E}\left( e^{ f^{\rm P}_{\mathcal{U}_0}(\X) + g^{\rm P}_{\mathcal{U}_0}(\u+\u_0) - \frac{1}{2}\|\X-\mathbf{u}+\u_0\|^2} \right)&= \mathbb{E}\left( e^{ f^{\rm P}_{\mathcal{U}}(\X) +\langle\u_0,\X\rangle+g^{\rm P}_{\mathcal{U}}(\u) -\langle\u_0,\u+\u_0\rangle+\frac{3}{2}\|\u_0\|^2- \frac{1}{2}\|\X-(\mathbf{u}+\u_0)\|^2} \right)\\
        &= \mathbb{E}\left( e^{ f^{\rm P}_{\mathcal{U}}(\X) +g^{\rm P}_{\mathcal{U}}(\u) - \frac{1}{2}\|\X-\mathbf{u}\|^2} \right)\\
        &=1 \quad {\mathcal{U}}-a.s.
    \end{align*}
    Therefore we obtain the first optimally condition 
    \begin{align*}
        \mathbb{E}\left( e^{ f^{\rm P}_{\mathcal{U}_0}(\X) + g^{\rm P}_{\mathcal{U}_0}(\u') - \frac{1}{2}\|\X-\mathbf{u}'\|^2} \right)= 1 \quad {\mathcal{U}_0}-a.s.
    \end{align*}
    On the other hand,  note that
    \begin{align*}
        \mathbb{E}\left( e^{ f^{\rm P}_{\mathcal{U}_0}(\x) + g^{\rm P}_{\mathcal{U}_0}(\U_0) - \frac{1}{2}\|\x-\U_0\|^2} \right)&= \mathbb{E}\left( e^{ f^{\rm P}_{\mathcal{U}}(\x) +\langle\u_0,\x\rangle+g^{\rm P}_{\mathcal{U}}(\U_0-\u_0) -\langle\u_0,\U_0\rangle+\frac{3}{2}\|\u_0\|^2- \frac{1}{2}\|\x-\U_0\|^2} \right)\\
         &= \mathbb{E}\left( e^{ f^{\rm P}_{\mathcal{U}}(\x) +\langle\u_0,\x\rangle+g^{\rm P}_{\mathcal{U}}(\U) -\langle\u_0,\U+\u_0\rangle+\frac{3}{2}\|\u_0\|^2- \frac{1}{2}\|\x-\U+\u_0\|^2} \right)\\
          &= \mathbb{E}\left( e^{ f^{\rm P}_{\mathcal{U}}(\x) +g^{\rm P}_{\mathcal{U}}(\U)- \frac{1}{2}\|\x-\U\|^2} \right)\\
        &=1 \quad {\rm P}-a.s.
    \end{align*}
    which implies the second optimally condition. 
    \end{proof}
    \begin{proof}
    [Proof of Lemma~\ref{lemma:rigid:transformation}]
    Via Lemma~\ref{lemma:invariance:translation}, we only need to prove the invariance w.r.t.  $T_0(\u)=\boldsymbol{R} \u$ with $\boldsymbol{R} $ a linear isommetry. By definition of spherical measure, $ T_0(\U)\sim \mathcal{U}$, for any $\U\sim \mathcal{U}$, so the solutions of \eqref{dual_entrop} are the same. 
    \end{proof}
  
    \begin{proof}[Proof of Proposition \ref{prop:dilatation}]
    Set $\mathcal{U}_{\delta}=T_{\delta}\sharp\mathcal{U}$ and a pair $(g^{\rm P}_{T_{\delta}\sharp\mathcal{U}},f^{\rm P}_{T_{\delta}\sharp\mathcal{U}})$ solving the  dual formulation \eqref{dual_entrop} of $S_{1}(\rm P,T_{{\delta}}\sharp\rm \mathcal{U})$. The optimality conditions yield
    \begin{align*}
    \mathbb{E}\left( e^{ f^{\rm P}_{T_{\delta}\sharp\mathcal{U}}(\X) + g^{\rm P}_{T_{\delta}\sharp\mathcal{U}}(\u_{\delta}) - \frac{1}{2}\|\X-\u_{\delta}\|^2} \right) & =1 \quad \mathcal{U}_{\delta}-a.s.
    \end{align*}
    where we can do a change of variables $\u=\frac{1}{\delta}\u_{\delta}$ to have
    \begin{align*}
    1&\stackrel{\mathcal{U}-a.s.}{=}\mathbb{E}\left( e^{ f^{\rm P}_{T_{\delta}\sharp\mathcal{U}}(\X) + g^{\rm P}_{T_{\delta}\sharp\mathcal{U}}({\delta}\u) - \frac{1}{2}\|\X-\delta\u\|^2} \right)\stackrel{\mathcal{U}-a.s.}{=}\mathbb{E}\left( e^{ f^{\rm P}_{T_{\delta}\sharp\mathcal{U}}(\X) + g^{\rm P}_{T_{\delta}\sharp\mathcal{U}}(\delta\u) - \frac{\delta^2}{2}\|\frac{1}{\delta}\,\X-\u\|^2} \right).
    \end{align*}
    Set $\X_{\frac{1}{\delta}}=T_{\frac{1}{\delta}}(\X)={\frac{1}{\delta}}\X$, then 
    \begin{align*}
    1&\stackrel{\mathcal{U}-a.s.}{=}\mathbb{E}\left( e^{ {\delta^2}\left(\frac{1}{\delta^2}\,f^{\rm P}_{T_{\delta}\sharp\mathcal{U}}({\delta}\X_{\frac{1}{\delta}}) + \frac{1}{\delta^2}\,g^{\rm P}_{T_{\delta}\sharp\mathcal{U}}({\delta}\u) - \frac{1}{2}\|\X_{\frac{1}{\delta}}-\u\|^2\right)} \right).
    \end{align*}
    The same argument also shows (with the obvious notation) that
    \begin{align*}
    1&\stackrel{\rm P_{\frac{1}{\delta}}-a.s.}{=}\mathbb{E}\left( e^{ {\delta^2}\left(\frac{1}{\delta^2}\,f^{\rm P}_{T_{\delta}\sharp\mathcal{U}}({\delta}\x_{\frac{1}{\delta}}) + \frac{1}{\delta^2}\,g^{\rm P}_{T_{\delta}\sharp\mathcal{U}}({\delta}\U) - \frac{1}{2}\|\x_{\frac{1}{\delta}}-\U\|^2\right)} \right),
    \end{align*}
    which means that the pair
    $ \left(\frac{1}{\delta^2}\,f^{\rm P}_{T_{\delta}\sharp\mathcal{U}}({\delta}\,\cdot\, ), \   \frac{1}{\delta^2}\,g^{\rm P}_{T_{\delta}\sharp\mathcal{U}}(\delta \, \cdot\,)\right)=\left(f^{T_{\frac{1}{\delta}}\sharp\rm P}_{\mathcal{U}, \, \delta}, \   \frac{1}{\delta^2}\,g^{T_{\frac{1}{\delta}}\sharp\rm P}_{\mathcal{U}, \, \delta}\right)$
    solves  the dual formulation \eqref{dual_entrop} of $S_{\epsilon}(T_{\frac{1}{\delta}}\sharp\rm P, \mathcal{U})$, for $\epsilon=\frac{1}{\delta^2}$. The same, verbatim, can be done for $\rm Q$. Finally, we note that 
    $$ \mathbb{E}\left(g^{\rm P}_{T_{\delta}\sharp\mathcal{U}}(\U_{{\delta}})-g^{\rm Q}_{T_{\delta}\sharp\mathcal{U}}(\U_{{\delta}})\right)^2=\mathbb{E}\left(g^{\rm P}_{T_{\delta}\sharp\mathcal{U}}(\delta\,\U)-g^{\rm Q}_{T_{\delta}\sharp\mathcal{U}}(\delta\,\U)\right)^2=\delta ^4\mathbb{E}\left(g^{T_{\frac{1}{\delta}}\sharp\rm P}_{\mathcal{U}, \, \delta}(\U)-g^{T_{\frac{1}{\delta}}\sharp\rm Q}_{\mathcal{U}, \, \delta}(\U)\right)^2,$$
    and 
    $$ 0=\mathbb{E}\left(g^{\rm P}_{T_{\delta}\sharp\mathcal{U}}(\U_{{\delta}})\right)=\delta ^2\mathbb{E}\left(g^{T_{\frac{1}{\delta}}\sharp\rm P}_{\mathcal{U}, \, \delta}(\U)\right), \ 0=\mathbb{E}\left(g^{\rm Q}_{T_{\delta}\sharp\mathcal{U}}(\U_{{\delta}})\right)=\delta ^2\mathbb{E}\left(g^{T_{\frac{1}{\delta}}\sharp\rm Q}_{\mathcal{U}, \, \delta}(\U)\right),$$
    which allows to conclude.
    \end{proof}

    \begin{proof}[Proof of Proposition \ref{proposition:control:change:reference:U}]
    For ease of notation we suppose that $\epsilon=1$. First note that
    \begin{align*} \| g^P_\mathcal{U}-g_\mathcal{U}^{\rm Q} \|_{L^2(\mathcal{U})}  & \leq \| g^P_\mathcal{U}-g_{\mathcal{U}'}^{\rm P} \|_{L^2(\mathcal{U})} +\| g_{\mathcal{U}'}^{\rm P}-g_{\mathcal{U}'}^{\rm Q} \|_{L^2(\mathcal{U})} +  +\| g_{\mathcal{U}}^{\rm Q}-g_{\mathcal{U}'}^{\rm Q} \|_{L^2(\mathcal{U})} \\
    & \leq {\rm diam}(\Omega) \left( \|g^P_\mathcal{U}-g_{\mathcal{U}'}^{\rm P}  \|_{\infty}+ \|g_{\mathcal{U}}^{\rm Q}-g_{\mathcal{U}'}^{\rm Q}  \|_{\infty} \right) + \| g_{\mathcal{U}'}^{\rm P}-g_{\mathcal{U}'}^{\rm Q} \|_{L^2(\mathcal{U})}.
    \end{align*}
    Using \cite[Theorem 4.5]{Barrio2022AnIC}, we obtain that 
    $$ \|g^P_\mathcal{U}-g_{\mathcal{U}'}^{\rm P}  \|_{\infty} \leq \| \mathcal{U}-\mathcal{U}'\|_s $$
    $$ \|g^Q_\mathcal{U}-g_{\mathcal{U}'}^{\rm Q}  \|_{\infty} \leq \| \mathcal{U}-\mathcal{U}'\|_s .$$
    The last term of the bound can be written as 
    \begin{align*} 
    	& \| g_{\mathcal{U}'}^{\rm P}-g_{\mathcal{U}'}^{\rm Q} \|_{L^2(\mathcal{U})}  =   \left( \int (g_{\mathcal{U}'}^{\rm P}(\x)-g_{\mathcal{U}'}^{\rm Q}(\x))^2 d(\mathcal{U}-\mathcal{U}')(\x) + \int (g_{\mathcal{U}'}^{\rm P}(\x)-g_{\mathcal{U}'}^{\rm Q}(\x))^2 d{\mathcal{U}'}(\x)  \right)^{\frac{1}{2}
    } \\
     & \leq \left( \int (g_{\mathcal{U}'}^{\rm P}(\x)-g_{\mathcal{U}'}^{\rm Q}(\x))^2 d(\mathcal{U}-\mathcal{U}')(\x) + \int (g_{\mathcal{U}'}^{\rm P}(\x)-g_{\mathcal{U}'}^{\rm Q}(\x))^2 d{\mathcal{U}'}(\x)  \right)^{\frac{1}{2}} \\
      & \leq \left( \left|\int \frac{(g_{\mathcal{U}'}^{\rm P}(\x)-g_{\mathcal{U}'}^{\rm Q}(\x))^2}{\| (g_{\mathcal{U}'}^{\rm P}-g_{\mathcal{U}'}^{\rm Q})^2\|_{\mathcal{C}^s(\Omega)}}  \| (g_{\mathcal{U}'}^{\rm P}-g_{\mathcal{U}'}^{\rm Q})^2\|_{\mathcal{C}^s(\Omega)} d(\mathcal{U}-\mathcal{U}')(\x)\right| + \int (g_{\mathcal{U}'}^{\rm P}(\x)-g_{\mathcal{U}'}^{\rm Q}(\x))^2 d{\mathcal{U}'}(\x)  \right)^{\frac{1}{2}} \\
     &  \leq  \left( \| (g_{\mathcal{U}'}^{\rm P}-g_{\mathcal{U}'}^{\rm Q})^2\|_{\mathcal{C}^s(\Omega)}  \sup_{f \in {\mathcal{C}^s(\Omega)}} \left| \int f(\x) d(\mathcal{U}-\mathcal{U}')(\x)\right| + \int (g_{\mathcal{U}'}^{\rm P}(\x)-g_{\mathcal{U}'}^{\rm Q}(\x))^2 d{\mathcal{U}'}(\x)  \right)^{\frac{1}{2}}.
     \end{align*}
    Now note on the first hand that $$ \sup_{f \in \mathcal{C}^s(\Omega)} \left| \int f(\x) d(\mathcal{U}-\mathcal{U}')(\x)\right| \leq \| \mathcal{U}-\mathcal{U}'\|_s. $$
    On the other hand recall that 
    $$ \|(g_{\mathcal{U}'}^{\rm P}-g_{\mathcal{U}'}^Q)^2 \|_{\mathcal{C}^s(\Omega)}= \sum_{i=0}^s\sum_{|\balpha|= i}\|D^{\balpha} (g_{\mathcal{U}'}^{\rm P}-g_{\mathcal{U}'}^Q)^2 \|_{\infty}, $$ 
    with the same notation $D^{\balpha}$ as in Section \ref{subsection:definition:notation}. 
    
    But for $|\balpha| \geq 1$, $D^{\balpha} (g_{\mathcal{U}'}^{\rm P}-g_{\mathcal{U}'}^Q)^2$ is a linear combination of product of derivatives of $g_{\mathcal{U}'}^{\rm P}-g_{\mathcal{U}'}^Q$, which enables to write that $$\sum_{i=0}^s \sum_{|\balpha|= i} \| D^{\balpha} (g_{\mathcal{U}'}^{\rm P}-g_{\mathcal{U}'}^Q)^2 \|_\infty \leq \sum_{i=0}^s \sum_{|\balpha|= i} \|  P_{\balpha} (g_{\mathcal{U}'}^{\rm P}-g_{\mathcal{U}'}^Q,\partial_1 (g_{\mathcal{U}'}^{\rm P}-g_{\mathcal{U}'}^Q),\dots,D^{\balpha}(g_{\mathcal{U}'}^{\rm P}-g_{\mathcal{U}'}^Q))  \|_{\infty}$$ for $P_{\balpha}$ polynomial functions.
     Since all functions are continuous and evaluated on a compact set $\Omega$, their supremum norm is bounded, which enables to write that $$\| D^{\balpha} (g_{\mathcal{U}'}^{\rm P}-g_{\mathcal{U}'}^Q)^2 \|_{\mathcal{C}^s(\Omega)} \leq  C_{\balpha}(\Omega) \| g_{\mathcal{U}'}^{\rm P}-g_{\mathcal{U}'}^Q\|_{\mathcal{C}^s(\Omega)}  $$ for a constant $C_{\balpha}(\Omega)$ which depends on $P,Q$ and $\balpha$ and the choice of $\Omega$.
    Since
    $$ \| (g_{\mathcal{U}'}^{\rm P}-g_{\mathcal{U}'}^{\rm Q})^2\|_{\mathcal{C}_s(\Omega)} \leq C_{\balpha}(\Omega) \|{\rm P}-{\rm Q}\|_s  $$
    we obtain that
    \begin{align*} \| g_{\mathcal{U}'}^{\rm P}-g_{\mathcal{U}'}^{\rm Q} \|_{L^2(\mathcal{U})} & \leq \left(\| \mathcal{U}-\mathcal{U}'\|_s  \|{\rm P}-{\rm Q}\|_s
    + \int (g_{\mathcal{U}'}^{\rm P}(\x)-g_{\mathcal{U}'}^{\rm Q}(\x))^2 d{\mathcal{U}'}(\x)
    \right)^{\frac{1}{2}} \\
    & \leq \| g_{\mathcal{U}'}^{\rm P}-g^Q_{\mathcal{U}'} \|_{L^2(\mathcal{U}')} + \left( \| \mathcal{U}-\mathcal{U}'\|_s  \|{\rm P}-{\rm Q}\|_s\right)^{1/2},\end{align*}
    which gives the inequality
    \begin{align*}
        & \| g_{\mathcal{U}'}^{\rm P}-g_{\mathcal{U}'}^{\rm Q} \|_{L^2(\mathcal{U})} - \| g_{\mathcal{U}'}^{\rm P}-g^Q_{\mathcal{U}'} \|_{L^2(\mathcal{U}')} \\
        & \leq   \left( \| \mathcal{U}-\mathcal{U}'\|_s  \|{\rm P}-{\rm Q}\|_s\right)^{1/2} + 2 {\rm diam}(\Omega) \| \mathcal{U}-\mathcal{U}'\|_s.
    \end{align*}
    Finally by symmetry, we obtain that
    $$ | \| g_{\mathcal{U}}^{\rm P}-g_{\mathcal{U}}^{\rm Q} \|_{L^2(\mathcal{U})} - \| g_{\mathcal{U}'}^{\rm P}-g^Q_{\mathcal{U}'} \|_{L^2(\mathcal{U}')} |\leq  \left( \| \mathcal{U}-\mathcal{U}'\|_s  \|{\rm P}-{\rm Q}\|_s\right)^{1/2}+2 {\rm diam}(\Omega) \| \mathcal{U}-\mathcal{U}'\|_s,$$
    which proves the result.
    \end{proof}

%%%%%%%%%%%%%%%%%%%%%%%%%%%%%%%%%%%%%%%%%%%%%%%%%%%
%%%%%%%%%%%%%%%%%%%%%%%%%%%%%%%%%%%%%%%%%%%%%%%%%%%
%%%%%%%%%%%%%%%%%%%%%%%%%%%%%%%%%%%%%%%%%%%%%%%%%%%

\section{Additional content for Section \ref{s:GP}}

\subsection{Likelihood in regression} \label{subsubsection:likelihood:regression}

In regression, we consider a data set $\mu_1 , y_1 , \ldots , \mu_n , y_n$, with $y_i = Z(\mu_i)$ where $Z$ is a centered GP with covariance function in $\{ K_{\btheta,\u}\}$. We let $U_n$ be the list $(\mu_1 , \ldots , \mu_n)$ and write $\K_{\btheta,\u}(U_n , U_n)$ for the $n \times n$ matrix with component $i,j$ equal to $K_{\btheta}(\mu_i , \mu_j)$. We also write ${\Y}_n$ for the $n \times 1$ vector $(y_1,\ldots , y_n)^\top$. Then the likelihood function is $g_{\mathcal{N}}(\Y_n,\boldsymbol{0},\K_{\btheta,\u}(U_n , U_n))$, where for any vectors $\m$ and $\x$ and matrix $\boldsymbol{\Sigma}$, in dimension $n$,
\begin{equation} \label{eq:gaussian:density}
g_{\mathcal{N}}(\x,\m,\boldsymbol{\Sigma})
=
\frac{1}{(2 \pi)^{n/2} \sqrt{\det(\boldsymbol{\Sigma})}}
e^{
- \frac{1}{2}
(\x - \m)^\top 
\boldsymbol{\Sigma}^{-1}
(\x - \m)
} 
\end{equation}
is the Gaussian density at $\x$ with mean $\m$ and covariance $\boldsymbol{\Sigma}$. Then $\btheta,\u$ can be selected by maximizing the likelihood function. Note that in regression, one can also use cross validation to estimate $\btheta$ and $\u$ \cite{rasmussen2006gaussian,Bachoc2013cross,zhang2010kriging}.

\subsection{Likelihood in classification} \label{subsubsection:likelihood:classification}

In classification, $Z$ is as before and
we consider a data set $\mu_1 , y_1 , \ldots , \mu_n , y_n$, where, conditionally to $Z$, $y_1,\ldots,y_n$ are independent with, for $i=1,\ldots,n$, 
\[
\mathbb{P}(y_i = 1) = 1- \mathbb{P}(y_i = 0)
=
\frac{e^{Z(\mu_i)}}{1+e^{Z(\mu_i)}}. 
\]

Then, from for instance Equation 3.30 in \cite{rasmussen2006gaussian}, the likelihood function is 
\begin{flalign*}
&
\int_{\mathbb{R}^n}
g_{\mathcal{N}}
\left( \v , \boldsymbol{0} , \K_{\btheta,\u}(U_n , U_n)  \right)
g(\Y_n|\v)
	d \v,
\end{flalign*}
with the density of $\Y_n$ given $(Z(\mu_1),\ldots,Z(\mu_n)) = \v$:
\begin{equation} \label{eq:lik:y:given:GP}
g(\Y_n|\v)
=
	\prod_{i=1}^n
	\left( 
	\left( 
	\frac{e^{v_i}}{1+e^{v_i}}
	\right)^{y_i}
	+
	\left(
	\frac{1}{1+e^{v_i}}
	\right)^{1-y_i}
	\right).
\end{equation}
 Above, $g_{\mathcal{N}}$ is as in \eqref{eq:gaussian:density}.
 
\subsection{Discussion of microergodicity}

 For both regression and classification, a natural theoretical question is the consistency of estimators for $\btheta$ and $\u$ as $n \to \infty$.
This question is essentially open for distributional inputs, as only a few results exist \cite{bachoc2017gaussian}. In contrast, most existing results address standard vector inputs \cite{stein1999interpolation,zhang2004inconsistent,bachoc2014asymptotic}.
A necessary condition for this consistency is that  $\btheta$ and $\u$ are microergodic, which means that changing them always changes the Gaussian measure of $Z$ on the set of functions from the input space to $
\mathbb{R}$. We refer to \cite{stein1999interpolation,bachoc2020gaussian} for more formal details. Related to our setting, \cite{bachoc2020gaussian} shows that microergodicity typically holds when the input space is a Hilbert ball. This result
provides positive indications that $\btheta$ and $\u$ may be microergodic in fairly general frameworks.
 
 \subsection{Prediction}
 
 We now aim at predicting a new output, associated to a new measure $\mu_0$, based on $y_1,\ldots,y_n$, that is to compute the conditional distribution of the new output given $y_1,\ldots,y_n$.

First, consider regression, where the output is $Z(\mu_0)$ and $y_1,\ldots,y_n$ are as in Section \ref{subsubsection:likelihood:regression}.
The conditional mean of $Z(\mu_0)$  given $y_1,\ldots,y_n$ is
\begin{align} \label{eq:pred:reg:mean}
& \mathbb{E}_{\btheta,\u}
\left( 
\left.
Z(\mu_0) 
\right| 
Z(\mu_1) , \ldots , Z(\mu_n)
\right)
=  \K_{\btheta,\u}(\mu_0,U_n)
\K_{\btheta,\u}(U_n,U_n)^{-1} {\Y}_n,
\end{align}
where $\K_{\btheta,\u}(\mu_0,U_n)$ is the $1 \times n$ vector with component $i$ equal to $K_{\btheta,\u}(\mu_0 , \mu_i)$, $i=1,\ldots,n$.
Thus, classically, GP prediction in regression consists in the conditional mean (also the $L^2$ projection).
We also have the well-known error indicator (conditional variance)
\begin{align} \label{eq:pred:reg:var}
&
\mathrm{var}_{\btheta,\u}
\left( 
\left.
Z(\mu_0) 
\right| 
Z(\mu_1) , \ldots , Z(\mu_n)
\right)
= \\ \notag
&
\K_{\btheta,\u}(\mu_0 , \mu_0)
-
\K_{\btheta,\u}(\mu_0,U_n)
\K_{\btheta,\u}(U_n,U_n)^{-1} 
\K_{\btheta,\u}(U_n,\mu_0),
\end{align}
where we let $\K_{\btheta,\u}(U_n,\mu_0) = K_{\btheta,\u}(\mu_0 , U_n)^\top$.

Second, consider classification, where the output is $y_0$, such that conditionally to $Z$, $y_0$ is independent from $y_1,\ldots,y_n$ (defined as in Section \ref{subsubsection:likelihood:classification}) and $\mathbb{P}(y_0 = 1) = 1- \mathbb{P}(y_0 = 0)
=
e^{Z(\mu_0)} /  (1+e^{Z(\mu_0)})$. Then, as follows from instance from Equations 3.9 and 3.10 in \cite{rasmussen2006gaussian}, the conditional probability that $y_0 = 1$ given $y_1 , \ldots , y_n$ is given by
\begin{flalign*}
\frac{1}{\kappa}
\int_{\mathbb{R}^{n+1}}
g_{\btheta,\u}(\v|\Y_n)
	g_{\btheta,\u}(z|\v)
	\frac{e^z}{1+e^z}
	d\v dz.
\end{flalign*}
Above, $g_{\btheta,\u}(z|\v)$ is the Gaussian density of $Z(\mu_0)$ at $z$ given $Z(\mu_i) = v_i$, $i=1,\ldots,n$, as obtained from \eqref{eq:pred:reg:mean} and \eqref{eq:pred:reg:var}, $\kappa = \int_{\mathbb{R}^n} g_{\btheta,\u}(\v|\Y_n) d \v$ and
\begin{flalign*}
& g_{\btheta,\u}(\v|\Y_n)
=
g_{\mathcal{N}}
\left( \v , 0 , \K_{\btheta,\u}(U_n , U_n) \right)
	g(\Y_n|\v),
\end{flalign*}
with $g_{\mathcal{N}}$ as in \eqref{eq:gaussian:density}.

\section{Proofs for Section \ref{s:GP}}

    %%% Proposition

    \begin{proof}[Proof of Proposition \ref{proposition:continuity}]
    Let $d_Z$ be the canonical distance on $\mathcal{P}_{\delta}$ of the covariance function in \eqref{eq:kernel}, given by, for ${\rm P},{\rm Q} \in \mathcal{P}_{\delta}$,
    \[
    d_Z({\rm P},{\rm Q}) 
    =
    \sqrt{
    2F(0) - 2 F \left( 
    \|g^{\rm P}-g^{\rm Q}\|_{L^2(\mathcal{U})}
    \right).
    }
    \]
    For $\epsilon >0$, let $\mathcal{N}(\epsilon,\mathcal{P}_{\delta},d_Z)$ be the minimum number of $d_Z$-balls in $\mathcal{P}_{\delta}$ of radius $\epsilon$ needed to cover $\mathcal{P}_{\delta}$. From for instance Theorem 1.1 in \cite{adler1990introduction}, in order to conclude the proof, it is sufficient to show that 
    \begin{equation} \label{eq:integral:entropy:finite}
    \int_{0}^{\infty} 
    \sqrt{ \log(\mathcal{N}(\epsilon,\mathcal{P}_{\delta},d_Z) ) } 
    d \epsilon 
    < \infty. 
    \end{equation}
    
    Let $\alpha_d = \left\lceil d / a \right\rceil+2 $ and  $s_d = d \alpha_d$. Proposition \ref{Lemma:cuadratic} yields that for ${\rm P},{\rm Q} \in \mathcal{P}_{\delta}$,
    \[
    \|g^{\rm P}-g^{\rm Q}\|_{L^2(\mathcal{U})} 
    \leq B_d 
    \| {\rm P} - {\rm Q} \|_{s_d},
    \]
    where $\| \cdot \|_{s_d}$ is defined in \eqref{eq:norme:P:Q:C:s:un} and $B_d$ is a constant not depending on $\epsilon$. For any ${\rm P}$ and ${\rm Q}$ in $\mathcal{P}_{\delta}$, with densities $p$ and $q$, we have, with $\| \cdot \|_{\mathcal{C}^{s_d}(\Omega)}$ defined in \eqref{eq:norm:holder:alpha},
    \begin{align} \label{eq:dX:bound}
        d_Z({\rm P},{\rm Q}) 
        & \leq 
        \sqrt{ 
       2 A \|g^{\rm P}-g^{\rm Q}\|_{L^2(\mathcal{U})}^{a}
        } \notag \\
        & \leq 
         \sqrt{ 
       2 A  B_d^a 
    \| {\rm P} - {\rm Q} \|_{s_d}^a
        } 
        \notag
        \\
        & =
         \sqrt{ 
       2 A  B_d^a}
        \sup_{f \in \mathcal{C}^{s_d}(\Omega) , \|f\|_{\mathcal{C}^{s_d}(\Omega)} \leq 1 }
        \left(
        \int_{\Omega} 
        f(\x) 
        (p(\x) - q(\x))
        d \x
        \right)^{a/2}.
    \end{align}

    For $f \in  \mathcal{C}^{s_d}(\Omega) , \|f\|_{\mathcal{C}^{s_d}(\Omega)} \leq 1$, we can multiply $f$ by an infinitely differentiable function that is zero on
    $\{ \bt \in \Omega, d(\bt,\partial \Omega) \leq b/2 \}$, and one on $\{ \bt \in \Omega, d(\bt,\partial \Omega) \geq b \}$ (that exists by Lemma \ref{lemma:exists:smooth:function:boundary}). Let us write $\tilde{f}$ the result of this multiplication. Since ${\rm P}$ and ${\rm Q}$ above are in $\mathcal{P}_{\delta}$, we have
    \begin{equation} \label{eq:int:Omega:f:tilde:f}
    \int_{\Omega} 
        f(\x) 
        (p(\x) - q(\x))
        d \x
        =
        \int_{\Omega} 
        \tilde{f}(\x) 
        (p(\x) - q(\x))
        d \x.
    \end{equation}

    By taking the infinitely differentiable function the same for each $f$, we obtain  $\|\tilde{f}\|_{\mathcal{C}^{s_d}(\Omega)} \leq D_d$, where $D_d$ is a constant.
    
    Now we consider a bounded compact hyper-rectangle $R$ such that $\Omega$ belongs to the interior of $R$. Above, $p$ and $q$ are summable and continuous on $\Omega$ and are zero on $\{ \x \in \Omega , d(\x , \partial \Omega) \leq b \}$, so we can extend them to summable continuous functions on $R$, that take the value $0$ on $R \backslash \Omega$. Let us also extend $\tilde{f}$ on $R$ by taking values zero on $R \backslash \Omega$. We then have $\|\tilde{f}\|_{\mathcal{C}^{s_d}(R)} \leq D_d$, by defining $\| \cdot \|_{\mathcal{C}^{s_d}(R)}$ as in \eqref{eq:norm:holder:alpha} (replacing $\Omega$ by $R$). 
    
    We can thus write 
    \begin{equation} \label{eq:int:Omega:equal:int:R}
     \int_{\Omega} 
        \tilde{f}(\x) 
        (p(\x) - q(\x))
        d \x 
    =
     \int_{R} 
        \tilde{f}(\x) 
        (p(\x) - q(\x))
        d \x, 
    \end{equation}
    where we use the same notation $p,q,\tilde{f}$ both for the original functions on $\Omega$ and their extensions on $R$. The function $\tilde{f}$ is $s_d$ times differentiable on $R$, with all the derivatives of order $s_d$ or less that cancel out on the boundary of $R$. 
    Write the hyper-rectangle $R$ as $\prod_{j=1}^d [ \ell_j ,u_j]$.
    Let
    for $i=0, \ldots, d$,
    $I^{(1,i)} q$ be the function defined on $R$ by, for $i=0$,  $I^{(1,0)} q = q$ and for $i \geq 1$, $(x_1 , \ldots , x_d) \in R$,
    \[
    (I^{(1,i)} q)(x_1,\ldots,x_d) =
    \int_{\ell_i}^{x_i} 
    (I^{(1,i-1)} q) (x_1,\ldots,x_{i-1},t,x_{i+1},\ldots,x_d) 
    dt.
    \]
    Let
    for $i=0, \ldots, d$,
    $I^{(2,i)} q$ be the function defined on $R$ by, for $i=0$,  $I^{(2,0)} q = I^{(1,d)}q$ and for $i \geq 1$,  $(x_1 , \ldots , x_d) \in R$,
    \[
    (I^{(2,i)} q)(x_1,\ldots,x_d) =
    \int_{\ell_i}^{x_i} 
    (I^{(2,i-1)} q) (x_1,\ldots,x_{i-1},t,x_{i+1},\ldots,x_d)
    dt.
    \]
    We iterate like that until defining $I^{(\alpha_d,d)} q $ from $R$ to $\mathbb{R}$ that satisfy $D^{(\alpha_1 , \ldots , \alpha_d)} I^{(\alpha_d,d)} q = q$. 
    We define $I^{(\alpha_d,d)} p$ similarly.  
    
    Hence, we can apply multi-dimensional integration by part on $R$ to obtain 
     \begin{align*}
     \int_{R} 
        \tilde{f}(\x) 
        (p(\x) - q(\x))
        d \x
    &    =
    (-1)^{d \alpha_d}
         \int_{R} 
        (D^{(\alpha_1 , \ldots , \alpha_d)}
        \tilde{f})(\x)
        \left(
        (I^{(\alpha_d,d)} p) (\x) 
        -
        (I^{(\alpha_d,d)}q)(\x)
        \right)
        d \x
        \\
    &    =
    (-1)^{d \alpha_d}
    \int_{\Omega} 
        (D^{(\alpha_1 , \ldots , \alpha_d)}
        \tilde{f})(\x)
        \left(
        (I^{(\alpha_d,d)} p) (\x) 
        -
        (I^{(\alpha_d,d)}q)(\x)
        \right)
        d \x.
    \end{align*}
    Hence going back to \eqref{eq:dX:bound}, \eqref{eq:int:Omega:f:tilde:f} and \eqref{eq:int:Omega:equal:int:R}, with $\ell_d$ denoting Lebesgue measure, we have
    \begin{align} \label{eq:dX:bounded:Ip:Iq}
    d_Z({\rm P},{\rm Q})
    & \leq 
    \sqrt{2 A B_d^a} 
      \sup_{f \in \mathcal{C}^{s_d}(\Omega) , \|f\|_{\mathcal{C}^{s_d}(\Omega)} \leq 1 } 
      \left(
    \int_{\Omega} 
        (D^{(\alpha_1 , \ldots , \alpha_d)}
        \tilde{f})(\x)
        \left(
        (I^{(\alpha_d,d)} p) (\x) 
        -
        (I^{(\alpha_d,d)}q)(\x)
        \right)
        d \x
    \right)^{a/2}
      \notag \\
      & \leq 
      \sqrt{2 A B_d^a }
      D_d^{a/2}
      \ell_d(\Omega)^{a/2}
      \left| 
      \left| 
       I^{(\alpha_d,d)} p
        -
        I^{(\alpha_d,d)}q
      \right|
      \right|_{\infty}^{a/2}.
    \end{align}
    Since $p$ is a density function, we can show by induction that we have, for any $\beta_1, \ldots , \beta_d \in \mathbb{N}$ with $ \beta_1 \leq \alpha_1-1, \ldots , \beta_d \leq \alpha_d-1$,
    $ \| D^{(\beta_1,\ldots,\beta_d)} I^{(\alpha_d,d)} p \|_{\infty} \leq
    \max(1 , \max_{j=1,\ldots,d} (u_j - \ell_j) )^{d(\alpha_d - 1)} $.  Let $E_d = \max(1 , \max_{j=1,\ldots,d} (u_j - \ell_j) )^{d(\alpha_d - 1)} $.

    Define the space $\mathcal{C}_{E_d}^{\alpha_d-1}(\Omega)$ as the ball with the norm $\| \cdot \|_{\mathcal{C}^{\alpha_d-1}(\Omega)}$ given by \eqref{eq:norm:holder:alpha}, with center $0$ and radius $E_d$. For $\epsilon >0$, consider a $\epsilon$-covering of this ball, with norm $\| \cdot \|_{\infty}$, with cardinality $N$. From Theorem 2.7.1 in \cite{vandervaart2013weak}, we can select $N$ such that
    \[
    \log(N)
    \leq 
    F_d \epsilon^{-d/(\alpha_d-1)},
    \]
    with a constant $F_d$ that does not depend on $\epsilon$. 
    For each of the $N$ balls that contains one function of the form $I^{(\alpha_d,d)} q $ where $q$ is the density of some ${\rm Q} \in \mathcal{P}_{\delta}$, we consider such a function  $I^{(\alpha_d,d)} q $. There are $N'$ such functions that we write $I^{(\alpha_d,d)} q_1 , \ldots , I^{(\alpha_d,d)} q_{N'}$. For each ${\rm P} 
    \in \mathcal{P}_{\delta}$ with density $p$, $I^{(\alpha_d,d)} p$ belongs to 
    $\mathcal{C}^{\alpha_d-1}_{E_d}$
    and thus belongs to the same ball as some  $I^{(\alpha_d,d)} q_i$ with $i \in \{1 , \ldots , N' \}$ and thus $\| I^{(\alpha_d,d)} p - I^{(\alpha_d,d)} q_i\|_{\infty} \leq 2 \epsilon$. Hence from \eqref{eq:dX:bounded:Ip:Iq} we have, whith $Q_i \in \mathcal{P}_{\delta}$ having density $q_i$, 
    \[
    d_Z({\rm P} , Q_i ) 
    \leq 
    \sqrt{ 2 A B_d^a}
    D_d^{a/2}
    \ell_d(\Omega)^{a/2}
    (2 \epsilon)^{a/2}. 
    \] 
    Hence, there are constants $G_d,H_d$ such that for $0 < t \leq 1$,
    \[
    \mathcal{N}(t,\mathcal{P}_{\delta},d_Z) 
    \leq G_d 
    e^{H_d t^{- 2d / a (\alpha_d-1)}}. 
    \]
    Since $d / a (\alpha_d - 1) <1$, we thus obtain that \eqref{eq:integral:entropy:finite} holds. 
    \end{proof}

    %%% Lemma
    \begin{lemma} \label{lemma:exists:smooth:function:boundary}
    Let $\Omega$ be compact and let $b >0$.
    There exists an infinitely differentiable function that is zero on
    $\{ \bt \in \Omega, d(\bt,\delta \Omega) \leq b/2 \}$, and one on $\{ \bt \in \Omega, d(\bt,\delta \Omega) \geq b \}$.
    \end{lemma}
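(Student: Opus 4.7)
The plan is to construct the required function by a standard mollification of an indicator function. Let $\epsilon = b/8$ and $a = 3b/4$, and consider the closed ``intermediate'' set
\[
C \defeq \{\mathbf{t} \in \mathbb{R}^d : d(\mathbf{t}, \partial\Omega) \geq a\},
\]
which satisfies $\{d(\cdot,\partial\Omega) \geq b\} \subset C \subset \{d(\cdot,\partial\Omega) \geq b/2\}$. Let $\eta$ be a nonnegative $C^\infty$ bump on $\mathbb{R}^d$ supported in the closed ball $\overline{B}_{\epsilon}(\mathbf{0})$ with $\int \eta = 1$ (the standard mollifier). My candidate is
\[
f(\mathbf{t}) = (\chi_C * \eta)(\mathbf{t}) = \int_{\mathbb{R}^d} \chi_C(\mathbf{t}-\mathbf{s})\, \eta(\mathbf{s})\, d\mathbf{s}.
\]

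First I would verify that $f \in C^\infty(\mathbb{R}^d)$, which is a classical fact about convolution with a smooth compactly supported function (differentiation passes under the integral sign, hitting only $\eta$). Since $\Omega$ is compact, this immediately gives a $C^\infty$ function when restricted to $\Omega$. Next, I would check the two boundary conditions using the fact that $\mathbf{t} \mapsto d(\mathbf{t}, \partial\Omega)$ is $1$-Lipschitz. If $d(\mathbf{t}, \partial\Omega) \leq b/2$, then for any $\mathbf{s}$ with $\|\mathbf{s}\| \leq \epsilon$ we have $d(\mathbf{t}-\mathbf{s}, \partial\Omega) \leq b/2 + b/8 = 5b/8 < a$, so $\chi_C(\mathbf{t}-\mathbf{s}) = 0$ on the support of $\eta$, yielding $f(\mathbf{t}) = 0$. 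Symmetrically, if $d(\mathbf{t}, \partial\Omega) \geq b$, then for $\|\mathbf{s}\| \leq \epsilon$ we have $d(\mathbf{t}-\mathbf{s}, \partial\Omega) \geq b - b/8 = 7b/8 > a$, so $\chi_C(\mathbf{t}-\mathbf{s}) = 1$ on the support of $\eta$, giving $f(\mathbf{t}) = \int \eta = 1$.

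There is no real obstacle here; the only ``care'' needed is calibrating the mollification radius $\epsilon$ and the threshold $a$ so that the two Lipschitz inclusions cleanly avoid the transition region of $\chi_C$. My choices $\epsilon = b/8$ and $a = 3b/4$ leave a comfortable margin of $b/8$ on each side, but any choice satisfying $b/2 < a - \epsilon$ and $a + \epsilon < b$ would work. The existence of a compactly supported $C^\infty$ bump $\eta$ with integral $1$ is the only external ingredient and is completely standard (e.g., the normalized translate of $\exp(-1/(1-\|\mathbf{x}\|^2))\mathbf{1}_{\{\|\mathbf{x}\|<\epsilon\}}$).
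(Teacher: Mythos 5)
Your proof is correct and is essentially the same construction as the paper's: you convolve the indicator of the intermediate set $\{d(\cdot,\partial\Omega)\geq 3b/4\}$ with a smooth bump and use the $1$-Lipschitz property of $\bt\mapsto d(\bt,\partial\Omega)$ to verify the two boundary conditions. The paper takes mollifier radius $b/4$ while you take $b/8$ (a more conservative but equally valid choice), and the paper writes the convolution in the form $\int\mathbf{1}_{\{\x\in\Omega_{3b/4}\}}g(\x-\bt)\,d\x$ rather than $\chi_C*\eta$, but these are cosmetic differences.
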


    \begin{proof}[Proof of Lemma \ref{lemma:exists:smooth:function:boundary}]
    Let $g$ be an infinitely differentiable function with integral one and which support is included in the Euclidean ball of $\mathbb{R}^d$ with center $0$ and radius $b/4$. Let for $r \geq 0$, $\Omega_r = \{ \bt \in \Omega, d(\bt,\delta \Omega) \geq r \}$. Consider the function $h$ on $\mathbb{R}^d$ defined by, for $\bt \in \mathbb{R}^d$,
    \[
    h( \bt) = \int_{\mathbb{R}^d} 
    \mathbf{1}_{\{ \x \in \Omega_{3b/4} \}}
    g(\x- \bt) d\x.
    \]
    Then $h$ is infinitely differentiable by dominated convergence. For $\bt \in \Omega_{b}$ and $\x$ such that $\|\bt-\x\| \leq b/4$, then $ \x \in \Omega_{3b/4}$. Hence  
    \[
    h(\bt) = \int_{\mathbb{R}^d} 
    g(\x-\bt) d\x =1.
    \]
    For $\bt \in \Omega$ with $d(\bt , \delta \Omega) \leq b/2$, and $\x$ such that $\|\bt-\x\| < b/4$, then $ d(\x , \delta \Omega) < 3b/4$. Hence 
    \[
    h(\bt) = \int_{\mathbb{R}^d} 
    0 d\x  =0.
    \]
    This concludes the proof.
    \end{proof}

%%%%%%%%%%%%%%%%%%%%%%%%%%%%%%%%%%%%%%%%%%%%%%%%%%%
%%%%%%%%%%%%%%%%%%%%%%%%%%%%%%%%%%%%%%%%%%%%%%%%%%%
%%%%%%%%%%%%%%%%%%%%%%%%%%%%%%%%%%%%%%%%%%%%%%%%%%%

\section{Details of the algorithm}
\subsection{Kernel}

We use the kernel:

\begin{equation}
    \label{eq:rbfsink}
    K(\rm P,\rm Q)=l\exp{-\frac{\|g^{\rm P}_{\u}-g^{\rm Q}_{\u}\|}{2\sigma^2}}.
\end{equation}

Here the parameters are the tuple $\btheta=(l,\sigma)$ where $l\in\mathbb{R}$ is the length scale and $\sigma\in\mathbb{R}$ the scalar variance.

For simplicity we only train gaussian process with zero mean function $\mu=\bf 0$. This does not prevent the GP to reach satisfying level of RMSE/accuracy as illustrated in experiments.  
  
The experiments based on Radial Basis Function (RBF) kernel uses a similar form:

\begin{equation}
    \label{eq:rbf}
    K_{\text{RBF}}(x,y)=l\exp{-\frac{\|x-y\|}{2\sigma^2}}.
\end{equation}

\subsection{Sinkhorn's algorithm}

Sinkhorn's algorithm is an iterative algorithm that take advantage of \textit{approximately good} solutions. Hence, the dual variables are re-used from one step of optimization to the other. When the steps are small, it guarantees that the initialization is not far away from the optimum. It allows the algorithm to benefit from a significant speed-up.  

\subsection{L-BFGS}

We apply Limited Memory  Broyden–Fletcher–Goldfarb–Shanno algorithm (L-BFGS), which is an order 2 method, to enjoy faster convergence than order-1 methods such as Gradient Descent. The dominant cost of the algorithm is induced by the size of the support $\u$ and by the dimension of the points $\x_i\in\mathbb{R}^d$ since $(\sigma,l)\in\mathbb{R}^2$. The total dimension of search space is hence $nd+2$.   
  
We select the optimal stepsize at each iteration with a \textit{zoom line search} (Algorithm 3.6 of~\cite{nocedal1999numerical}, pg. 59-61. Tries cubic, quadratic, and bisection methods of zooming).  

\subsection{Runtime cost against MMD}

The Maximum Mean Discrepancy (MMD) is based on RBF:

\begin{equation}
    \label{eq:mmd}
    \text{MMD}(\rm P,\rm Q)=\mathbb{E}_{\rm P}(K_{\text{RBF}}(X,X))+\mathbb{E}_{\rm Q}(K_{\text{RBF}}(Y,Y))-2\mathbb{E}_{\rm P,\rm Q}(K_{\text{RBF}}(X,Y)).
\end{equation}

The MMD distance is turned into a kernel with additional parameter $\hat\sigma$:

\begin{equation}
    \label{eq:mmdker}
    K_{\text{MMD}}(\rm P,\rm Q)=\hat \sigma\exp{(-\text{MMD}^2(\rm P,\rm Q))}.
\end{equation}

The kernel in equation~\ref{eq:mmdker} is universal (see Theorem 2.2 of~\cite{NIPS2010_4e0cb6fb} for example).

For fair comparison Sinkhorn $\u$-Kernel and MMD kernel are benchmarked on the same hardware under `@jax.jit` compiled code to benefit from GPU acceleration.  
  
We report runtime results in table~\ref{tab:runtimecost}. The clouds all share the same coordinates (but not the same weights). The pairwise distances between points of the clouds are pre-computed to speed-up both MMD and Sinkhorn iterations. We notice that Sinkhorn takes advantage of pre-computing the low dimension embeddings in dimension $|\u|=6$, independent of the cloud size. We chose $\epsilon=10^{-2}$ as regularization parameter. The points $\u$ are sampled uniformly in $[0,1]^2$ square, while points from the clouds $P_i$ are a discretization of $[0,1]^2$ square with equally spaced coordinates.

\begin{table}[]
    \centering
    \begin{tabular}{|cc|ccc|}
    \hline
     Number of clouds & Cloud size & Sinkhorn with $|\u|=6$ & Sinkhorn with $|\u|=12$ & MMD\\
    \hline
    \hline
     $n=50$ & $m=100$ & 0.009s & 0.001s & \bf 0.001s \\
     $n=100$ & $m=100$ & 0.013s & 0.011s &\bf 0.005s  \\
     $n=100$ & $m=400$ & \bf 0.007s & 0.021s & 0.055s \\
     $n=400$ & $m=400$ & \bf 0.018s & 0.059s & 0.683s \\
     $n=400$ & $m=625$ & \bf 0.026s & 0.088s & 1.681s \\
     $n=1000$ & $m=625$ & \bf 0.064s & 0.147s & 10.834s \\
     $n=1000$ & $m=1000$ & \bf 0.090s & 0.158s & 14.207s \\
    \hline
    \end{tabular}
    \caption{Runtime cost of Sinkhorn $\u$-Kernel (ours) against Maximum Mean Discrepancy (MMD). The cost reported corresponds to the overall process: computation of regularized OT plan and of the kernel for Sinkhorn $\u$, and computation of MMD distance for MMD. Clouds are in dimension $d=2$.}
    \label{tab:runtimecost}
\end{table}

%%%%%%%%%%%%%%%%%%%%%%%%%%%%%%%%%%%%%%%%%%%%%%%%%%%
%%%%%%%%%%%%%%%%%%%%%%%%%%%%%%%%%%%%%%%%%%%%%%%%%%%
%%%%%%%%%%%%%%%%%%%%%%%%%%%%%%%%%%%%%%%%%%%%%%%%%%%

\section{Details of Numerical experiments}
\subsection{Visualizing dual variables}

In figure~\ref{fig:illustratedual} we introduce an example with two distributions $P$ and $Q$ obtained by taking finite sample from isotropic gaussians. For $P$ we sample $30$ points from $\mathcal{N}([-2,-2],0.4)$ and for $Q$ we sample $50$ points from $\mathcal{N}([-1,1],0.3)$. We chose for $\u$ a finite sample of size $120$ from the unit ball $\mathbb{B}({\bf 0},1)$.  
  
We plot both the distributions and the values taken by $g^P$ and $g^Q$ respectively, by sorting dual variables arbitrarily by increasing error of $|g^P_-g^Q_i|$.  

\begin{figure}
    \centering
    \includegraphics[scale=0.3]{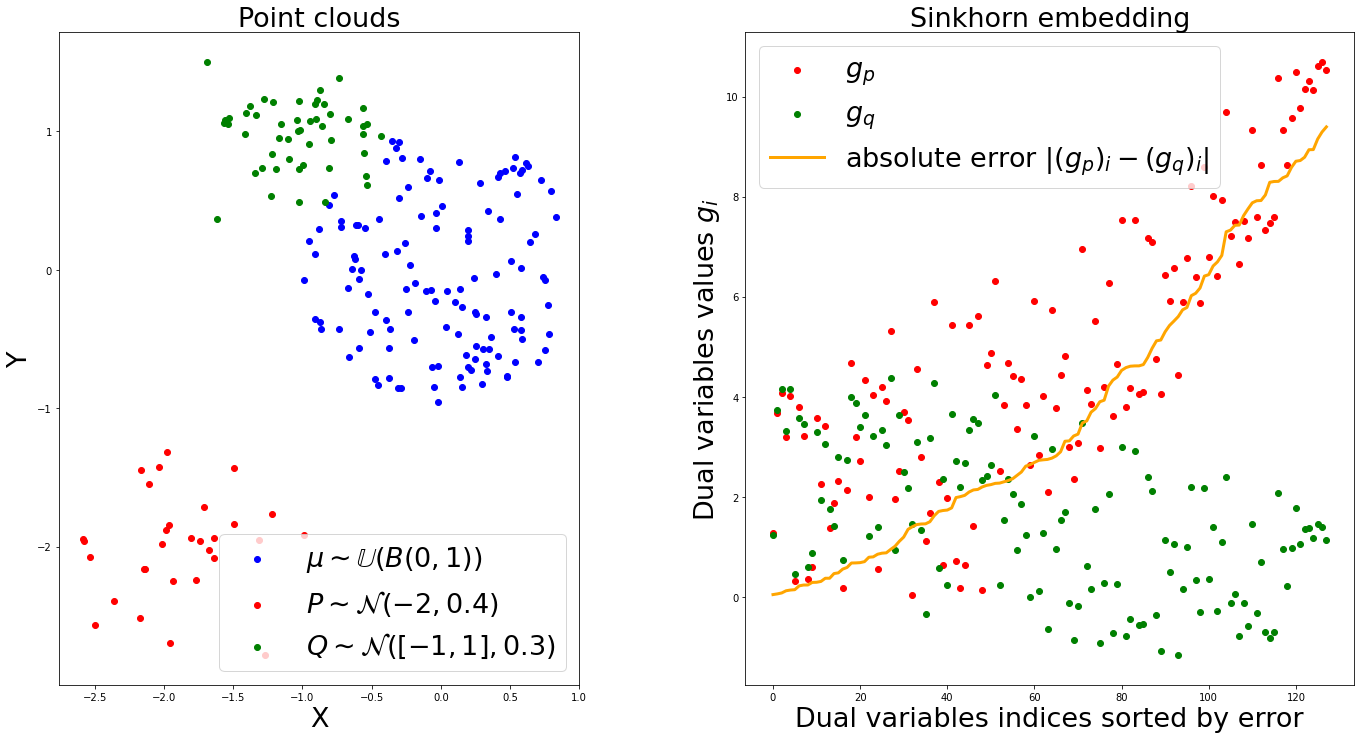}
    \caption{Vizualization of dual variables $g^P$ and $g^Q$. For $P$ we sample $30$ points from $\mathcal{N}([-2,-2],0.4)$ and for $Q$ we sample $50$ points from $\mathcal{N}([-1,1],0.3)$. We chose for $\u$ a finite sample of size $120$ from the unit ball $\mathbb{B}(\bf 0,1)$. }
    \label{fig:illustratedual}
\end{figure}

\subsection{Toy dataset}

All clouds are centered and rescaled so the overall dataset (obtained by merging all clouds) has zero mean and unit variance across all dimensions.   
  
We study a discretization of $\u$ in experiment of section~\ref{subsec:toy}. We chose $\u$ to be a discretization of input space $[0,1]^2$ as a $50\times 50$ grid. The density is chosen uniform over this discretization of $2500$ points. Hence each regularized optimal transporation plan is between a gaussian and an uniform measure over the square $[0,1]^2$. In this case the dual variable $g^P_{\u}$ can be vizualized as an image in definition $50\times 50$.   
  
For $20$ train examples, we plot the image $g^P_{\u}$ in figure~\ref{fig:toygrid}. We see that all those image appear ``blurry'' we show the role of regularization in OT. Moreover those images seem to correspond to a ``blob'' chose coordinates correspond with the one of the cloud $P_i$. This figure help to understand what the dual variables exactly look like in toy examples.  

\begin{figure}
    \centering
    \includegraphics[scale=0.4]{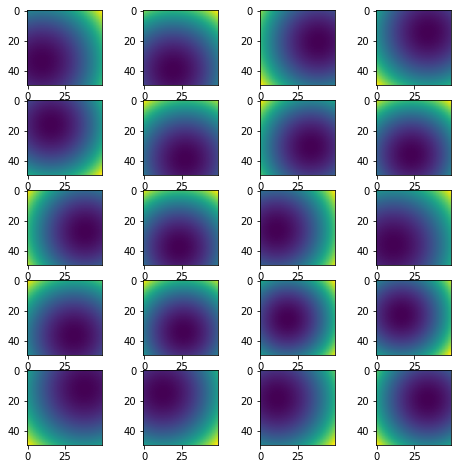}
    \caption{Plot of dual variables $g^P_{\u}$ for $5\times 4=20$ distributions $P_i$ from the toy example of section~\ref{subsec:toy}. The position of the center of each ``blob'' (i.e. mean of the gaussian) can be clearly seen by looking at the dual variables.}
    \label{fig:toygrid}
\end{figure}

\subsection{Mnist and Fashion-Mnist datasets}\label{sec:appmnist}

For RBF kernel, the image are normalized so that the pixel intensity lies in $[0,1]$ range.

The figure~\ref{fig:fashionmnist} illustrates the evolution of $x_i$'s and $w_i$'s for $\u$ in the case of an image of shoe from Fashion-Mnist.  
\begin{figure}
    \centering
    \includegraphics[width=1.\textwidth]{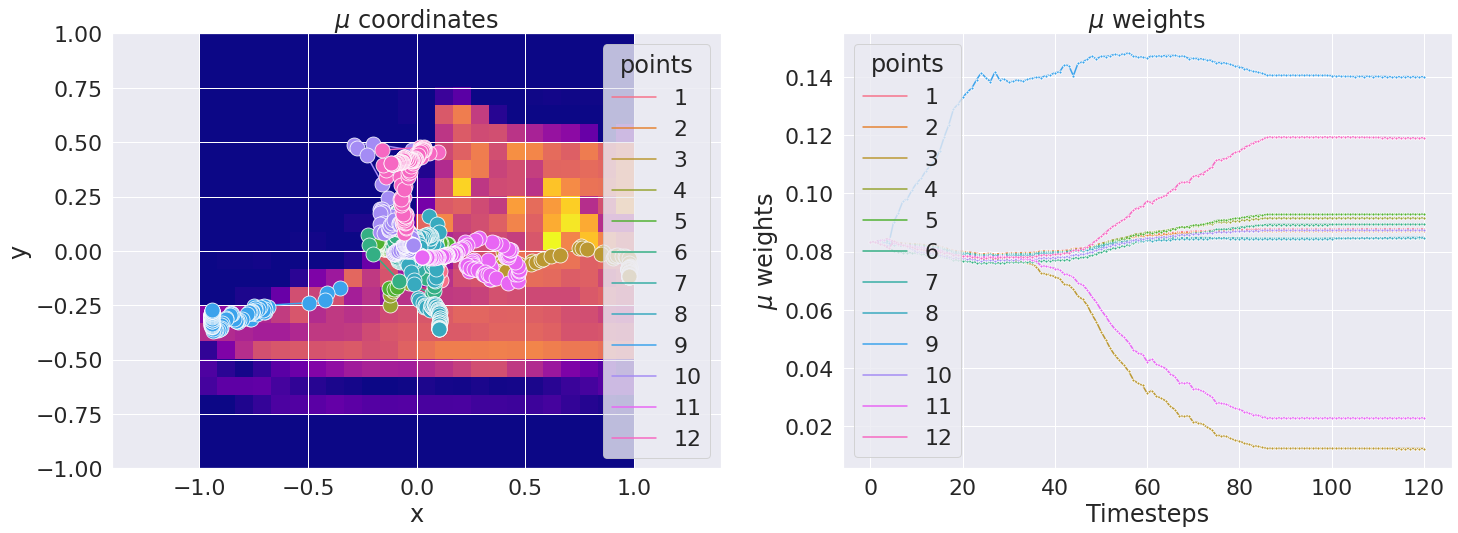}
    \caption{Evolution of $x_i$'s and $w_i$'s for $\u$ in the ``sneakers'' versus ``sandals'' task.}
    \label{fig:fashionmnist}
\end{figure}

\subsubsection{Sensivity to random affine transformations}
  
In figure~\ref{fig:randmnist} we plot a set of Mnist images on which random affine transformations have been applied. We follow the protocol of~\cite{meunier2022distribution} and we sample a translation uniformly at random in range $[-6,6]$ pixels, and a rotations uniformly at random in range $[-\frac{\pi}{3},\frac{\pi}{3}]$ rads.
  
\begin{figure}
    \centering
    \includegraphics[scale=0.4]{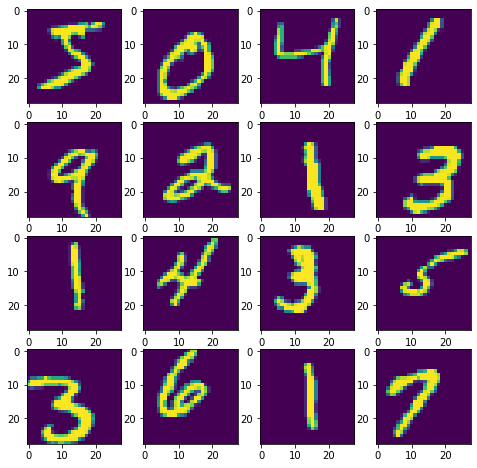}
    \caption{Mnist images with random affine transformations: translation uniformly at random in range $[-6,6]$ pixels.}
    \label{fig:randmnist}
\end{figure}

In figure~\ref{fig:muspace} we study the influence of random affine transformations in dual variable space $g^P_{\u}$, versus pixel space. In this experiment the measure $\u$ is chosen to have full support in dimension $28\times 28=784$. The measure is taken chosen uniform on pixel space. The images are process as clouds of $28\times 28=784$ pixels in dimension $2$. The regularization factor is chosen to be $\epsilon=10^{-2}$.  
  
We see that dual variables are less sensitive to translations than pixels. In the third row of figure~\ref{fig:muspace} the dual variables are modified in a way that hints the direction and amplitude of translation, whereas in fourth row the translation in pixel space has major consequences on the image and exhibit a huge euclidean norm. It shows that $\mu$ Sinkhorn dual variables are better tailored to handle translate than conventional euclidean metric, thanks to the properties of OT in translations.

\begin{figure}
    \centering
    \includegraphics[scale=0.4]{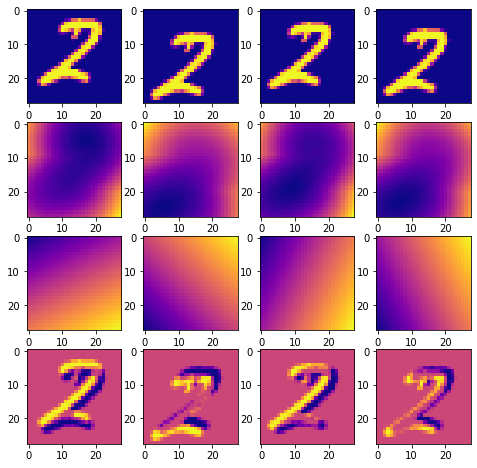}
    \caption{Visualization of translations on dual variables for an Mnist image for $|\u|=28\times 28=784$. \textbf{Top row}: original image $x$ with different affine transformations $\tilde x$. \textbf{Second row}: Dual variables of translated images $g(\tilde x)$. \textbf{Third row}: pixel-wise difference between the dual variables of original (non modified) image and translated images $g(x)-g(\tilde x)$. \textbf{Fourth row}: pixel-wise difference between original image and translated image $x-\tilde x$, in pixel space. We see that any translation has major impact in pixel space, but only mild consequences in dual variables space. Moreover the map $g(x)-g(\tilde x)$ hints the nature of the translation, whereas $x-\tilde x$ is harder to interpret.}
    \label{fig:muspace}
\end{figure}

\subsection{C-SVM results}

\subsubsection{UIUC dataset}\label{sec:appuiuc}

We report here the results of classification with C-SVM on the University of Illinois Urbana Champaign (UIUC) texture dataset~\cite{lazebnik2005sparse}, using the same protocol as~\cite{kolouri2016sliced}. Samples are shown in figure~\ref{fig:uiucsamples}. The dataset contains 25 different classes of texture on a total of $1000$ images (only $40$ images per class).  
  
The Gray Level Co-occurences Matrices (GLCM) is computed with Scikit-image library~\cite{van2014scikit}. The images are illustrated in figure~\ref{fig:glcm}.  
The $\gamma$ of the SVM is obtained by following the ``scale'' policy of Scikit-learn library, applied on normalized features. We apply a grid search in logspace on $C$ parameter of SVM, ranging from $10^{-1}$ to $10^3$. We optimal parameter is selected by selecting the highest average accuracy in $5$-fold cross validation.   
  
We compare the result against Radial Basis Function (RBF) kernel applied in raw (unprocessed pixel). The results are reported in Table~\ref{tab:uiucresult}. We see that RBF kernel is actually as good as any other approach, contrary to what was claimed in work of~\cite{kolouri2016sliced}.  
  
\begin{table}%[]
    \centering
    \begin{tabular}{cccc}
    %\hline
         Dataset & $\u$-Sinkhorn (ours) & RBF & Sliced Wasserstein (\cite{kolouri2016sliced}) \\
         \hline
         \hline
        UIUC Textures & 87.2 & 87.3 & $88\pm 1$\\
        Mnist (1300 examples) & 92.50 & 92.46 & N/A\\
        %\hline
    \end{tabular}
    \caption{Validation accuracy of C-SVM with different kernels on GLCM embeddings of UIUC texture dataset~\cite{lazebnik2005sparse}, and $1300$ example of Mnist (10 classes), with $5$ folds cross-validation.}
    \label{tab:uiucresult}
\end{table}

\begin{figure}%[b]
    \centering
    \includegraphics[width=0.5\textwidth]{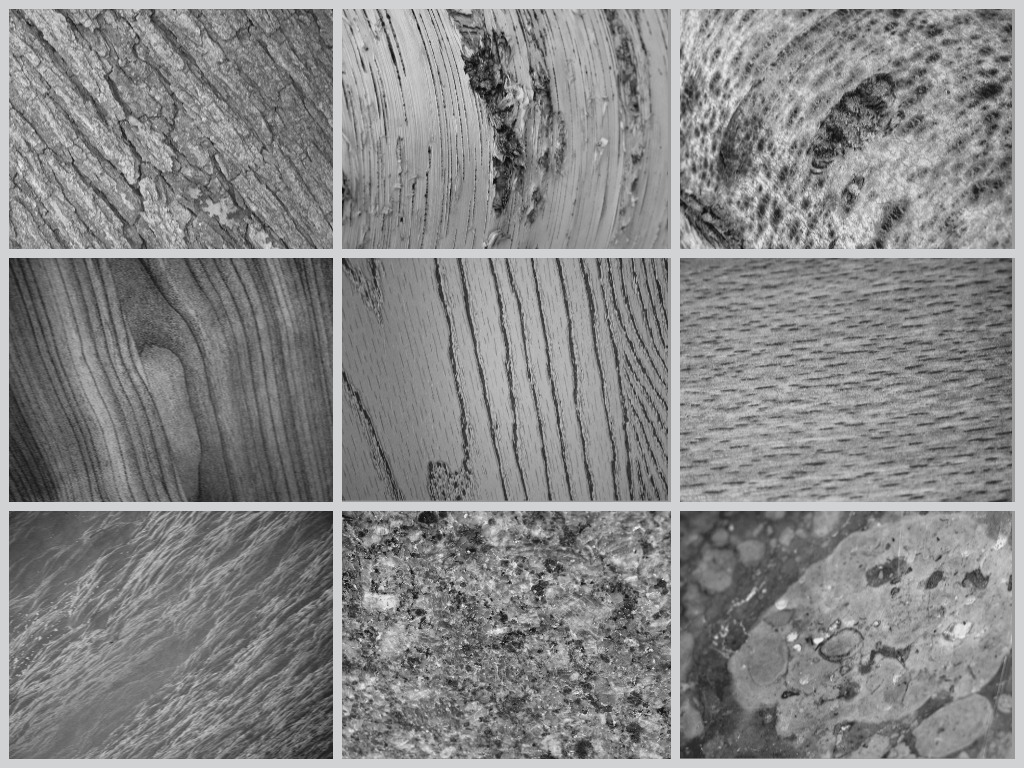}
    \caption{Random samples from UIUC texture dataset.}
    \label{fig:uiucsamples}
\end{figure}
  
\begin{figure*}%[!htb]
    \centering
    \includegraphics[width=0.5\textwidth]{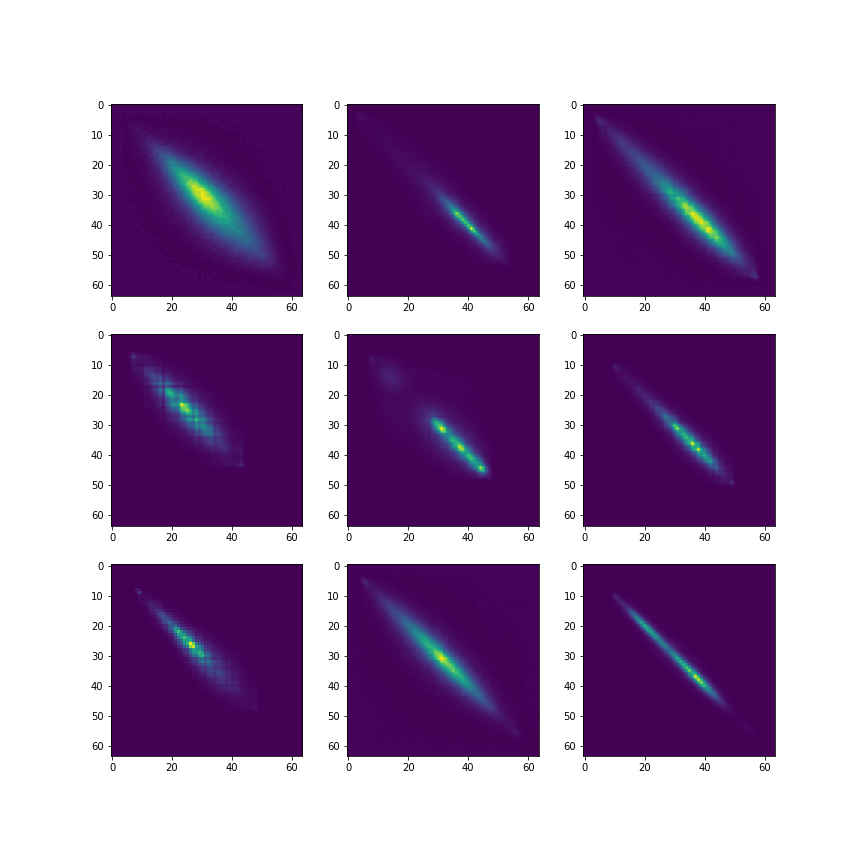}
    \caption{Gray Level Co-occurences Matrix (GLCM) of random samples of UIUC texture dataset. Can be re-normalized into a discrete 2D distribution.}
    \label{fig:glcm}
\end{figure*}

\subsubsection{Mnist C-SVM}

We chose a measure $\u$ with full support as in section~\ref{sec:appmnist}. We select $1300$ examples at random in Mnist train set with from all 10 classes, and we apply the protocol of section~\ref{sec:appuiuc}. The results of the best estimator found with $5$ fold cross-validation are reported on an independent test set of size $1000$ in table~\ref{tab:uiucresult}.  
  
It shows that the RBF kernel in pixel space is already very efficient and is barely outperformed by our approach based on distributions when $\u$ has full support.  
  
\subsection{Hardware and code}

All the experiments were run on the publicly available GPU Colab hardware.   
  
The code can be found on anonymous repository: \url{https://anonymous.4open.science/r/SinkhornMuGP-D37E/README.md}.   

%%%%%%%%%%%%%%%%%%%%%%%%%%%%%%%%%%%%%%%%%%%%%%%%%%%
%%%%%%%%%%%%%%%%%%%%%%%%%%%%%%%%%%%%%%%%%%%%%%%%%%%
%%%%%%%%%%%%%%%%%%%%%%%%%%%%%%%%%%%%%%%%%%%%%%%%%%%

\end{document}